\documentclass{article}
\usepackage{times}
\usepackage[margin=1in]{geometry}

\usepackage[authoryear,round]{natbib}

\usepackage{amsmath,amsfonts,bm}

\def\eqref#1{equation~\ref{#1}}
\def\Eqref#1{Equation~\ref{#1}}

\def\1{\bm{1}}

\DeclareMathAlphabet{\mathsfit}{\encodingdefault}{\sfdefault}{m}{sl}
\SetMathAlphabet{\mathsfit}{bold}{\encodingdefault}{\sfdefault}{bx}{n}

\bibpunct{(}{)}{;}{a}{,}{,}

\usepackage{booktabs}
\usepackage{graphicx}

\usepackage{enumitem}

\usepackage{hyperref}
\usepackage{url}
\usepackage{xcolor}
\usepackage{adjustbox}
\usepackage{algorithm}
\usepackage{subcaption}
\usepackage{algpseudocode}
\usepackage{multirow}

\definecolor{GeoRed}{RGB}{180,45,45}

\definecolor{boxyellow}{HTML}{F7D476}
\definecolor{boxred}{HTML}{DA755C}
\definecolor{boxgreen}{HTML}{489D8B}

\algrenewcommand\algorithmicrequire{\textbf{Input:}}
\algrenewcommand\algorithmicensure{\textbf{Output:}}
\usepackage{amsmath}

\usepackage{amssymb}

\usepackage{amsthm}
\usepackage{float}
\usepackage{enumitem}

\newcommand{\gsnorm}[1]{\left\|#1\right\|_2}
\newcommand{\gsip}[2]{\left\langle #1,#2\right\rangle}
\newcommand{\gsE}{\mathbb E}
\newcommand{\gsR}{\mathbb R}
\newcommand{\gsclip}{\operatorname{clip}_{[0,1]}}
\DeclareMathOperator{\gsspan}{span}
\newtheorem{gstheorem}{Theorem}
\newtheorem{gsproposition}{Proposition}
\newtheorem{gslemma}{Lemma}
\newtheorem{gscorollary}{Corollary}
\theoremstyle{remark}

\floatstyle{ruled}
\newfloat{gsalgorithm}{tbp}{gsa}
\floatname{gsalgorithm}{Algorithm}

\title{GeoShrink: Accelerating Diffusion Transformers\\ with Two Lines of Code}

\author{
Haosen Li$^{1,*}$ \quad
Wenshuo Chen$^{1,*}$ \quad
Shaofeng Liang$^{1}$ \quad
Lei Wang$^{2,3}$ \quad
Bowen Tian$^{1}$ \quad
Yutao Yue$^{1,4,\dagger}$ \\[0.6em]
\small $^{1}$The Hong Kong University of Science and Technology (Guangzhou) \\
\small $^{2}$Griffith University 
\small $^{3}$Data61, CSIRO 
\small $^{4}$Artificial Intelligence Lab, Institute of Deep Perception Technology, JITRI \\[0.4em]
\small
\texttt{jasonhaosenl@hkust-gz.edu.cn} \quad
\texttt{wenshuochen@hkust-gz.edu.cn} \\
\small
\texttt{shawnsfliang@hkust-gz.edu.cn} \quad
\texttt{l.wang4@griffith.edu.au} \\
\small
\texttt{bowentian@hkust-gz.edu.cn} \quad
\texttt{yutaoyue@hkust-gz.edu.cn} \\[0.4em]
\small $^{*}$Equal contribution \qquad
\small $^{\dagger}$Corresponding author
}

\date{}

\begin{document}
\raggedbottom

\maketitle

\begin{figure}[h]
\centering
\includegraphics[width=\linewidth]{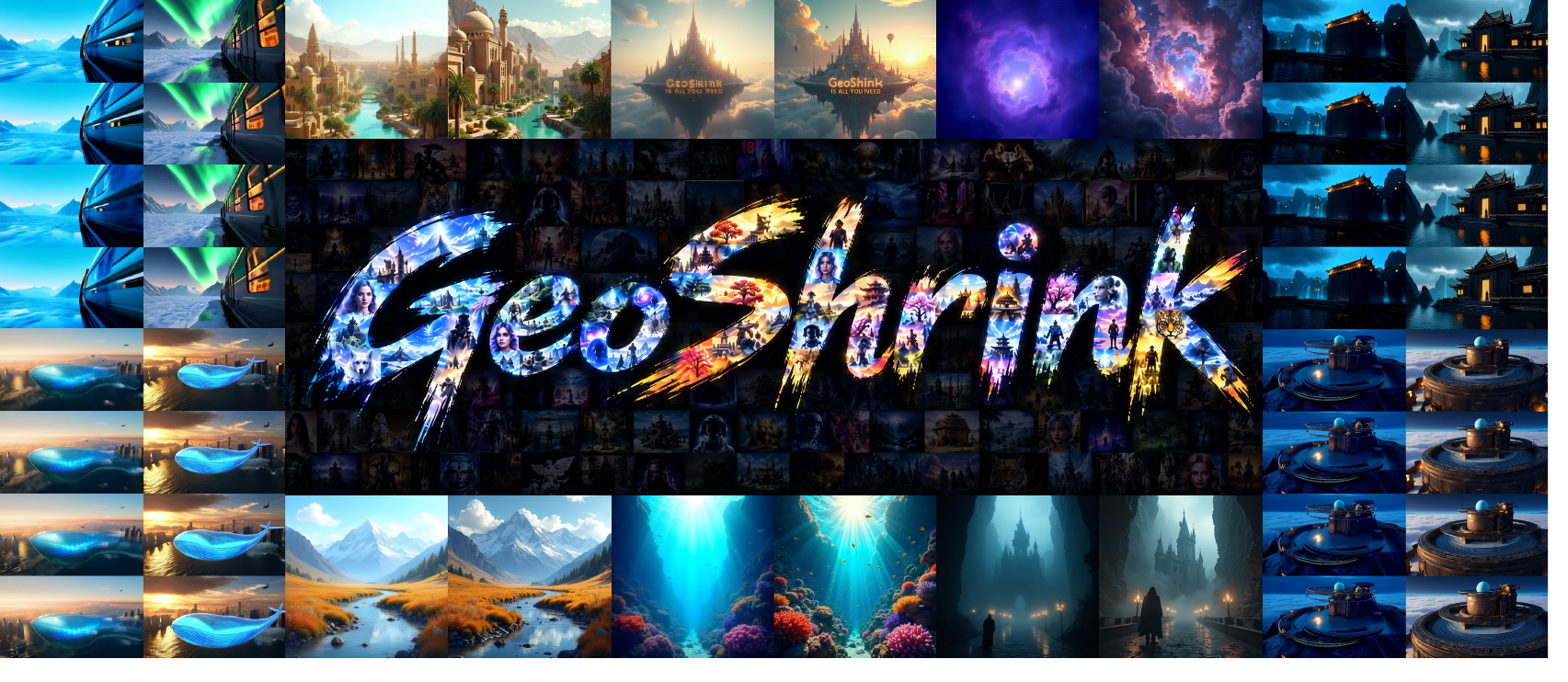}
\caption{\textbf{Qualitative comparison of GeoShrink.}
Direct NFE reduction (left) vs.\ GeoShrink (right) under the same model-evaluation budget. GeoShrink better preserves visual structure and temporal evolution under aggressive acceleration.}
\label{fig:placeholder}
\end{figure}

\begin{abstract}
Diffusion transformers incur substantial inference cost through repeated model
evaluations along a sampling trajectory. We introduce GeoShrink, a training-free
acceleration method that retains the original solver grid while evaluating the
model only at a prescribed set of anchors. At skipped stages, GeoShrink predicts
the solver-facing output by adding a geometrically retained fraction of the
latest observed innovation to the most recent exact output. 
We derive this rule from chordal tangent transport and round-trip line
projection, and establish a geometric anchor-spacing principle that minimizes
the largest adjacent gap expansion under fixed coverage and first span.
The analysis characterizes the geometric closure and propagation of prediction
errors without assuming access to future model outputs. Experiments cover
image, video, motion, and audio generation, together with adapted 3D backends.
At approximately $5\times$ acceleration, GeoShrink improves FLUX PSNR by
$3.10$\,dB over the strongest listed baseline. On HunyuanVideo, it achieves a
reported $4.99\times$ speedup and improves ChronoMagic-Bench-150 PSNR by
$5.44$\,dB over the strongest listed fidelity baseline. Comparisons at fixed
evaluation budgets further show substantial gains on Motion, Audio, Music and 3d generation.
\end{abstract}

\section{Introduction}
\label{sec:introduction}

Diffusion transformers~\citep{peebles2023dit,bao2023uvit} have become a common backbone for generative modeling
across images~\citep{esser2024sd3,blackforestlabs2024flux}, video~\citep{kong2024hunyuanvideo}, human motion~\citep{wen2025hymotion}, audio~\citep{hung2024tangoflux,gong2025acestep}, and 3D content~\citep{xiang2025trellis,zhao2025hunyuan3d2}.
Despite their versatility, inference remains expensive because a large neural
network should be repeatedly evaluated along the sampling trajectory~\citep{sohldickstein2015diffusion,ho2020ddpm,song2019score,song2021sde,rombach2022ldm}. Reducing
these evaluations without modifying or retraining the pretrained generator is
therefore an important practical problem across generative domains.

Existing training-free acceleration methods exploit temporal redundancy through caching or forecasting intermediate features across denoising steps~\citep{ma2024deepcache,selvaraju2024fora,zhao2024pab,liu2025teacache,zou2025toca,lv2025fastercache,liu2025taylorseer,liu2025speca,zhao2026resilphase}.
Feature caching reuses previously computed representations, while feature forecasting extrapolates them to approximate features at subsequent stages.
While effective, these approaches rely on model-specific internal representations, making transfer to new backbones dependent on choices such as layers, tokens, cache locations, reuse intervals, or verification mechanisms~\citep{zou2025toca,liu2025speca}.
We explore a simpler alternative: accelerating diffusion models entirely at the \emph{solver interface}.
The solver only consumes the model output at each sampling stage, rather than its internal representations.
We therefore retain the original solver grid, query the pretrained model only at sparse anchor stages, and predict the missing solver-facing outputs from previous exact observations.
The key question is then: \emph{how much of the latest observed change should be reused?}
As shown in Figure~\ref{fig:motivation}, small coefficients underuse the observed trajectory, whereas full reuse or further extrapolation quickly increases deviation from the full-NFE reference.
This motivates shrinking the latest change rather than predicting a new high-dimensional representation.

\begin{figure}[t]
    \centering
    \includegraphics[width=\linewidth]{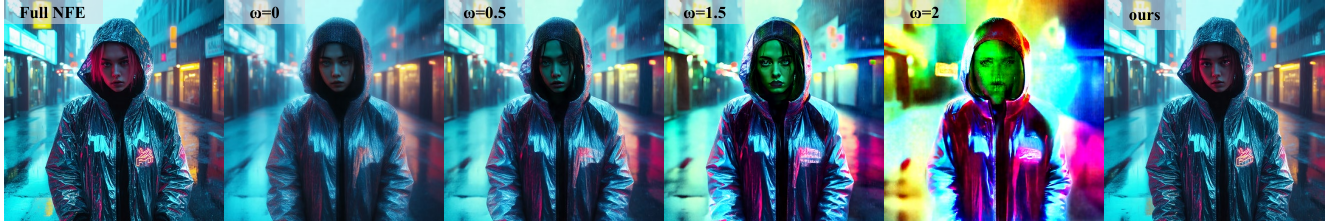}
    \caption{
    \textbf{Motivation for adaptive innovation retention.}
    From left to right, we show the full-NFE reference,
    fixed-coefficient predictions with
    \(\omega\in\{0,0.5,1.5,2\}\), and GeoShrink.
    Small \(\omega\) underuses the latest innovation, while aggressive reuse
    (\(\omega>1\)) progressively introduces color, contrast, and appearance drift.
    GeoShrink instead adapts \(\omega_t\) from observable trajectory geometry
    and more closely preserves the full-NFE reference.
    }
    \label{fig:motivation}
\end{figure}

Motivated by this observation, we introduce \textbf{GeoShrink}, a training-free acceleration method operating entirely at the solver interface. GeoShrink stores only the three most recent exact outputs and predicts along the latest observed change. Its key idea is to adaptively shrink this change using trajectory geometry: stronger turning or longer forecast horizons lead to more conservative reuse, while locally consistent changes over short horizons retain more of the latest direction. This reduces prediction from forecasting a new high-dimensional representation to estimating the reliability of an already observed direction.
The same principle also determines where exact queries are placed. Since future turning is unknown, GeoShrink controls prediction exposure through anchor spacing. Under a fixed query budget, minimizing the worst-case exposure yields a geometric anchor schedule. Thus, GeoShrink unifies \emph{how much} observed information to reuse with \emph{how far} to propagate it, without training, learned predictors, intermediate-feature access, or architecture-specific design.

On standard image and video generation, GeoShrink substantially outperforms
strong training-free acceleration baselines: at approximately \(5\times\)
acceleration, it improves PSNR by \(3.10\) dB on FLUX.1-dev and \(5.44\) dB
on HunyuanVideo over the strongest listed fidelity baselines.
Transferring the same solver-interface principle to human motion yields even
larger gains, reducing key geometric errors by tens to hundreds of times over
direct step reduction at matched NFE budgets.
We further extend GeoShrink to 3D, audio, and music generation, demonstrating
broad cross-modal applicability across diverse architectures and generative domains.
Our contributions are threefold:
\renewcommand{\labelenumi}{\roman{enumi}.}
\begin{enumerate}[leftmargin=0.6cm]
\item
We formulate training-free acceleration as sparse prediction of solver-facing
outputs while preserving the original solver grid, avoiding model-internal
feature access.

\item
We derive a parameter-free geometric retention rule and a fixed-budget
 anchor schedule whose unique continuous solution is a geometric mesh.

\item
We analyze prediction and error propagation, and evaluate GeoShrink across
image, video, motion, audio, music, and 3D generation.

\end{enumerate}

\begin{figure}[t]
\centering
\includegraphics[width=\linewidth]{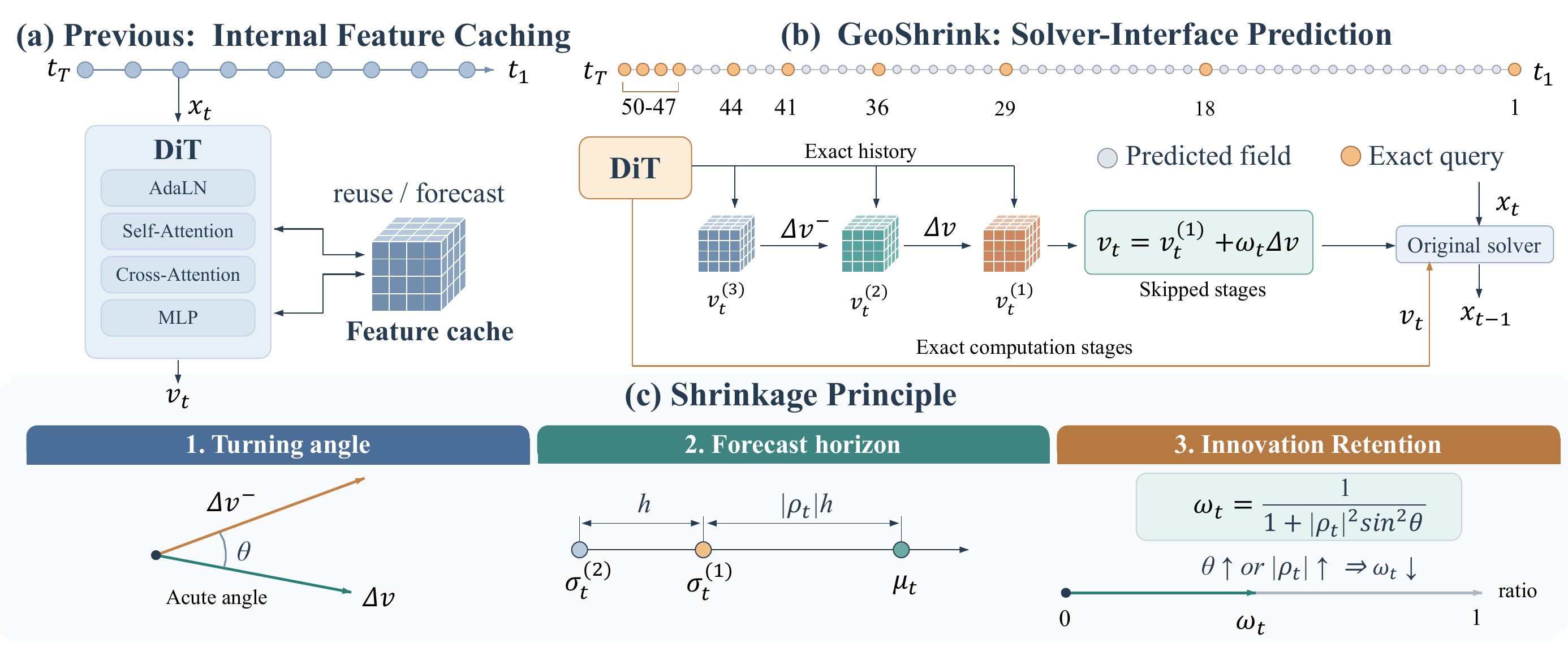}
\caption{\textbf{Overview of GeoShrink.}
(a) Previous methods rely on internal feature caching, reusing or forecasting intermediate DiT representations across denoising steps.
(b) GeoShrink operates at the solver interface, allocating 10 exact model queries over a 50-step grid through geometric anchoring. At skipped stages, it predicts the solver-facing field from the three latest exact outputs while preserving the original solver and its grid.
(c) The retention coefficient $\omega_t=(1+\rho_t^2\sin^2\theta)^{-1}$ combines the acute angle $\theta$ between successive innovations with the normalized midpoint forecast horizon $\rho_t$. Stronger turning or a longer horizon reduces the retained innovation in $v_t=v_t^{(1)}+\omega_t\Delta v$.}
\label{fig:pipeline}
\end{figure}

\section{Methodology}
\label{sec:method}
\label{sec:gs-method}

\subsection{Preliminaries}
\label{sec:preliminaries}

\textbf{Diffusion transformers}~\citep{peebles2023dit} process noisy latent tokens conditioned on time and auxiliary information.
Rectified flow~\citep{liu2023rectifiedflow} learns a velocity field
\(V_\theta(x_\sigma,\sigma,c)\) with target \(\epsilon-x_{\mathrm{data}}\)
along \(x_\sigma=(1-\sigma)x_{\mathrm{data}}+\sigma\epsilon\),
where \(x_{\mathrm{data}}\) is a clean latent,
\(\epsilon\sim\mathcal N(0,I)\), and \(c\) is the condition.
Sampling integrates \(\mathrm{d}x/\mathrm{d}\sigma=V_\theta(x,\sigma,c)\)
from \(\sigma=1\) to \(\sigma=0\).
On the grid \(\{\sigma_t\}_{t=0}^{T}\), a solver uses
\(v_t=V_\theta(x_t,\sigma_t,c)\) to update \(x_t\) to \(x_{t-1}\).

\subsection{Geometric Retention from Exact Anchors}
\label{sec:gs-retention}

At a non-anchor stage \(t\notin\mathcal A\), let
\(t^{(3)}>t^{(2)}>t^{(1)}>t\) index the three latest exact queries, with outputs
\(v_t^{(i)}=V_\theta(x_{t^{(i)}},\sigma_t^{(i)},c)\),
\(i\in\{1,2,3\}\), where \(\sigma_t^{(i)}=\sigma_{t^{(i)}}\). Define the innovations
\(\Delta v=v_t^{(1)}-v_t^{(2)}\) and
\(\Delta v^{-}=v_t^{(2)}-v_t^{(3)}\).
GeoShrink uses the prediction family
\begin{equation}
v_t(\omega)=v_t^{(1)}+\omega \Delta v,
\qquad 0\le \omega\le1,
\label{eq:gs-family}
\end{equation}
Let \(\theta\in[0,\pi/2]\) be the angle between the innovation lines:
\begin{equation}
\sin^2\theta
=
\sin^2\angle(\Delta v,\Delta v^{-})
=
\frac{\|P_{\Delta v^{-}}^{\perp}\Delta v\|_2^2}{\|\Delta v\|_2^2}
\in[0,1].
\label{eq:gs-chi}
\end{equation}
With \(\mu_t=(\sigma_t+\sigma_{t-1})/2\) and \(h=|\sigma_t^{(1)}-\sigma_t^{(2)}|\), define the normalized horizon
\begin{equation}
\rho_t
=
\frac{\mu_t-\sigma_t^{(1)}}
     {\sigma_t^{(1)}-\sigma_t^{(2)}}
=
\frac{\sigma_t+\sigma_{t-1}-2\sigma_t^{(1)}}
     {2(\sigma_t^{(1)}-\sigma_t^{(2)})}.
\label{eq:gs-rho}
\end{equation}
The geometric exposure is \(R_t=\rho_t^2\sin^2\theta\).

\begin{gsproposition}
\label{prop:gs-retention}
Assume \(\Delta v\neq0\) and \(\Delta v^{-}\neq0\). Let
\(u_t=\Delta v/\|\Delta v\|_2\) and \(u_t^{-}=\Delta v^{-}/\|\Delta v^{-}\|_2\), choosing the sign of
\(u_t^{-}\) such that \(\langle u_t,u_t^{-}\rangle\ge0\). Define
\(\xi_t=P_{u_t}^{\perp}u_t^{-}\),
\(\eta_t=-\rho_t\xi_t\), and
\(\widetilde u_t=(u_t+\eta_t)/\|u_t+\eta_t\|_2\).
The line overlap is
\begin{equation}
\omega_t
=
\operatorname{tr}(P_{u_t}P_{\widetilde u_t})
=
\frac{1}{1+\rho_t^2\sin^2\theta}
\label{eq:gs-retention}
\end{equation}
\end{gsproposition}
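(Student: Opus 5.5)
The identity is a direct computation, which I would organize around orthogonality. For unit vectors $a,b$ we have $P_aP_b=aa^\top bb^\top$, hence $\operatorname{tr}(P_aP_b)=\langle a,b\rangle^2$. The line overlap is therefore $\omega_t=\langle u_t,\widetilde u_t\rangle^2=\langle u_t,u_t+\eta_t\rangle^2/\|u_t+\eta_t\|_2^2$, and the proof reduces to computing a numerator and a denominator.

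\emph{Key steps.} First, by construction $\xi_t=P_{u_t}^{\perp}u_t^-=u_t^- - \langle u_t,u_t^-\rangle u_t$ is orthogonal to $u_t$, so $\eta_t=-\rho_t\xi_t$ is also orthogonal to $u_t$. This gives $\langle u_t,u_t+\eta_t\rangle=\|u_t\|_2^2=1$. It also gives, by Pythagoras, $\|u_t+\eta_t\|_2^2=1+\rho_t^2\|\xi_t\|_2^2$. In particular the denominator is at least $1$, so $\widetilde u_t$ is well defined. Second, I would compute $\|\xi_t\|_2^2=1-\langle u_t,u_t^-\rangle^2$ and identify it with $\sin^2\theta$. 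Since $P_{\Delta v^-}^{\perp}$ depends only on the line spanned by $\Delta v^-$, definition \eqref{eq:gs-chi} gives $\sin^2\theta=\|P_{u_t^-}^{\perp}u_t\|_2^2=1-\langle u_t,u_t^-\rangle^2$. This quantity is symmetric in the two unit vectors, so it equals $\|P_{u_t}^\perp u_t^-\|_2^2=\|\xi_t\|_2^2$. Substituting into the formula for $\omega_t$ yields $\omega_t=1/(1+\rho_t^2\sin^2\theta)$.

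\emph{Remaining checks.} The sign convention $\langle u_t,u_t^-\rangle\ge0$ does not affect $\sin^2\theta$, since that depends only on $\langle u_t,u_t^-\rangle^2$. Its role is only to make $\theta$ the acute angle between the lines. Likewise, the sign of $\rho_t$ does not matter because only $\rho_t^2$ appears.

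The only mildly delicate step is the symmetry identification $\|P_{u_t}^\perp u_t^-\|=\|P_{u_t^-}^\perp u_t\|$. Both sides equal $\sqrt{1-c^2}$ with $c=\langle u_t,u_t^-\rangle$, which follows from expanding the projections of unit vectors. I do not expect any genuine obstacle.
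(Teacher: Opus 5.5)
Your proposal is correct and follows essentially the same route as the paper. You reduce the trace of the two rank-one projectors to $\langle u_t,\widetilde u_t\rangle^2$, use $\xi_t\perp u_t$ to get numerator one and denominator $1+\rho_t^2\|\xi_t\|_2^2$, and identify $\|\xi_t\|_2^2=1-\langle u_t,u_t^{-}\rangle^2=\sin^2\theta$; the paper asserts this last identity ``by construction'' in the main proof and verifies it explicitly in Step~1 of its appendix derivation.
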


\begin{proof}
By construction,
\(\langle u_t,\xi_t\rangle=0\) and \(\|\xi_t\|_2^2=\sin^2\theta\). Hence
\[
\|\eta_t\|_2^2=\rho_t^2\sin^2\theta=R_t,
\qquad
\|u_t+\eta_t\|_2^2=1+R_t,
\qquad
\langle u_t,\widetilde u_t\rangle^2=\frac{1}{1+R_t}.
\]
Since \(P_{u_t}\) and \(P_{\widetilde u_t}\) are rank-one projectors,
\(\omega_t=\operatorname{tr}(P_{u_t}P_{\widetilde u_t})
=\langle u_t,\widetilde u_t\rangle^2\), proving the claim.
\end{proof}

Combining Proposition~\ref{prop:gs-retention} and \eqref{eq:gs-family} yields
\begin{equation}
{
v_t
=
v_t^{(1)}
+
\frac{
v_t^{(1)}-v_t^{(2)}
}{
1+
\left(
\frac{
\sigma_t+\sigma_{t-1}-2\sigma_t^{(1)}
}{
2(\sigma_t^{(1)}-\sigma_t^{(2)})
}
\right)^2
\sin^2\!\angle\!\left(
v_t^{(1)}-v_t^{(2)},
v_t^{(2)}-v_t^{(3)}
\right)
}
}
\label{eq:geoshrink_predictor}
\end{equation}

\subsection{Budget-Optimal Geometric Anchoring}
\label{sec:geo_schedule}
\label{sec:gs-schedule}

With future turning unknown, anchor spacing controls exposure.
Let \(h_j,h_{j+1}>0\) denote consecutive exact-query gaps in a forward progress coordinate.

\begin{gslemma}
\label{lem:gs-gap-ratio}
For skipped updates in the cell following \(h_j\),
\begin{equation}
0\le|\rho_t^{(s)}|\le\frac{h_{j+1}}{h_j}.
\label{eq:gs-horizon-bound}
\end{equation}
If \(h_{j+1}/h_j\le\Gamma\), then
\begin{equation}
R_t^{(s)}\le\Gamma^2,
\qquad
\omega_t^{(s)}\ge\frac{1}{1+\Gamma^2}.
\label{eq:gs-retention-certificate}
\end{equation}
\end{gslemma}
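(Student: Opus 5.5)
We unpack the definition of \(\rho_t\) in the forward progress coordinate and then invoke Proposition~\ref{prop:gs-retention}.

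\textbf{Setup.} Fix a cell following the gap \(h_j\). Let the two latest exact anchors before the skipped stages be separated by \(h_j=|\sigma_t^{(1)}-\sigma_t^{(2)}|\). The next exact query lies at forward distance \(h_{j+1}\) beyond \(\sigma_t^{(1)}\). Every skipped stage \(s\) in this cell has \(\sigma_t,\sigma_{t-1}\) between \(\sigma_t^{(1)}\) and the next anchor, inclusive of the endpoints. In the forward coordinate \(\tau\) (e.g.\ \(\tau=-\sigma\), or any monotone reparametrization in which gaps are measured), the midpoint \(\mu_t\) therefore satisfies \(0\le \tau(\mu_t)-\tau(\sigma_t^{(1)})\le h_{j+1}\). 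The denominator \(\tau(\sigma_t^{(1)})-\tau(\sigma_t^{(2)})=h_j>0\).

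\textbf{Horizon bound.} By \eqref{eq:gs-rho}, \(|\rho_t^{(s)}|=|\mu_t-\sigma_t^{(1)}|/h_j\le h_{j+1}/h_j\). This gives \eqref{eq:gs-horizon-bound}; nonnegativity is trivial.

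This is the only delicate step. The ratio in \eqref{eq:gs-rho} is invariant under affine change of the progress coordinate but not under general monotone reparametrization. So I would state explicitly that \(\rho_t\) and the gaps \(h_j\) are measured in the same coordinate, namely \(\sigma\) itself up to sign. I would also confirm that the midpoint of the last update step \([\sigma_t,\sigma_{t-1}]\) never passes the next anchor, since \(\sigma_{t-1}\) is at most the next anchor's value. This containment is what makes the bound hold, and in fact strictly, because a midpoint lies strictly inside its step.

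\textbf{Certificate.} Given \(h_{j+1}/h_j\le\Gamma\), use \(\sin^2\theta\in[0,1]\) from \eqref{eq:gs-chi}. This gives \(R_t^{(s)}=\rho_t^2\sin^2\theta\le\rho_t^2\le\Gamma^2\).

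By Proposition~\ref{prop:gs-retention}, \(\omega_t=1/(1+R_t)\). This is decreasing in \(R_t\), so \(\omega_t^{(s)}\ge 1/(1+\Gamma^2)\), which is \eqref{eq:gs-retention-certificate}.

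The degenerate cases \(\Delta v=0\) or \(\Delta v^-=0\) need a separate check. If \(\Delta v=0\), the predictor in \eqref{eq:gs-family} is independent of \(\omega\). If \(\Delta v^-=0\), take \(\sin^2\theta=0\) by convention, so \(\omega=1\). Under either convention the retention bound holds trivially.
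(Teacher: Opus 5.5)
Your proof is correct and follows the paper's argument. The midpoint of any skipped update lies in the next anchor cell, so its distance past the latest anchor is at most $h_{j+1}$ while the normalizing span is $h_j$; then $\sin^2\theta\le 1$ and the monotonicity of $1/(1+R)$ give the certificate. The only divergence is the coordinate you fix: the paper measures both $\rho^{(s)}$ and $h_j$ in the logical progress (index) coordinate, matching the integer gaps of the geometric schedule, and handles physical $\sigma$ separately via a bounded time-warp factor, whereas measuring everything in $\sigma$ as you suggest proves the same bound for $\sigma$-gaps.
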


\begin{proof}
The distance from the latest anchor to any point in the next cell is at most
\(h_{j+1}\), while the normalization span is \(h_j\), giving
\(|\rho_t^{(s)}|\le h_{j+1}/h_j\). Since \(0\le\sin^2\theta\le1\),
\(R_t^{(s)}=(\rho_t^{(s)})^2\sin^2\theta\le\Gamma^2\), and
\eqref{eq:gs-retention} gives the retention bound.
\end{proof}

With \(B\) queries over \(T\) stages and bootstrap queries
(\(h_1=h_2=1\)), Lemma~\ref{lem:gs-gap-ratio} motivates
\begin{equation}
\min_{\{h_j\}_{j=1}^{B-1}}
\max_{2\le j\le B-2}\frac{h_{j+1}}{h_j}
\quad
\text{s.t.}\quad
h_1=h_2=1,
\qquad
\sum_{j=1}^{B-1}h_j=T-1.
\label{eq:gs-gap-minimax}
\end{equation}

\begin{gstheorem}
\label{thm:gs-geometric-main}
\Eqref{eq:gs-gap-minimax} has the unique continuous solution
\begin{equation}
h_j=r_\star^{\max(j-2,0)},
\qquad
j=1,\ldots,B-1,
\qquad
\sum_{j=1}^{B-1}r_\star^{\max(j-2,0)}=T-1,
\quad r_\star\ge1.
\label{eq:gs-geometric}
\end{equation}
\end{gstheorem}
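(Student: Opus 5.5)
We reduce the minimax problem in \eqref{eq:gs-gap-minimax} to a one-parameter comparison with the geometric mesh. Write $N=B-1$ and let $r=\max_{2\le j\le N-1} h_{j+1}/h_j$ be the objective value of a feasible gap vector. Since $h_1=h_2=1$, a telescoping argument gives, for every $j\ge2$,
\[
h_j=\prod_{i=2}^{j-1}\frac{h_{i+1}}{h_i}\le r^{\,j-2}.
\]
Summing over $j$ yields the feasibility inequality
\[
T-1=\sum_{j=1}^{N}h_j\le 1+\sum_{j=2}^{N}r^{\,j-2}=:S(r).
\]
The function $S$ is continuous and strictly increasing on $[1,\infty)$, with $S(1)=N$ and $S(r)\to\infty$. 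Assuming the budget regime $T-1\ge N$ (at least one stage per gap on average, which is what makes $r_\star\ge1$ meaningful), there is a unique $r_\star\ge1$ with $S(r_\star)=T-1$; this is exactly the normalization in \eqref{eq:gs-geometric}.

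The lower bound on the objective now follows. If $r<r_\star$ (including ratios below $1$), then $S(r)<T-1$, which contradicts the inequality above. So every feasible vector has objective at least $r_\star$. Conversely, the geometric mesh $h_j=r_\star^{\max(j-2,0)}$ satisfies the constraints and has every ratio $h_{j+1}/h_j$ equal to $r_\star$ for $j\ge2$. It therefore attains $r_\star$ and is optimal.

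Uniqueness is the step needing care. Suppose a feasible vector attains objective $r_\star$. Then each $h_j\le r_\star^{j-2}$, and the sum $\sum_j h_j$ equals $S(r_\star)$. Since all the terms satisfy this inequality and their sums coincide, every inequality must be an equality: $h_j=r_\star^{j-2}$ for all $j\ge2$, and $h_1=1$ is fixed. This forces the geometric mesh.

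The main obstacle is the edge cases, which I would handle explicitly:
\begin{itemize}
\item The ratio constraint only starts at $j=2$, so $h_1$ and $h_2$ are pinned by the constraints rather than by the ratios.
\item If $N\le2$, the max is over an empty set and the statement degenerates. I would either assume $B\ge4$ or note that the constraint then forces $T-1=N$.
\item In the degenerate case $T-1=N$ we get $r_\star=1$, and the argument still goes through.
\end{itemize}
"Continuous" refers to relaxing the $h_j$ to positive reals rather than integers. Integer rounding of the anchors is outside this claim.
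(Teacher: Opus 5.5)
Your proposal is correct and follows the paper's argument. You bound each gap by the maximal ratio raised to the power $\max(j-2,0)$, compare the coverage $T-1$ with the strictly increasing closure polynomial to get $r\ge r_\star$, note that the geometric mesh attains $r_\star$, and obtain uniqueness from termwise equality in the summed bound. Your exclusion of ratios below $1$ (valid because your $S$ is strictly increasing on all of $(0,\infty)$, so $S(r)<S(1)=N\le T-1$ there) and your assumptions $B\ge4$ and $T\ge B$ match what the appendix version (Theorem~\ref{thm:gs-geometric} with Corollary~\ref{cor:gs-whole-mesh}) states explicitly and the main-text proof leaves implicit.
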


\begin{proof}
Let \(\Gamma=\max_{2\le j\le B-2} h_{j+1}/h_j\). Then \(h_j\le\Gamma^{\max(j-2,0)}\), so any feasible schedule satisfies

\[
T-1
=
\sum_{j=1}^{B-1}h_j
\le
\sum_{j=1}^{B-1}\Gamma^{\max(j-2,0)}.
\]
The right-hand side is strictly increasing for \(\Gamma\ge1\), so
\(\Gamma\ge r_\star\). The geometric sequence attains equality and is therefore
optimal. If an optimizer satisfied
\(h_j<r_\star^{\max(j-2,0)}\) for any \(j\), its total coverage would be
strictly less than \(T-1\); hence equality holds for every \(j\), proving
uniqueness.
\end{proof}
The continuous anchor stages are
\begin{equation}
t_m
=
T-\sum_{j=1}^{m-1}r_\star^{\max(j-2,0)},
\qquad
m=1,\ldots,B.
\label{eq:gs-continuous-anchors}
\end{equation}
On the solver grid, apply the guarded order-preserving projection
\begin{equation}
{
t_m^{\mathrm{disc}}
=
\Pi_{\mathrm{grid}}
\!\left(
T-\sum_{j=1}^{m-1}r_\star^{\max(j-2,0)}
\right).
}
\label{eq:gs-anchor-schedule}
\end{equation}

\begin{gsalgorithm}[H]
\caption{GeoShrink}
\label{alg:geoshrink}
\small
\begin{algorithmic}[1]

\Require Model \(V_\theta\), total steps T, solver \(\mathcal D_t\), condition \(c\),
noise \(x_T\), noise schedule \(\{\sigma_t\}_{t=0}^{T}\),
and budget \(B\)

\Ensure Final sample \(x_0\)

\For{\(t=T,T-1,\ldots,1\)}

    \If{
    \(
    \displaystyle
    t\in
    \left\{
    \Pi_{\mathrm{grid}}\!\left(
    T-\sum_{j=1}^{m-1}r_\star^{\max(j-2,0)}
    \right)
    \;\middle|\;
    m=1,\ldots,B,\;
    r_\star\ge1,\;
    \sum_{j=1}^{B-1}r_\star^{\max(j-2,0)}=T-1
    \right\}
    \)
    }

        \State
        \[
        v_t\gets V_\theta(x_t,\sigma_t,c)
 ,\quad
        t^{(3)}\gets t^{(2)},\quad
        t^{(2)}\gets t^{(1)},\quad
        t^{(1)}\gets t
        \]

    \Else

        \State
        \[
      {
        v_t\gets
        v_t^{(1)}+
        \frac{
        v_t^{(1)}-v_t^{(2)}
        }{
        1+
        \left(
        \frac{
        \sigma_t+\sigma_{t-1}-2\sigma_t^{(1)}
        }{
        2\bigl(\sigma_t^{(1)}-\sigma_t^{(2)}\bigr)
        }
        \right)^2
        \sin^2\!\left(
        \angle\!\left(
        v_t^{(1)}-v_t^{(2)},
        v_t^{(2)}-v_t^{(3)}
        \right)
        \right)
        }
        }
        \]

    \EndIf

    \State
    \[
    x_{t-1}\gets
    \mathcal D_t(x_t,v_t;\sigma_t,\sigma_{t-1})
    \]

\EndFor

\State \Return \(x_0\)

\end{algorithmic}
\end{gsalgorithm}

\section{Empirical Analysis}
\label{sec:experiments}

\subsection{Experimental Settings}
\label{sec:experimental-settings}

\paragraph{Models, benchmarks, and baselines.}
We evaluate SD3.5 Medium~\citep{stabilityai2024sd35,esser2024sd3} and
FLUX.1-dev~\citep{blackforestlabs2024flux} on Pick-a-Pic~\citep{kirstain2023pickapic},
and HunyuanVideo~\citep{kong2024hunyuanvideo} on
ChronoMagic-Bench-150~\citep{yuan2024chronomagic} and
VBench~\citep{huang2024vbench}, with additional audiovisual evaluation on
MiniMax-H3~\citep{minimax2026h3}.
Cross-domain experiments use HY-Motion~\citep{wen2025hymotion},
TRELLIS-text-base~\citep{xiang2025trellis} on
T$^3$Bench-300~\citep{he2023t3bench},
Hunyuan3D-2~\citep{zhao2025hunyuan3d2} on GSO-100~\citep{downs2022gso},
TangoFlux~\citep{hung2024tangoflux} on AudioCaps-100~\citep{kim2019audiocaps},
and ACE-Step~\citep{gong2025acestep} on MusicCaps-100~\citep{agostinelli2023musiclm}.
Image and video baselines include FORA~\citep{selvaraju2024fora},
TeaCache~\citep{liu2025teacache}, ToCa~\citep{zou2025toca},
TaylorSeer~\citep{liu2025taylorseer}, SpeCa~\citep{liu2025speca}, and
ResilPhase~\citep{zhao2026resilphase}.

\paragraph{Evaluation metrics.}
We report PSNR~\citep{wang2009mse}, SSIM~\citep{wang2004ssim}, and
LPIPS~\citep{zhang2018lpips} for visual fidelity, together with
ImageReward~\citep{xu2023imagereward}, MTScore~\citep{yuan2024chronomagic},
VBench Score~\citep{huang2024vbench}, CLIP~\citep{radford2021clip}, and
FVD~\citep{unterthiner2019fvd}.
Motion evaluation uses joint, root, velocity, acceleration, and retrieval
metrics~\citep{petrovich2023tmr}; 3D evaluation uses Chamfer
distance~\citep{fan2017pointset}, Hausdorff distance~\citep{huttenlocher1993hausdorff},
IoU, and normal error.
Audio evaluation uses log-spectral distance~\citep{gray1976distance},
multi-resolution STFT distance~\citep{yamamoto2020parallelwavegan},
SI-SDR~\citep{leroux2019sisdr}, and CLAP~\citep{wu2023clap}, with mel-spectrogram
cosine similarity and log-mel L1 for audiovisual evaluation.

\begin{figure}[h]
    \centering
    \includegraphics[width=\linewidth]{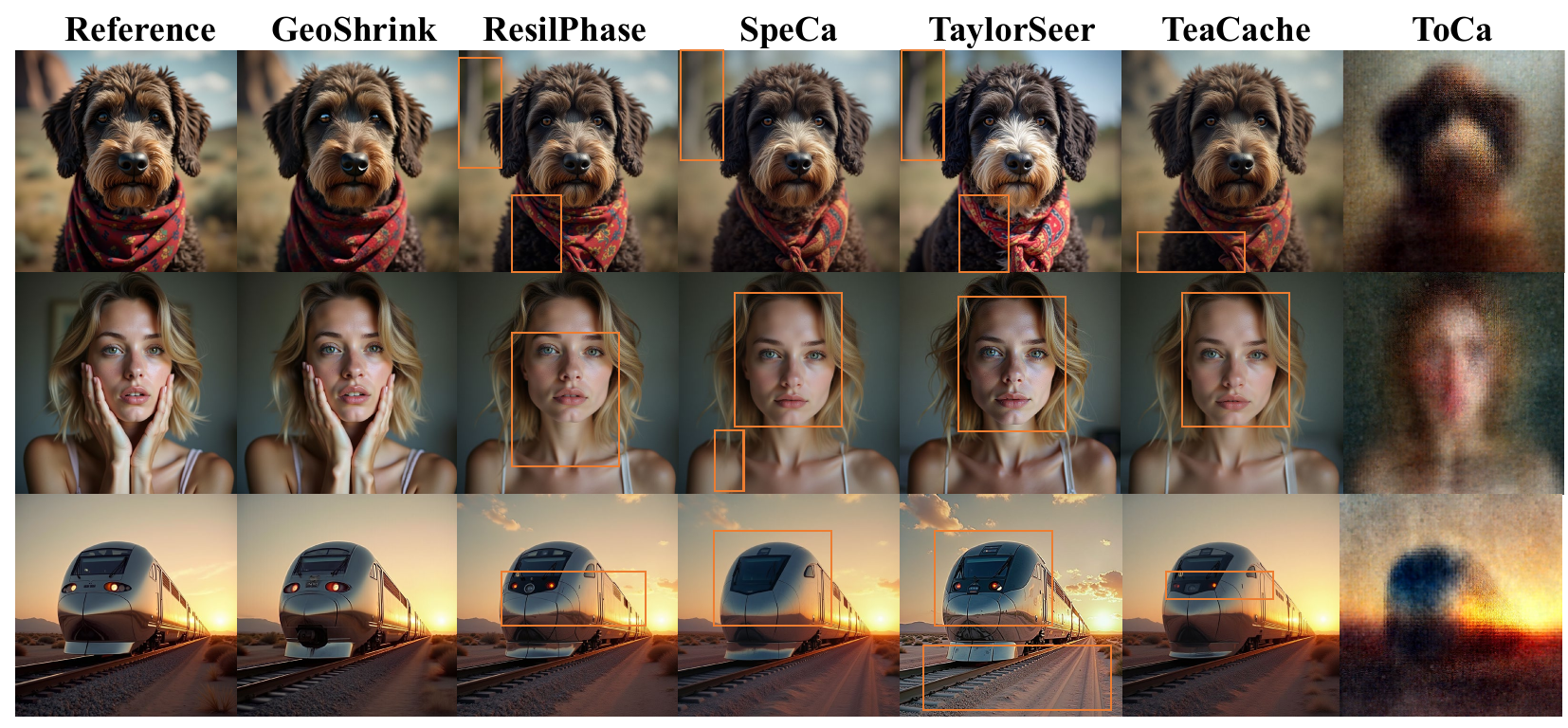}
    \caption{\textbf{Qualitative comparison on text-to-image generation.} GeoShrink better preserves the full-sampling reference than competing acceleration methods at comparable speedups.}
    \label{fig:qual-image}
\end{figure}

\begin{figure}[t]
    \centering
    \includegraphics[width=\linewidth]{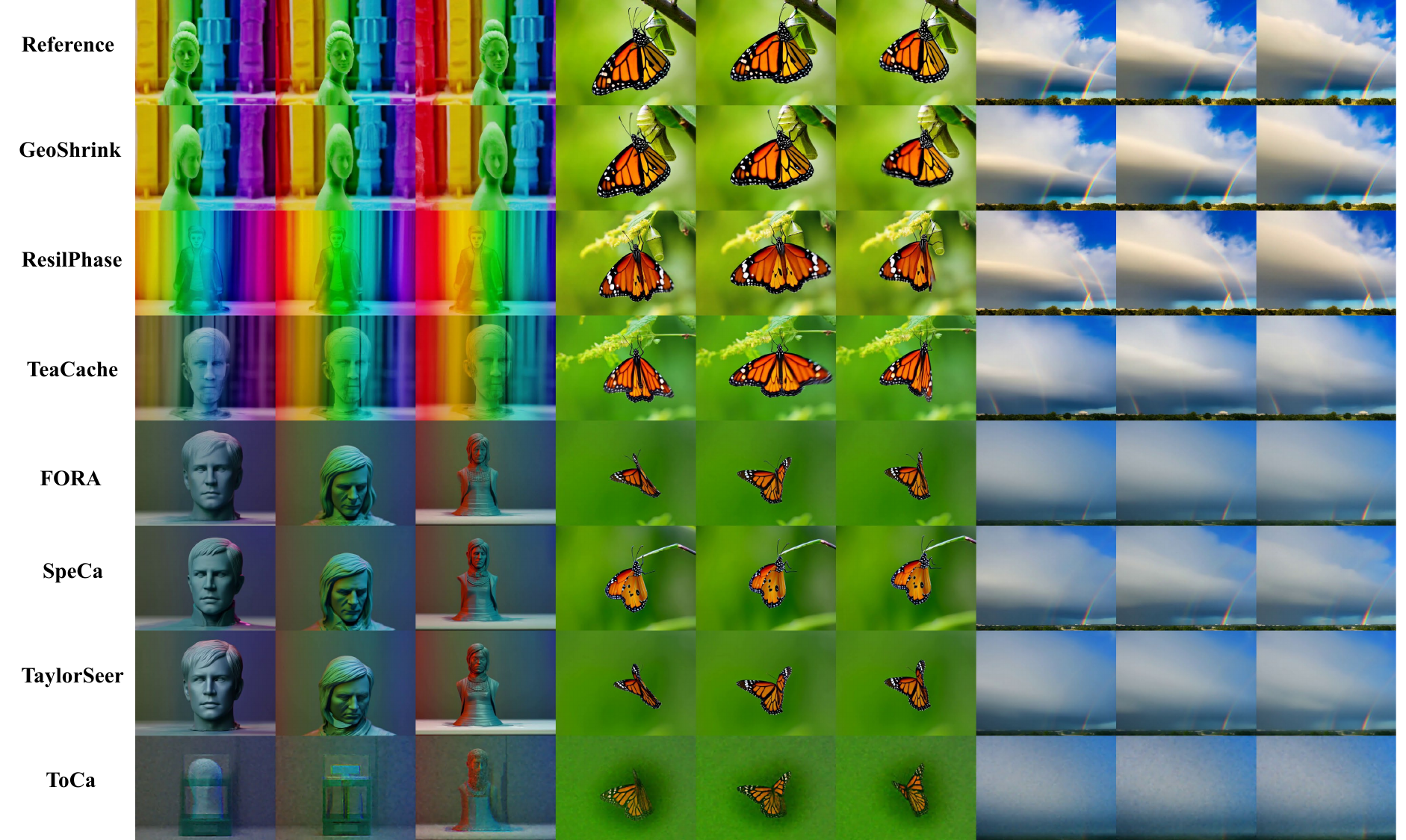}
    \caption{\textbf{Qualitative comparison on text-to-video generation.} GeoShrink preserves spatial details and temporal consistency more faithfully under aggressive acceleration.}
    \label{fig:qual-video}
\end{figure}

\begin{table*}[hbtp]
\centering
\caption{
Quantitative comparison of acceleration methods across image and video generation models.
For the task-specific metric, we report ImageReward on Pick-a-Pic,
MTScore on ChronoMagic-Bench-150, and VBench Score on VBench.
}
\label{tab:main_comparison}
\small
\setlength{\tabcolsep}{5pt}

\resizebox{\textwidth}{!}{
\begin{tabular}{lllrccccc}
\toprule

\textbf{Model}
&
\textbf{Benchmark}
&
\textbf{Method}
&
\textbf{Setting}
&
\textbf{Speedup}
&
\textbf{PSNR} $\uparrow$
&
\textbf{SSIM} $\uparrow$
&
\textbf{LPIPS} $\downarrow$
&
\textbf{Task Score} $\uparrow$
\\

\midrule


\multirow{6}{*}{SD3.5 Medium}
&
\multirow{6}{*}{Pick-a-Pic}
&
SpeCa
& $\tau_0=7,\beta=0.3$
& 2.40$\times$
& 20.759
& 0.7743
& 0.2704
& 0.7873
\\

&
&
TaylorSeer
& $\mathcal{N}=3,O=1$
& 2.49$\times$
& 18.493
& 0.7454
& 0.2918
& 0.7815
\\

&
&
TeaCache
& $l=0.4$
& 2.66$\times$
& 20.872
& 0.7616
& 0.2942
& 0.7671
\\

&
&
ToCa
& $\mathcal{N}=10,R=90\%$
& 2.09$\times$
& 20.431
& 0.7057
& 0.3111
& 0.7662
\\

&
&
ResilPhase
& $\mathcal{N}=3,O=1$
& 2.76$\times$
& 18.483
& 0.7451
& 0.2921
& 0.7813
\\

&
&
\textbf{GeoShrink}
& \textbf{$B=10$}
& \textbf{2.80$\times$}
& \textbf{24.744}
& \textbf{0.8450}
& \textbf{0.2078}
& \textbf{0.7924}
\\

\midrule


\multirow{6}{*}{FLUX.1-dev}
&
\multirow{6}{*}{Pick-a-Pic}
&
SpeCa
& $\tau_0=12,\beta=0.3$
& 4.78$\times$
& 17.4881
& 0.6768
& 0.4481
& 0.9031
\\

&
&
TaylorSeer
& $\mathcal{N}=11,O=2$
& 4.65$\times$
& 15.7641
& 0.6439
& 0.4398
& 0.9181
\\

&
&
TeaCache
& $l=1.4$
& 4.82$\times$
& 18.0757
& 0.6850
& 0.4123
& 0.9207
\\

&
&
ToCa
& $\mathcal{N}=3,R=93\%$
& 4.86$\times$
& 14.8189
& 0.4608
& 0.7701
& 0.7310
\\

&
&
ResilPhase
& $\mathcal{N}=6,O=1$
& 4.97$\times$
& 18.5211
& 0.7083
& 0.3499
& 0.9219
\\

&
&
\textbf{GeoShrink}
& \textbf{$B=10$}
& \textbf{4.98$\times$}
& \textbf{21.6227}
& \textbf{0.7903}
& \textbf{0.2552}
& \textbf{0.9468}
\\

\midrule


\multirow{7}{*}{HunyuanVideo}
&
\multirow{7}{*}{ChronoMagic-150}
&
FORA
& $\mathcal{N}=6$
& 4.87$\times$
& 14.181
& 0.5812
& 0.4924
& 41.4804
\\

&
&
TeaCache
& $l=0.4$
& 4.62$\times$
& 17.620
& 0.6706
& 0.3492
& 41.1334
\\

&
&
ToCa
& $\mathcal{N}=10,R=90\%$
& 4.86$\times$
& 13.335
& 0.5166
& 0.5840
& 42.3887
\\

&
&
SpeCa
& $\tau_0=1.5,\beta=0.2$
& 4.65$\times$
& 14.428
& 0.5867
& 0.4720
& 40.9493
\\

&
&
TaylorSeer
& $\mathcal{N}=7,O=1$
& 4.63$\times$
& 14.303
& 0.5872
& 0.4850
& 41.1623
\\

&
&
ResilPhase
& $\mathcal{N}=6,O=1$
& 4.98$\times$
& 19.881
& 0.6854
& 0.2878
& 42.7770
\\

&
&
\textbf{GeoShrink}
& $B=10$
& \textbf{4.99$\times$}
& \textbf{25.317}
& \textbf{0.8202}
& \textbf{0.1699}
& \textbf{42.9833}
\\

\midrule


\multirow{7}{*}{HunyuanVideo}
&
\multirow{7}{*}{VBench}
&
FORA
& $\mathcal{N}=6$
& 4.87$\times$
& 15.871
& 0.6073
& 0.4584
& 78.70
\\

&
&
TeaCache
& $l=0.4$
& 4.62$\times$
& 17.923
& 0.6547
& 0.3760
& 79.77
\\

&
&
ToCa
& $\mathcal{N}=10,R=90\%$
& 4.86$\times$
& 17.581
& 0.5857
& 0.4501
& 76.37
\\

&
&
SpeCa
& $\tau_0=1.5,\beta=0.2$
& 4.65$\times$
& 16.461
& 0.5883
& 0.4219
& 79.59
\\

&
&
TaylorSeer
& $\mathcal{N}=7,O=1$
& 4.63$\times$
& 15.520
& 0.5641
& 0.4581
& 79.07
\\

&
&
ResilPhase
& $\mathcal{N}=6,O=1$
& 4.98$\times$
& 18.920
& 0.6709
& 0.3341
& 79.78
\\

&
&
\textbf{GeoShrink}
& $B=10$
& \textbf{4.99$\times$}
& \textbf{22.8045}
& \textbf{0.7925}
& \textbf{0.1930}
& \textbf{80.18}
\\

\bottomrule
\end{tabular}
}

\end{table*}

\subsection{Image and Video Generation}
\label{sec:exp-image-video}

As shown in Table~\ref{tab:main_comparison}, GeoShrink improves reference fidelity at comparable speedups.
On SD3.5 Medium and FLUX.1-dev, it improves PSNR over the strongest listed
fidelity baselines by 3.87 dB and 3.10 dB, respectively, while also improving
SSIM and LPIPS. On HunyuanVideo, it reaches \(4.99\times\) speedup and improves
ChronoMagic-Bench-150 PSNR by 5.44 dB.
Figures~\ref{fig:qual-image} and~\ref{fig:qual-video} further show improved visual fidelity,
while Table~\ref{tab:minimax_h3} confirms clear gains over direct step reduction
on MiniMax-H3 at matched NFE.
On MiniMax-H3, improvements extend to both visual and audio metrics across all tested NFE budgets.
These results support the applicability of solver-interface prediction to joint audiovisual generation.

\begin{table}[H]
\centering
\caption{
Quantitative comparison between \textbf{GeoShrink} and direct step reduction on MiniMax-H3.
}
\label{tab:minimax_h3}
\small
\setlength{\tabcolsep}{4.5pt}

\resizebox{\textwidth}{!}{
\begin{tabular}{c l c c c c c c c c}
\toprule

\multirow{2}{*}{\textbf{NFE}}
&
\multirow{2}{*}{\textbf{Method}}
&
\multicolumn{5}{c}{\textbf{Visual}}
&
\multicolumn{3}{c}{\textbf{Audio}}
\\

\cmidrule(lr){3-7}
\cmidrule(lr){8-10}

&
&
\textbf{CLIP} $\uparrow$
&
\textbf{FVD} $\downarrow$
&
\textbf{PSNR} $\uparrow$
&
\textbf{SSIM} $\uparrow$
&
\textbf{LPIPS} $\downarrow$
&
\textbf{Mel Cos} $\uparrow$
&
\textbf{LogMel L1} $\downarrow$
&
\textbf{SI-SDR} $\uparrow$
\\

\midrule

\multirow{2}{*}{10}
& Direct
& 33.7164
& 166.0071
& 17.4910
& 0.6190
& 0.3632
& 0.7730
& 1.1660
& -0.2793
\\[-1pt]

& \textbf{GeoShrink}
& \textbf{33.8711}
& \textbf{164.5004}
& \textbf{24.2122}
& \textbf{0.7883}
& \textbf{0.1678}
& \textbf{0.8528}
& \textbf{0.6998}
& \textbf{1.5133}
\\

\midrule

\multirow{2}{*}{15}
& Direct
& 33.6598
& 166.8756
& 19.2673
& 0.6732
& 0.2914
& 0.8537
& 0.8240
& 3.2408
\\[-1pt]

& \textbf{GeoShrink}
& \textbf{33.7885}
& \textbf{166.1741}
& \textbf{29.1320}
& \textbf{0.8810}
& \textbf{0.0818}
& \textbf{0.9595}
& \textbf{0.3645}
& \textbf{7.6285}
\\

\midrule

\multirow{2}{*}{20}
& Direct
& 33.6435
& 166.9863
& 21.2167
& 0.7250
& 0.2317
& 0.9011
& 0.5850
& 6.3888
\\[-1pt]

& \textbf{GeoShrink}
& \textbf{33.6454}
& \textbf{164.4024}
& \textbf{33.1394}
& \textbf{0.9276}
& \textbf{0.0451}
& \textbf{0.9878}
& \textbf{0.2134}
& \textbf{12.585}
\\

\bottomrule
\end{tabular}
}

\end{table}

\begin{table}[H]
\centering
\caption{
Comparison between \textbf{GeoShrink} and direct step reduction on the HY-Motion.}
\label{tab:hymotion_direct_vs_geo}
\small
\setlength{\tabcolsep}{4.5pt}

\resizebox{\textwidth}{!}{
\begin{tabular}{c l c c c c c c c}
\toprule
\textbf{NFE}
& \textbf{Method}
& \textbf{RA-MPJPE} $\downarrow$
& \textbf{Global MPJPE} $\downarrow$
& \textbf{RootErr} $\downarrow$
& \textbf{VelErr} $\downarrow$
& \textbf{AccErr} $\downarrow$
& \textbf{uTMR R@3} $\uparrow$
& \textbf{Matching} $\downarrow$
\\
\midrule

\multirow{2}{*}{10}
& Direct
& 120.618
& 501.158
& 465.109
& 7.325
& 2.484
& 0.6260
& 49.8151
\\
& \textbf{GeoShrink}
& \textbf{5.303}
& \textbf{26.009}
& \textbf{25.024}
& \textbf{0.694}
& \textbf{0.301}
& \textbf{0.7280}
& \textbf{46.8595}
\\
\midrule

\multirow{2}{*}{20}
& Direct
& 81.194
& 360.800
& 335.515
& 5.822
& 1.948
& 0.7105
& 47.9181
\\
& \textbf{GeoShrink}
& \textbf{1.252}
& \textbf{6.004}
& \textbf{5.786}
& \textbf{0.199}
& \textbf{0.102}
& \textbf{0.7320}
& \textbf{46.7981}
\\
\midrule

\multirow{2}{*}{30}
& Direct
& 51.350
& 194.661
& 177.461
& 4.035
& 1.363
& 0.7175
& 47.2669
\\
& \textbf{GeoShrink}
& \textbf{0.394}
& \textbf{2.337}
& \textbf{2.279}
& \textbf{0.093}
& \textbf{0.063}
& \textbf{0.7300}
& \textbf{46.8036}
\\
\midrule

\multirow{2}{*}{40}
& Direct
& 25.686
& 86.252
& 77.388
& 2.219
& 0.766
& 0.7155
& 46.9870
\\
& \textbf{GeoShrink}
& \textbf{0.100}
& \textbf{0.799}
& \textbf{0.789}
& \textbf{0.046}
& \textbf{0.047}
& \textbf{0.7300}
& \textbf{46.8060}
\\

\bottomrule
\end{tabular}
}
\end{table}

\subsection{Human Motion Generation}
\label{sec:exp-motion}

As shown in Table~\ref{tab:hymotion_direct_vs_geo} and
Figure~\ref{fig:qual-motion}, the benefit of retaining the dense solver trajectory
is particularly pronounced for human motion.
Across all tested NFE budgets, GeoShrink sharply reduces geometric and kinematic
errors over direct step reduction while improving retrieval-based semantic metrics.
The large gap at matched NFE suggests that motion trajectories are especially
sensitive to removing intermediate solver updates.

\begin{figure}[t]
    \centering

    \includegraphics[width=\linewidth]{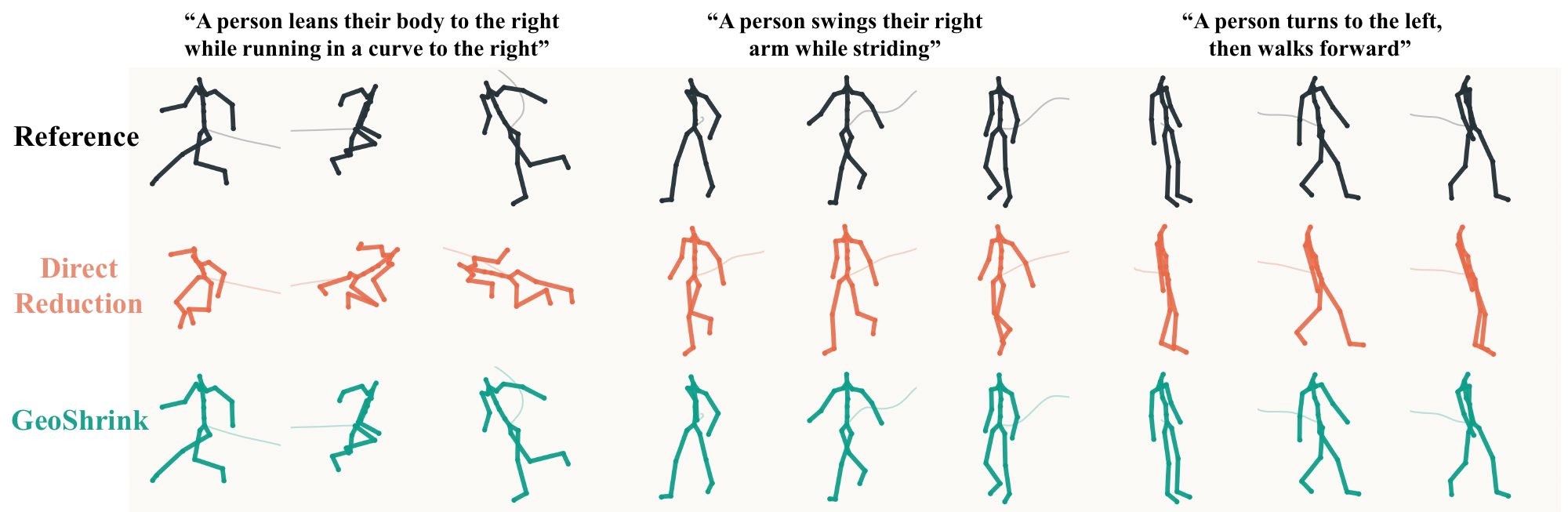}
    \caption{\textbf{Motion qualitative comparison.} GeoShrink better preserves full-sampling trajectories.}
    \label{fig:qual-motion}

    \vspace{0mm}

    \includegraphics[width=\linewidth]{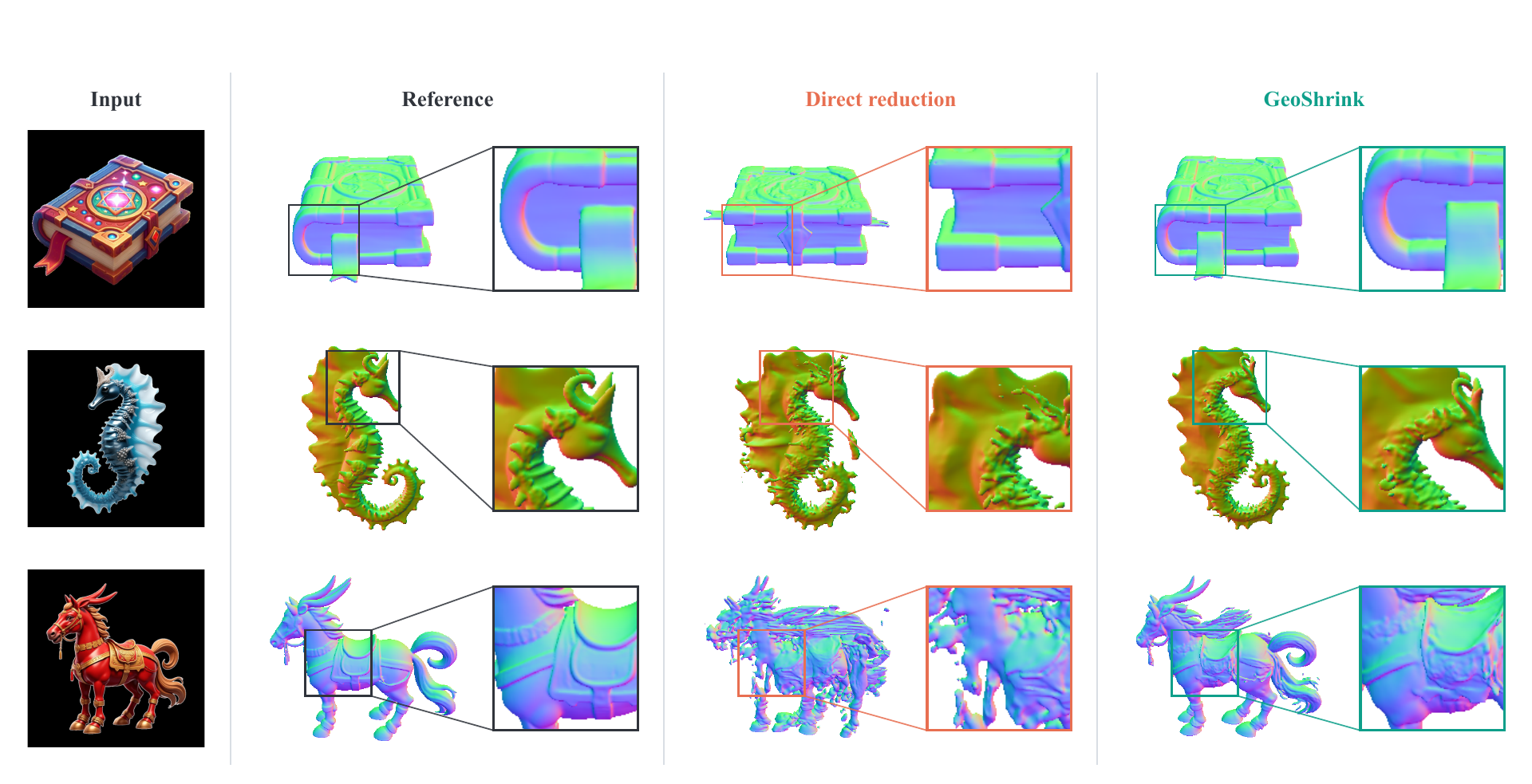}
    \caption{\textbf{3D qualitative comparison.} Left to right: input, reference (\textcolor{black}{black}), direct reduction (\textcolor{boxred}{red}), and GeoShrink (\textcolor{boxgreen}{green}). GeoShrink better preserves geometry.}
    \label{fig:qual-3d}
    \vspace{-2mm}
\end{figure}

\begin{figure}[t]
    \centering
    \includegraphics[width=\linewidth]{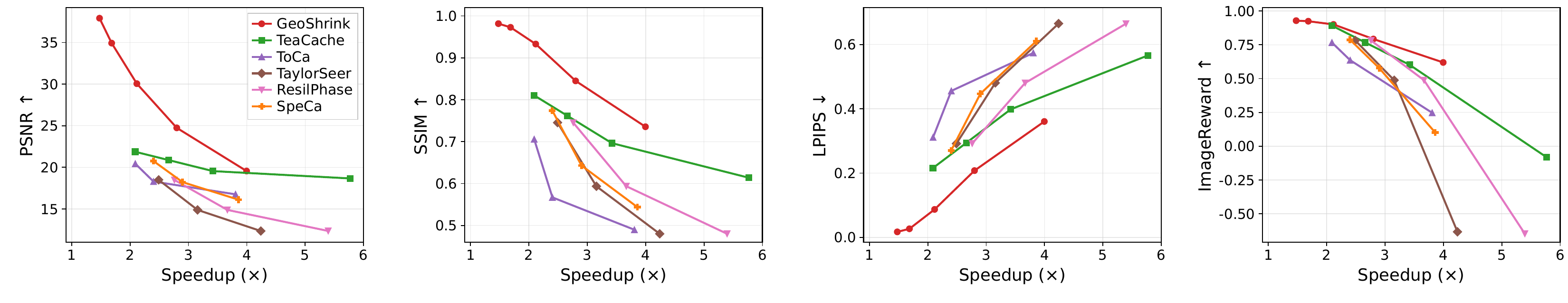}
    \caption{\textbf{Speed-quality trade-off.} GeoShrink yields a better Pareto frontier on SD3.5 Medium.}
    \label{fig:pareto}
\end{figure}

\begin{table}[h]
\centering
\caption{Comparison between \textbf{GeoShrink} implementations and direct step reduction for 3D generation. We evaluate TRELLIS-text-base on T$^3$Bench-300 and Hunyuan3D-2 on GSO-100.}
\label{tab:3d_direct_vs_geo}
\small
\setlength{\tabcolsep}{4pt}
\begin{adjustbox}{max width=\textwidth}
\begin{tabular}{c l rrrr @ {\hspace{14pt}} c l rrrr}
\toprule
\multicolumn{6}{c}{\textbf{TRELLIS-text-base / T$^3$Bench-300}}
& \multicolumn{6}{c}{\textbf{Hunyuan3D-2 / GSO-100}} \\
\cmidrule(lr){1-6}
\cmidrule(lr){7-12}
\textbf{NFE} & \textbf{Method}
& \textbf{CLIP} $\uparrow$
& \textbf{Center CD}$^2$ $\downarrow$ & \textbf{LPIPS} $\downarrow$
& \textbf{Hausdorff} $\downarrow$
& \textbf{NFE} & \textbf{Method}
& \textbf{CLIP} $\uparrow$
& \textbf{Surface CD}$^2$ $\downarrow$ & \textbf{IoU} $\uparrow$
& \textbf{Normal} ($^\circ$) $\downarrow$ \\
\midrule
\multirow{2}{*}{10} & Direct & $26.6521$ & $2.007\times 10^{-6}$ & $0.009377$ & $0.013083$ & \multirow{2}{*}{10} & Direct & $74.3904$ & $5.411\times 10^{-3}$ & $0.89199$ & $15.3606$ \\
 & \textbf{GeoShrink} & $\mathbf{26.7623}$ & $\mathbf{1.005\times 10^{-6}}$ & $\mathbf{0.004232}$ & $\mathbf{0.010800}$ &  & \textbf{GeoShrink} & $\mathbf{74.4081}$ & $\mathbf{5.270\times 10^{-3}}$ & $\mathbf{0.89790}$ & $\mathbf{15.0119}$ \\
\midrule
\multirow{2}{*}{12} & Direct & $\mathbf{26.7772}$ & $1.529\times 10^{-6}$ & $0.006723$ & $0.012179$ & \multirow{2}{*}{15} & Direct & $75.1844$ & $\mathbf{6.383\times 10^{-4}}$ & $\mathbf{0.95457}$ & $\mathbf{9.4850}$ \\
 & \textbf{GeoShrink} & $26.7711$ & $\mathbf{4.042\times 10^{-7}}$ & $\mathbf{0.001528}$ & $\mathbf{0.008603}$ &  & \textbf{GeoShrink} & $\mathbf{75.3055}$ & $6.704\times 10^{-4}$ & $0.95319$ & $9.5296$ \\
\midrule
\multirow{2}{*}{15} & Direct & $26.6790$ & $1.176\times 10^{-6}$ & $0.004662$ & $0.011269$ & \multirow{2}{*}{20} & Direct & $75.2853$ & $\mathbf{4.383\times 10^{-4}}$ & $0.96723$ & $7.6636$ \\
 & \textbf{GeoShrink} & $\mathbf{26.7704}$ & $\mathbf{3.845\times 10^{-7}}$ & $\mathbf{0.001484}$ & $\mathbf{0.008399}$ &  & \textbf{GeoShrink} & $\mathbf{75.3562}$ & $4.505\times 10^{-4}$ & $\mathbf{0.96818}$ & $\mathbf{7.4691}$ \\
\midrule
\multirow{2}{*}{18} & Direct & $26.7687$ & $5.040\times 10^{-7}$ & $0.001729$ & $0.008954$ & \multirow{2}{*}{25} & Direct & $75.4099$ & $3.750\times 10^{-4}$ & $0.97453$ & $6.5459$ \\
 & \textbf{GeoShrink} & $\mathbf{26.7743}$ & $\mathbf{1.710\times 10^{-7}}$ & $\mathbf{0.000603}$ & $\mathbf{0.006766}$ &  & \textbf{GeoShrink} & $\mathbf{75.4237}$ & $\mathbf{1.796\times 10^{-4}}$ & $\mathbf{0.97631}$ & $\mathbf{6.1900}$ \\
\midrule
\multirow{2}{*}{20} & Direct & $\mathbf{26.8150}$ & $4.745\times 10^{-7}$ & $0.001613$ & $0.008815$ & \multirow{2}{*}{30} & Direct & $75.4142$ & $1.380\times 10^{-4}$ & $0.98147$ & $5.6879$ \\
 & \textbf{GeoShrink} & $26.7753$ & $\mathbf{1.593\times 10^{-7}}$ & $\mathbf{0.000584}$ & $\mathbf{0.006574}$ &  & \textbf{GeoShrink} & $\mathbf{75.4569}$ & $\mathbf{8.061\times 10^{-5}}$ & $\mathbf{0.98253}$ & $\mathbf{5.5189}$ \\
\bottomrule
\end{tabular}
\end{adjustbox}

\par\smallskip
\begin{minipage}{\textwidth}
\end{minipage}
\vspace{-3mm}
\end{table}

\subsection{3D Generation}
\label{sec:exp-3d}
Table~\ref{tab:3d_direct_vs_geo} and Figure~\ref{fig:qual-3d} show that
GeoShrink generally improves reference fidelity over direct step reduction,
especially on TRELLIS, where Center Chamfer Distance, LPIPS, and Hausdorff
distance are consistently reduced.
Results on Hunyuan3D-2 are more heterogeneous, with clear gains at several
budgets and small reversals on individual geometry metrics.

\begin{table}[H]
\centering
\caption{Comparison between \textbf{GeoShrink} and direct step reduction for audio generation at matched NFE budgets. We evaluate TangoFlux on AudioCaps-100 and ACE-Step on MusicCaps-100.}
\label{tab:audio_direct_vs_geo}
\small

\begin{adjustbox}{max width=\textwidth}
\begin{tabular}{c l cccc cccc}
\toprule

\multirow{2}{*}{\textbf{NFE}}
&
\multirow{2}{*}{\textbf{Method}}
&
\multicolumn{4}{c}{\textbf{TangoFlux / AudioCaps-100}}
&
\multicolumn{4}{c}{\textbf{ACE-Step / MusicCaps-100}}
\\

\cmidrule(lr){3-6}
\cmidrule(lr){7-10}

&
&
\textbf{LSD} $\downarrow$
&
\textbf{MS-STFT} $\downarrow$
&
\textbf{SI-SDR} $\uparrow$
&
\textbf{CLAP} $\uparrow$
&
\textbf{LSD} $\downarrow$
&
\textbf{MS-STFT} $\downarrow$
&
\textbf{SI-SDR} $\uparrow$
&
\textbf{CLAP} $\uparrow$
\\

\midrule

\multirow{2}{*}{15}
& Direct
& 9.594 & 0.458 & 3.83 & 0.6857
& 8.524 & 0.454 & \textbf{3.66} & 0.4906
\\
& \textbf{GeoShrink}
& \textbf{6.233} & \textbf{0.228} & \textbf{11.06} & \textbf{0.6886}
& \textbf{8.156} & \textbf{0.453} & 3.58 & \textbf{0.4947}
\\

\midrule

\multirow{2}{*}{20}
& Direct
& 8.006 & 0.332 & 6.92 & 0.6902
& 7.989 & 0.395 & 5.45 & 0.4902
\\
& \textbf{GeoShrink}
& \textbf{4.852} & \textbf{0.090} & \textbf{18.52} & \textbf{0.6914}
& \textbf{7.596} & \textbf{0.340} & \textbf{6.87} & \textbf{0.4955}
\\

\midrule

\multirow{2}{*}{25}
& Direct
& 6.744 & 0.245 & 10.10 & 0.6922
& 7.513 & \textbf{0.309} & \textbf{8.25} & 0.4887
\\
& \textbf{GeoShrink}
& \textbf{4.295} & \textbf{0.045} & \textbf{24.39} & \textbf{0.6928}
& \textbf{7.486} & 0.322 & 7.47 & \textbf{0.4923}
\\

\midrule

\multirow{2}{*}{30}
& Direct
& 6.184 & 0.184 & 13.48 & 0.6913
& 7.333 & 0.285 & 9.18 & 0.4895
\\
& \textbf{GeoShrink}
& \textbf{3.922} & \textbf{0.016} & \textbf{33.55} & \textbf{0.6931}
& \textbf{6.945} & \textbf{0.229} & \textbf{11.30} & \textbf{0.4922}
\\

\bottomrule
\end{tabular}
\end{adjustbox}

\end{table}

\subsection{Audio and Music Generation}
\label{sec:exp-audio}
As shown in Table~\ref{tab:audio_direct_vs_geo}, GeoShrink transfers effectively
to audio generation.
On TangoFlux, it consistently improves spectral and waveform fidelity across
all tested NFE budgets, with especially large gains in SI-SDR.
Results on ACE-Step are more mixed: GeoShrink generally improves LSD and CLAP
but does not dominate every spectral metric at every budget, indicating that
music generation can be more metric-dependent.

\subsection{Ablation Study}

{
\setlength{\intextsep}{3pt}   

\begin{table}[h]
\centering

\begin{minipage}[t]{0.44\textwidth}
\vspace{0pt}

\textbf{Component ablation.}
Table~\ref{tab:ablation} shows that geometric adaptation and shrink
interpolation improve quality on SD3.5 Medium at $4\times$ acceleration.
Both components provide individual gains, while their combination
performs best across all metrics, confirming their complementary effects.

\end{minipage}
\hfill
\begin{minipage}[t]{0.54\textwidth}
\vspace{0pt}
\centering

\caption{Ablation study of geometric adaptation and innovation shrinkage using SD3.5 Medium.}
\label{tab:ablation}

\vspace{-1mm}

\small
\setlength{\tabcolsep}{5pt}
\renewcommand{\arraystretch}{0.95}

\resizebox{\linewidth}{!}{
\begin{tabular}{lcccc}
\toprule
\textbf{Method}
&
\textbf{PSNR} $\uparrow$
&
\textbf{SSIM} $\uparrow$
&
\textbf{LPIPS} $\downarrow$
&
\textbf{ImageReward} $\uparrow$
\\
\midrule

Fixed-Linear
& 9.882
& 0.404832
& 0.702306
& -1.2556
\\

Geo-Linear
& 15.457
& 0.621274
& 0.414758
& 0.278629
\\

Fixed-Shrink
& 19.556
& 0.692327
& 0.388319
& 0.559383
\\

\textbf{GeoShrink}
& \textbf{19.851}
& \textbf{0.735519}
& \textbf{0.360551}
& \textbf{0.619176}
\\

\bottomrule
\end{tabular}
}

\end{minipage}

\end{table}
}

\vspace{-1mm}
\noindent
\textbf{Budget robustness.}
As shown in Figure~\ref{fig:pareto}, GeoShrink maintains a stronger
speed-quality trade-off across the tested acceleration range.
At comparable speedups, it achieves higher PSNR and SSIM and lower LPIPS,
while maintaining competitive ImageReward scores.
Its fidelity advantage remains evident as acceleration increases, suggesting
greater robustness to reduced computation budgets.
These results show that the gains extend across multiple metrics and operating
points, allowing GeoShrink to accommodate different inference budgets while
preserving output quality.

\section{Conclusion}

We introduced \textbf{GeoShrink}, a training-free acceleration method that preserves the original solver grid while sparsifying expensive model evaluations. GeoShrink predicts skipped solver-facing outputs through geometry-aware innovation shrinkage and places exact queries using a geometric anchor schedule. Experiments across image, video, motion, audio, music, and 3D generation show that GeoShrink consistently improves fidelity under aggressive acceleration and yields a stronger speed-quality trade-off. Overall, GeoShrink demonstrates that preserving the solver grid can be more effective than directly shortening it.

\bibliographystyle{plainnat}
\bibliography{GeoShrink_references}

\appendix

\section{Related Works}
\label{app}

Diffusion acceleration has been explored through improved numerical solvers,
temporal reuse, computation reduction, and low-overhead test-time refinement.

\textbf{Fast numerical sampling.}
Early work reduces the number of denoising evaluations by designing more accurate
discretizations of diffusion ODEs or SDEs.
DDIM~\citep{song2021ddim} enables non-Markovian sampling with fewer denoising
steps, while PNDM~\citep{liu2022pndm}, DEIS~\citep{zhang2023deis},
DPM-Solver~\citep{lu2022dpmsolver}, DPM-Solver++~\citep{lu2022dpmsolverpp},
UniPC~\citep{zhao2023unipc}, DPM-Solver-v3~\citep{zheng2023dpmsolverv3},
and SA-Solver~\citep{xue2023sasolver} improve low-budget sampling through
higher-order or structure-aware integration.
EDM~\citep{karras2022edm} jointly studies parameterization and discretization,
while Align Your Steps~\citep{sabour2024ays} directly optimizes the sampling
schedule for a fixed model and solver.

\textbf{Temporal reuse and feature caching.}
A large body of work exploits temporal redundancy in intermediate diffusion
representations.
DeepCache~\citep{ma2024deepcache}, Faster Diffusion~\citep{li2023fasterdiffusion},
and FRDiff~\citep{so2023frdiff} reuse slowly varying U-Net features across
timesteps.
For diffusion transformers, Learning-to-Cache~\citep{ma2024l2c},
HarmoniCa~\citep{huang2024harmonica}, $\Delta$-DiT~\citep{chen2024deltadit},
FORA~\citep{selvaraju2024fora}, Pyramid Attention Broadcast~\citep{zhao2024pab},
FasterCache~\citep{lv2025fastercache}, TeaCache~\citep{liu2025teacache},
ToCa~\citep{zou2025toca}, AdaCache~\citep{kahatapitiya2024adacache},
SmoothCache~\citep{liu2024smoothcache}, DuCa~\citep{zou2024duca},
EasyCache~\citep{zhou2025easycache}, and MagCache~\citep{ma2025magcache}
develop increasingly adaptive policies for deciding what and when to reuse.
More recently, TaylorSeer~\citep{liu2025taylorseer},
SpeCa~\citep{liu2025speca}, ResilPhase~\citep{zhao2026resilphase},
and LESA~\citep{cai2026lesa} go beyond direct caching by forecasting future
intermediate representations from historical features.

\textbf{Efficient test-time refinement and incremental adaptation.}
A complementary line of training-free work improves generation through
lightweight corrections applied during inference, progressively refining the
trajectory, guidance signal, initialization, or reusable experience without
updating model parameters.
POLARIS~\citep{chen2025polaris} performs step-wise adaptation of the guidance
scale to suppress accumulated inversion error with negligible additional
overhead, while Guided Path Sampling~\citep{li2026gps} stabilizes iterative
denoising--inversion refinement through path-aware guidance.
$Z^2$-Sampling~\citep{li2026z2sampling} further eliminates the extra evaluations
of explicit zigzag refinement by algebraically collapsing the trajectory and
reusing a temporal semantic surrogate, recovering the standard sampling cost.
At a finer spatial level, Delta Score~\citep{li2026deltascore} adapts guidance
locally according to the conditional--unconditional prediction difference,
providing region-dependent semantic correction without retraining.
Oracle Noise~\citep{li2026oraclenoise} instead performs fast constrained
optimization of the initial latent to incrementally improve semantic alignment
before denoising.
Beyond a single sampling trajectory, MemoGen~\citep{chen2026memogen} extends
test-time refinement across generation episodes by accumulating successful
strategies and failure feedback as reusable experience memory.
Together, these methods demonstrate that substantial improvements can be
obtained through lightweight incremental interventions at inference time.
GeoShrink follows the same training-free efficiency principle but targets a
different source of redundancy: rather than refining guidance, inversion,
initialization, or external memory, it sparsifies expensive model evaluations
while preserving the original solver grid and predicts only the missing
solver-facing outputs.

\section{Implementation details}
All experiments are conducted on NVIDIA A100 GPUs, with each experiment repeated using five random seeds. We report sampling latency, excluding encoding and decoding. Speedups are computed relative to the corresponding full-NFE reference using the same timing protocol. For each example and seed, all methods and the corresponding reference use identical inputs, conditioning, and initial noise, with all other inference settings held fixed except for the acceleration configuration.

We evaluate SD3.5 Medium and FLUX.1-dev on 100 prompts from the Pick-a-Pic test set, using 28-step and 50-step reference grids, respectively. Both models use an image resolution of \(1024\times1024\), with guidance scales of 4.5 and 3.5, respectively. HunyuanVideo is evaluated on ChronoMagic-Bench-150 and the full VBench prompt set using a 50-step reference grid. It produces 65-frame videos at \(640\times480\) resolution and 24 fps, with a guidance scale of 5.0.

For MiniMax-H3, we evaluate all 150 prompts from ChronoMagic-Bench-150 using a 30-step reference grid and evaluation budgets of \(B\in\{10,15,20\}\). Videos contain 124 frames at \(960\times544\) resolution and 24 fps. Inference uses BF16 precision and automatic offloading with a 12 GB memory reserve.

For HY-Motion-1.0, we select 100 examples through stratified sampling from the official 1,983-example SSAE set. We generate five-second motions with the Euler solver, a guidance scale of 5.0, and a 50-step reference grid, evaluating budgets of \(B\in\{10,20,30,40\}\). Prompt rewriting and duration estimation are disabled.

For TRELLIS-text-base, we evaluate all 300 examples from T³Bench, comprising 100 examples per category. The sparse-structure stage remains fixed at 25 steps, while the SLAT stage uses a 25-step reference grid. We set the guidance scale to 7.5, the guidance interval to \([0.5,0.95]\), and the time-rescaling parameter to 3.0. For Hunyuan3D-2, we evaluate shape generation on 100 objects selected from GSO by a fixed hash-based rule. Inference uses FP16 precision, a 50-step reference grid, a guidance scale of 5.0, a latent shape of \(3072\times64\), and a mesh extraction resolution of 256.

For TangoFlux and ACE-Step, we select 100 examples at fixed index intervals from the AudioCaps test split and MusicCaps, respectively. Both models generate ten-second audio using a 40-step reference grid. TangoFlux uses a sampling rate of 44.1 kHz and a guidance scale of 4.5. ACE-Step uses 48 kHz, BF16 precision, a guidance scale of 7.0, and a shift parameter of 3.0, with empty lyrics in text-to-music mode.

\section{Proofs and Additional Analysis}
\label{app:gs-proofs}

In the appendix, \(q\) indexes the latest exact query, \(a_q\) is its solver stage, and \(n\) is the current skipped stage. Write \(v_n^{(i)}=V_\theta(\widetilde x_{a_{q-i+1}},\sigma_n^{(i)},c)\) and \(\sigma_n^{(i)}=\sigma_{a_{q-i+1}}\) for \(i\in\{1,2,3\}\). The anchor-indexed innovations are \(\Delta v_q=v_n^{(1)}-v_n^{(2)}\) and \(\Delta v_q^{-}=v_n^{(2)}-v_n^{(3)}\); \(\theta_q\in[0,\pi/2]\) is the acute angle between their lines, and \(\omega_{q,n}\) denotes the corresponding retention coefficient.

\subsection{Directional Prediction Loss and the Benefit of Shrinkage}
\label{app:gs-loss}

\begin{gsproposition}[Directional prediction loss]
\label{prop:gs-loss}
For a skipped stage, define the unavailable target innovation at the same
current state by
\[
  I_{q,n}=V_\theta(\widetilde x_n,\sigma_n,c)-v_n^{(1)}.
\]
Suppose \(A_q=\gsnorm{{\Delta v_q}}^2>0\), and define
\begin{equation}
  c_{q,n}=\frac{\gsip{{\Delta v_q}}{I_{q,n}}}{A_q}.
  \label{eq:gs-oracle-coefficient}
\end{equation}
For every real \(\omega\),
\begin{equation}
  {
  \gsnorm{\omega{\Delta v_q}-I_{q,n}}^2
  =\gsnorm{P_{{\Delta v_q}}^{\perp}I_{q,n}}^2
   +A_q(\omega-c_{q,n})^2.}
  \label{eq:gs-loss-decomposition}
\end{equation}
Consequently, the optimal coefficient in \eqref{eq:gs-family} is
\(\omega^\star_{q,n}=\gsclip(c_{q,n})\). For \(0<\omega<1\), the prediction strictly
improves on both \(\omega=0\) and \(\omega=1\) if and only if
\begin{equation}
  {\frac{\omega}{2}<c_{q,n}<\frac{1+\omega}{2}.}
  \label{eq:gs-gain-region}
\end{equation}
\end{gsproposition}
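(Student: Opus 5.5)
The plan is an orthogonal (Pythagorean) decomposition followed by elementary one-dimensional quadratic arguments. Throughout, write \(P=P_{\Delta v_q}=\Delta v_q\Delta v_q^{\top}/A_q\) and \(P^{\perp}=I-P\).

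\textbf{Step 1 (the decomposition).} Decompose the target as \(I_{q,n}=PI_{q,n}+P^{\perp}I_{q,n}\), where \(PI_{q,n}=c_{q,n}\Delta v_q\) by the definition \eqref{eq:gs-oracle-coefficient}. Then
\[
\omega\Delta v_q-I_{q,n}=(\omega-c_{q,n})\Delta v_q-P^{\perp}I_{q,n}.
\]
The two summands are orthogonal, so the Pythagorean identity gives
\[
\gsnorm{\omega\Delta v_q-I_{q,n}}^2=A_q(\omega-c_{q,n})^2+\gsnorm{P^{\perp}I_{q,n}}^2,
\]
which is \eqref{eq:gs-loss-decomposition}. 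This holds for every real \(\omega\).

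\textbf{Step 2 (optimal coefficient).} The prediction error of \eqref{eq:gs-family} is
\[
v_n^{(1)}+\omega\Delta v_q-V_\theta(\widetilde x_n,\sigma_n,c)=\omega\Delta v_q-I_{q,n}.
\]
So minimizing over \(\omega\in[0,1]\) amounts to minimizing the strictly convex quadratic \(A_q(\omega-c_{q,n})^2\), because \(A_q>0\) and the other term does not depend on \(\omega\). The minimizer over an interval is the projection of the unconstrained minimizer \(c_{q,n}\) onto it, namely \(\gsclip(c_{q,n})\). Uniqueness follows from strict convexity.

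\textbf{Step 3 (gain region).} By the decomposition, \(\omega\) strictly beats \(\omega'\) if and only if \((\omega-c)^2<(\omega'-c)^2\), with \(c=c_{q,n}\). Factor this as
\[
(\omega-\omega')\bigl(\omega+\omega'-2c\bigr)<0.
\]
\begin{itemize}
\item For \(\omega'=0\) and \(\omega>0\), the condition becomes \(\omega<2c\), i.e.\ \(c>\omega/2\).
\item For \(\omega'=1\) and \(\omega<1\), the sign of \(\omega-1\) is negative, so the condition becomes \(\omega+1-2c>0\), i.e.\ \(c<(1+\omega)/2\).
\end{itemize}
Both must hold, which gives \eqref{eq:gs-gain-region}. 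The interval is nonempty for every \(\omega\in(0,1)\).

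\textbf{Anticipated obstacle.} None of these steps is deep. The only care needed is bookkeeping of signs when dividing by \(\omega-\omega'\) in Step 3, and noting that the residual term \(\gsnorm{P^{\perp}I_{q,n}}^2\) cancels in every comparison.
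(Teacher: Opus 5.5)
Your proposal is correct and follows the paper's proof essentially step for step. Both use the orthogonal split \(I_{q,n}=c_{q,n}\Delta v_q+P_{\Delta v_q}^{\perp}I_{q,n}\) with the Pythagorean identity, clipping as the Euclidean projection onto \([0,1]\), and the gain region from \(A\omega(\omega-2c)\) and \(A(\omega-1)(\omega+1-2c)\); these are exactly your difference-of-squares factorization with \(\omega'=0\) and \(\omega'=1\).
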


\begin{proof}[Proof of Proposition~\ref{prop:gs-loss}]
Fix an anchor-target pair and abbreviate
\({\Delta v}={\Delta v_q}\), \(I=I_{q,n}\), \(A=\gsnorm{{\Delta v}}^2>0\), and
\(c=\gsip{{\Delta v}}{I}/A\). Orthogonal decomposition gives
\[
  I=c{\Delta v}+P_{\Delta v}^\perp I.
\]
Therefore
\[
  \omega{\Delta v}-I=(\omega-c){\Delta v}-P_{\Delta v}^\perp I.
\]
The two terms are orthogonal, so the Pythagorean theorem gives
\eqref{eq:gs-loss-decomposition}. The only term depending on \(\omega\)
is \(A(\omega-c)^2\). Its minimizer over \([0,1]\) is the Euclidean projection
\(\gsclip(c)\).

Let \(\mathcal L(\omega)=\gsnorm{\omega{\Delta v}-I}^2\). Direct subtraction yields
\begin{align}
  \mathcal L(\omega)-\mathcal L(0)
    &=A\,\omega(\omega-2c),\label{eq:gs-gain-vs-hold}\\
  \mathcal L(\omega)-\mathcal L(1)
    &=A\,(\omega-1)(\omega+1-2c).\label{eq:gs-gain-vs-full}
\end{align}
For \(0<\omega<1\), the first difference is negative precisely when
\(c>\omega/2\), and the second is negative precisely when
\(c<(1+\omega)/2\). Their intersection is \eqref{eq:gs-gain-region}.
\end{proof}

If \({\Delta v_q}=0\), all members of \eqref{eq:gs-family} coincide
with \(v_n^{(1)}\). No coefficient needs to be identified, and
\eqref{eq:gs-oracle-coefficient} is left undefined. The implemented
predictor automatically returns \(v_n^{(1)}\) in this case.

\subsection{Conditional Calibration of a Scalar Exposure}
\label{app:gs-calibration}

The scalar \(R\) need not contain all useful information in the query
history. The following statement specifies the optimum only within the
class of retention rules that are functions of \(R\).

\begin{gsproposition}[Best retention rule based on one scalar]
\label{prop:gs-calibration}
Consider a distribution of anchor-target pairs, with
\(\gsE\gsnorm{{\Delta v}}^2<\infty\) and
\(\gsE\gsnorm{I}^2<\infty\). Let \(R\) be any measurable scalar statistic.
Where
\(a(r)=\gsE[\gsnorm{{\Delta v}}^2\mid R=r]>0\), the minimizer over
measurable \(f:\gsR\to[0,1]\) of
\(\gsE\gsnorm{f(R){\Delta v}-I}^2\) is, almost surely,
\begin{equation}
  {
  f^\star(r)=\gsclip\left(
  \frac{\gsE[\gsip{{\Delta v}}{I}\mid R=r]}
       {\gsE[\gsnorm{{\Delta v}}^2\mid R=r]}
  \right).}
  \label{eq:gs-conditional-optimum}
\end{equation}
On a conditional set with \(a(r)=0\), every coefficient has the same loss.
\end{gsproposition}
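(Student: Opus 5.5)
The plan is to reduce the functional minimization to a pointwise scalar problem by conditioning on \(R\), and then reuse the quadratic decomposition from Proposition~\ref{prop:gs-loss}. Fix a measurable \(f:\gsR\to[0,1]\). Since \(f(R)\) is \(\sigma(R)\)-measurable and bounded, and \(\gsE\gsnorm{{\Delta v}}^2,\gsE\gsnorm{I}^2<\infty\), the integrand \(\gsnorm{f(R){\Delta v}-I}^2\) is integrable. By Cauchy--Schwarz the three conditional moments
\[
a(r)=\gsE[\gsnorm{{\Delta v}}^2\mid R=r],\qquad
b(r)=\gsE[\gsip{{\Delta v}}{I}\mid R=r],\qquad
d(r)=\gsE[\gsnorm{I}^2\mid R=r]
\]
are well defined and finite for almost every \(r\). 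We work with regular versions of these conditional expectations.

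Expanding the square and pulling the \(\sigma(R)\)-measurable factor \(f(R)\) out of the conditional expectation gives
\[
\gsE\bigl[\gsnorm{f(R){\Delta v}-I}^2\mid R\bigr]=a(R)f(R)^2-2b(R)f(R)+d(R)=:\phi_R(f(R)).
\]
On \(\{a(r)>0\}\) we complete the square, exactly as in \eqref{eq:gs-loss-decomposition}:
\[
\phi_r(w)=a(r)\Bigl(w-\tfrac{b(r)}{a(r)}\Bigr)^2+d(r)-\tfrac{b(r)^2}{a(r)}.
\]
This is a strictly convex parabola in \(w\). Its unique minimizer over \([0,1]\) is the Euclidean projection \(\gsclip(b(r)/a(r))=f^\star(r)\).

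On \(\{a(r)=0\}\) we have \({\Delta v}=0\) almost surely conditionally. Hence \(b(r)=0\) by Cauchy--Schwarz, and \(\phi_r(w)=d(r)\) for every \(w\). This is the final claim that every coefficient has the same loss.

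To conclude, apply the tower property: \(\gsE\gsnorm{f(R){\Delta v}-I}^2=\gsE[\phi_R(f(R))]\ge\gsE[\phi_R(f^\star(R))]\) for every admissible \(f\), so \(f^\star\) is a minimizer. We must also check that \(f^\star\) is measurable. This holds because \(b/a\) is measurable on \(\{a>0\}\) and clipping is continuous; on \(\{a=0\}\) we set \(f^\star\) arbitrarily, say to \(0\).

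For almost-sure uniqueness, suppose \(f\) is also optimal. Then \(\gsE[\phi_R(f(R))-\phi_R(f^\star(R))]=0\) with a nonnegative integrand, so the integrand vanishes almost surely. Strict convexity where \(a>0\) then forces \(f(R)=f^\star(R)\) almost surely on \(\{a(R)>0\}\).

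The main obstacle is the measure-theoretic bookkeeping rather than the algebra. We must justify the pull-out and tower steps with integrability, which follows from boundedness of \(f\) and the second-moment assumptions. We must choose regular conditional versions so that \(a,b,d\) are measurable functions of \(r\). We must also handle the null and degenerate sets, so that "almost surely" is stated with respect to the law of \(R\) restricted to \(\{a>0\}\).
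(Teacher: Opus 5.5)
Your proposal is correct and follows the paper's proof essentially step for step: the conditional quadratic $a(r)\omega^2-2b(r)\omega+d(r)$, completing the square and clipping to $[0,1]$ where $a(r)>0$, the tower property to pass from conditional to global risk, and the observation that $a(r)=0$ forces $\Delta v=0$ conditionally almost surely. Your explicit pull-out of the $\sigma(R)$-measurable factor $f(R)$, the measurability check on $f^\star$, and the strict-convexity argument for almost-sure uniqueness on $\{a(R)>0\}$ make rigorous points that the paper leaves implicit.
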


\begin{proof}
Let
\[
 b(r)=\gsE[\gsip{{\Delta v}}{I}\mid R=r],
 \qquad
 d(r)=\gsE[\gsnorm I^2\mid R=r].
\]
For a deterministic coefficient \(\omega\), expand the squared norm before
taking the conditional expectation:
\begin{align*}
 \gsnorm{\omega{\Delta v}-I}^2
 &=\gsip{\omega{\Delta v}-I}{\omega{\Delta v}-I}\\
 &=\omega^2\gsnorm{\Delta v}^2-2\omega\gsip{\Delta v} I+\gsnorm I^2.
\end{align*}
Linearity of conditional expectation gives
\begin{equation}
 \gsE[\gsnorm{\omega{\Delta v}-I}^2\mid R=r]
 =a(r)\omega^2-2b(r)\omega+d(r).
 \label{eq:gs-conditional-quadratic}
\end{equation}
If \(a(r)>0\), complete the square:
\begin{align*}
 a(r)\omega^2-2b(r)\omega+d(r)
 &=a(r)\left[\omega^2-2\frac{b(r)}{a(r)}\omega\right]+d(r)\\
 &=a(r)\left(\omega-\frac{b(r)}{a(r)}\right)^2
   +d(r)-\frac{b(r)^2}{a(r)}.
\end{align*}
The last two terms do not depend on \(\omega\). The closest point in the closed
interval \([0,1]\) to the unconstrained minimizer \(b(r)/a(r)\) is its
Euclidean projection \(\gsclip(b(r)/a(r))\), which proves
\eqref{eq:gs-conditional-optimum} on \(\{a(r)>0\}\).

To pass from conditional to global risk, let \(f\) be any measurable rule
with values in \([0,1]\), and let \(f^\star\) be the conditional minimizer.
The pointwise conditional inequality gives, almost surely,
\[
 \gsE[\gsnorm{f^\star(R){\Delta v}-I}^2\mid R]
 \le
 \gsE[\gsnorm{f(R){\Delta v}-I}^2\mid R].
\]
Taking expectations of both sides and using the tower property gives
\begin{align*}
 \gsE\gsnorm{f^\star(R){\Delta v}-I}^2
 &=\gsE\!\left[
   \gsE[\gsnorm{f^\star(R){\Delta v}-I}^2\mid R]\right]\\
 &\le\gsE\!\left[
   \gsE[\gsnorm{f(R){\Delta v}-I}^2\mid R]\right]\\
 &=\gsE\gsnorm{f(R){\Delta v}-I}^2.
\end{align*}
If \(a(r)=0\), then the nonnegative random variable
\(\gsnorm{\Delta v}^2\) has conditional expectation zero. It is therefore zero
conditionally almost surely. Both \(\omega^2\gsnorm{\Delta v}^2\) and
\(\omega\gsip{\Delta v} I\) vanish, so \eqref{eq:gs-conditional-quadratic}
reduces to \(d(r)\), independently of \(\omega\).
\end{proof}

If \(A=\gsnorm{{\Delta v}}^2>0\) and \(c=\gsip{{\Delta v}}{I}/A\), the
unclipped optimum is \(\gsE[Ac\mid R]/\gsE[A\mid R]\).
Thus calibration for ordinary field MSE is innovation-energy weighted;
it is generally not the unweighted conditional average of \(c\).
An unweighted average instead corresponds to a per-sample loss normalized
by \(A\), under the integrability conditions needed for that loss.

\subsection{Why Smoothness Cannot Determine the Retention Coefficient}
\label{app:gs-nonidentifiability}

\begin{gsproposition}[Smoothness alone does not identify retention]
\label{prop:gs-nonidentifiability}
Fix three distinct anchor times and their vector-valued outputs, with
\({\Delta v_q}\ne0\), and fix a distinct target time. For every
\(c_0\in\gsR\), there exists a \(C^\infty\) model-output trace that matches
all three anchor outputs and whose target coefficient in
\eqref{eq:gs-oracle-coefficient} equals \(c_0\). The trace can be
realized by a smooth time-dependent flow field that is independent of the state.
\end{gsproposition}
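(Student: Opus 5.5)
We construct the trace explicitly by interpolation. Denote the three anchor times by \(s_1,s_2,s_3\) (the values \(\sigma_n^{(1)},\sigma_n^{(2)},\sigma_n^{(3)}\)), their outputs by \(v^{(1)},v^{(2)},v^{(3)}\), and the target time by \(s_0\), distinct from all three. The target coefficient in \eqref{eq:gs-oracle-coefficient} depends only on the target output \(w\) through
\[
c=\frac{\gsip{\Delta v_q}{w-v^{(1)}}}{\gsnorm{\Delta v_q}^2}.
\]
So it suffices to pick a target vector \(w_0\) achieving \(c_0\). The simplest choice is
\[
w_0=v^{(1)}+c_0\,\Delta v_q,
\]
which gives \(c=c_0\) directly because \(\Delta v_q\neq0\). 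Any multiple of \(P^\perp_{\Delta v_q}\) could also be added without changing \(c\).

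Next, we produce a \(C^\infty\) curve \(g:\gsR\to\gsR^d\) with \(g(s_i)=v^{(i)}\) for \(i=1,2,3\) and \(g(s_0)=w_0\). Because the four nodes are distinct, componentwise Lagrange interpolation gives the polynomial
\[
g(s)=\sum_{k=0}^{3}\ell_k(s)\,y_k,
\]
where \(y_0=w_0\), \(y_i=v^{(i)}\), and \(\ell_k\) are the cubic Lagrange basis polynomials. This curve is smooth, even analytic. If one prefers bounded derivatives, a smooth bump-function partition of unity would also work: take \(\sum_k\phi_k(s)y_k\) with \(\phi_k(s_j)=\delta_{kj}\) and disjoint small supports around each node. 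We will use the polynomial for concreteness.

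To realize this trace as a flow field, define \(V(x,\sigma,c)=g(\sigma)\) for all states \(x\) and conditions \(c\). This field is \(C^\infty\) in \((x,\sigma)\) and independent of the state. Its output at any anchor state \(\widetilde x_{a}\) with noise level \(s_i\) is \(v^{(i)}\), whatever \(\widetilde x_a\) is. Its output at the skipped stage's current state \(\widetilde x_n\) with level \(s_0\) is \(w_0\). Hence \(I_{q,n}=w_0-v^{(1)}=c_0\Delta v_q\) and the target coefficient is \(c_0\).

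The only subtle point is ensuring the trace is consistent with the states the sampler actually visits. The anchor outputs are prescribed in terms of the states \(\widetilde x_{a}\), which themselves depend on the field. State-independence removes this circularity: the outputs depend only on \(\sigma\), so the sampled states may be arbitrary and every prescribed value is still matched. This is why the statement emphasizes a state-independent field. With that observation the argument closes, and it shows that no smoothness assumption alone can pin down \(\omega^\star_{q,n}=\gsclip(c_{q,n})\) from Proposition~\ref{prop:gs-loss}.
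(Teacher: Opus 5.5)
Your proposal is correct and takes essentially the same route as the paper. You set the target output to \(v^{(1)}+c_0\Delta v_q\), pass a cubic vector-valued Lagrange interpolant through the four distinct nodes, and realize it with the state-independent field \(V(x,\sigma,c)=g(\sigma)\); your remark that state-independence removes any circularity with the visited states is a correct clarification of why the paper uses that realization.
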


\begin{proof}[Proof of Proposition~\ref{prop:gs-nonidentifiability}]
Let the three anchor times be \(t_0,t_1,t_2\), let the target time be
\(t_3\), and let the corresponding fixed anchor outputs be
\(v_0,v_1,v_2\). All four times are distinct. Define
\[
  {\Delta v}=v_2-v_1\ne0
\]
and fix an arbitrary desired coefficient \(c_0\in\gsR\). Choose the target
output
\[
  v_3=v_2+c_0{\Delta v}.
\]
For \(i\in\{0,1,2,3\}\), define the scalar Lagrange basis polynomial
\[
  L_i(t)=\prod_{\substack{0\le j\le3\\j\ne i}}
  \frac{t-t_j}{t_i-t_j}.
\]
Every denominator is nonzero because the times are distinct. Moreover,
\(L_i(t_j)=1\) when \(i=j\) and \(L_i(t_j)=0\) when \(i\ne j\). The
vector-valued polynomial
\begin{equation}
  g_{c_0}(t)=\sum_{i=0}^{3}v_iL_i(t)
  \label{eq:gs-lagrange-trace}
\end{equation}
therefore satisfies
\[
  g_{c_0}(t_i)=v_i,
  \qquad i=0,1,2,3.
\]
It is a polynomial of degree at most three in each coordinate, hence it is
\(C^\infty\). At the target,
\[
  I=g_{c_0}(t_3)-g_{c_0}(t_2)=v_3-v_2=c_0{\Delta v}.
\]
Substitution into \eqref{eq:gs-oracle-coefficient} gives
\[
  c=\frac{\gsip{{\Delta v}}{c_0{\Delta v}}}{\gsnorm{{\Delta v}}^2}=c_0.
\]
The three anchor outputs and all statistics computed from them, including
\(\sin^2\theta\), are identical for every \(c_0\); the times and hence \(\rho\) are
also identical. Finally, define a state-independent time-dependent field
\[
  V_{c_0}(x,t)=g_{c_0}(t).
\]
This field is \(C^\infty\) in \((x,t)\), and its output along every state
trace is exactly \(g_{c_0}(t)\). Thus the construction lies inside the class
of smooth flow fields and completes the proof.
\end{proof}

The preceding construction can be strengthened: even revealing any fixed
finite number of derivatives at every anchor does not identify the target
coefficient.

\begin{gsproposition}[Finite anchor jets do not identify retention]
\label{prop:gs-finite-jets}
Fix an integer \(m\ge0\), distinct anchor times \(t_0,t_1,t_2\), a
distinct target time \(t_3\), and a \(C^\infty\) trace \(g_0\) with
\({\Delta v}=g_0(t_2)-g_0(t_1)\ne0\). There is a one-parameter family of
\(C^\infty\) traces \(g_\alpha\) such that
\begin{equation}
  g_\alpha^{(k)}(t_j)=g_0^{(k)}(t_j),
  \qquad j\in\{0,1,2\},\quad 0\le k\le m,
  \label{eq:gs-matching-jets}
\end{equation}
while the oracle coefficient at \(t_3\) ranges over all of \(\gsR\).
Each trace is realizable by a state-independent smooth flow field.
\end{gsproposition}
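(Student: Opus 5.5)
We will perturb $g_0$ by a smooth bump that vanishes identically near the anchors and moves the target value along $\Delta v$. Because the anchor times are distinct, we can choose $\delta>0$ so small that the intervals $(t_j-\delta,t_j+\delta)$ for $j\in\{0,1,2,3\}$ are pairwise disjoint. We then take a standard $C^\infty$ bump $\phi:\gsR\to[0,1]$ that is supported in $(t_3-\delta,t_3+\delta)$ and satisfies $\phi(t_3)=1$; for instance, the usual $\exp(-1/(1-s^2))$ construction, rescaled and normalized. For $\alpha\in\gsR$ we define
\[
  g_\alpha(t)=g_0(t)+\alpha\,\phi(t)\,{\Delta v}.
\]
Each $g_\alpha$ is $C^\infty$, being a sum of smooth functions.

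\textbf{Jet matching.} Near each anchor $t_j$, $j\in\{0,1,2\}$, the function $\phi$ vanishes on the whole neighborhood $(t_j-\delta,t_j+\delta)$. Hence every derivative of $\phi$ vanishes at $t_j$, and $g_\alpha^{(k)}(t_j)=g_0^{(k)}(t_j)$ holds for every $k$, in particular for $0\le k\le m$. This gives \eqref{eq:gs-matching-jets}. It also shows that the innovation ${\Delta v}=g_\alpha(t_2)-g_\alpha(t_1)$ is the same for every $\alpha$. The statistics $\sin^2\theta$ and $\rho$ are unchanged as well, since they depend only on the anchor values and the anchor times.

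\textbf{Oracle coefficient.} The target innovation is $I_\alpha=g_\alpha(t_3)-g_\alpha(t_2)=I_0+\alpha{\Delta v}$, because $\phi(t_3)=1$ and $\phi(t_2)=0$. Substituting into \eqref{eq:gs-oracle-coefficient} gives
\[
  c_\alpha=\frac{\gsip{{\Delta v}}{I_0}}{\gsnorm{{\Delta v}}^2}+\alpha=c_0+\alpha .
\]
This is an affine bijection $\gsR\to\gsR$ in $\alpha$, so the oracle coefficient ranges over all of $\gsR$. Realizability follows exactly as in Proposition~\ref{prop:gs-nonidentifiability}: we set $V_\alpha(x,t)=g_\alpha(t)$, which is $C^\infty$ in $(x,t)$ and produces the trace $g_\alpha$ along any state path.

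\textbf{Main point to verify.} The only step needing care is that a smooth compactly supported bump with $\phi(t_3)=1$ exists and that it kills all jets at the anchors. We obtain both from the disjoint-neighborhood choice of $\delta$ rather than from any polynomial interpolation. A polynomial (Hermite) construction would also work, but it would match only finitely many derivatives. The bump gives the stronger conclusion that anchor jets of every order agree.
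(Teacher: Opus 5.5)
Your proof is correct, but it takes a different route from the paper. The paper perturbs by the polynomial $\phi(t)=\prod_{j=0}^{2}(t-t_j)^{m+1}$. It checks via Leibniz's rule that $\phi^{(k)}(t_j)=0$ for $k\le m$, and obtains $c_\alpha=c_0+\alpha\phi(t_3)$, with $\phi(t_3)\ne0$ by distinctness of the times. You instead use a compactly supported $C^\infty$ bump centred at $t_3$ that vanishes identically on neighborhoods of the anchors.

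Your construction gives more within the $C^\infty$ class. Since $g_\alpha=g_0$ on whole neighborhoods of $t_0,t_1,t_2$, jets of every order agree, not just up to order $m$, and one construction serves all $m$ at once. The normalization $\phi(t_3)=1$ also gives the clean shift $c_\alpha=c_0+\alpha$ without any nonvanishing check.

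The paper's polynomial perturbation gives something yours cannot. If $g_0$ is polynomial or real-analytic, then so is every $g_\alpha$, so non-identifiability from finite jets holds even within analytic traces. Your all-orders strengthening necessarily leaves that class. A real-analytic trace on an interval is determined by its full jet at a single anchor, so infinite-jet matching there would force $g_\alpha=g_0$.

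The two arguments are therefore complementary. Yours is stronger in the number of derivatives matched, and the paper's is stronger in the regularity class the family stays within.
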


\begin{proof}
Define the scalar polynomial
\begin{equation}
  \phi(t)=\prod_{j=0}^{2}(t-t_j)^{m+1}
  \label{eq:gs-jet-polynomial}
\end{equation}
and the vector-valued trace
\begin{equation}
  g_\alpha(t)=g_0(t)+\alpha\phi(t){\Delta v},
  \qquad \alpha\in\gsR.
  \label{eq:gs-jet-family}
\end{equation}
Fix an anchor \(t_j\). Factoring the corresponding term gives
\[
  \phi(t)=(t-t_j)^{m+1}\psi_j(t),
  \qquad
  \psi_j(t)=\prod_{\substack{0\le i\le2\\i\ne j}}
              (t-t_i)^{m+1}.
\]
For \(0\le k\le m\), Leibniz' rule gives
\[
  \phi^{(k)}(t)
  =\sum_{\ell=0}^{k}\binom{k}{\ell}
    \frac{d^\ell}{dt^\ell}(t-t_j)^{m+1}
    \psi_j^{(k-\ell)}(t).
\]
Every \(\ell\) in this sum satisfies \(\ell\le k\le m\), and hence
\[
  \frac{d^\ell}{dt^\ell}(t-t_j)^{m+1}
  =\frac{(m+1)!}{(m+1-\ell)!}
    (t-t_j)^{m+1-\ell}
\]
still contains a positive power of \(t-t_j\). Therefore
\(\phi^{(k)}(t_j)=0\). Differentiating
\eqref{eq:gs-jet-family} now proves
\eqref{eq:gs-matching-jets} term by term.

In particular, \(g_\alpha(t_j)=g_0(t_j)\) at all three anchors, so the
latest innovation remains \({\Delta v}\). Since \(\phi(t_2)=0\), the target
innovation is
\begin{align*}
  I_\alpha
  &=g_\alpha(t_3)-g_\alpha(t_2)\\
  &=g_0(t_3)-g_0(t_2)+\alpha\phi(t_3){\Delta v}.
\end{align*}
Let \(c_0=\gsip{{\Delta v}}{g_0(t_3)-g_0(t_2)}/\gsnorm {\Delta v}^2\). Then
\begin{align*}
  c_\alpha
  &=\frac{\gsip {\Delta v}{I_\alpha}}{\gsnorm {\Delta v}^2}\\
  &=c_0+\alpha\phi(t_3).
\end{align*}
The target differs from every anchor, so every factor in
\(\phi(t_3)\) is nonzero and \(\phi(t_3)\ne0\). As \(\alpha\) ranges over
\(\gsR\), so does \(c_\alpha\). Finally,
\(V_\alpha(x,t)=g_\alpha(t)\) is a state-independent \(C^\infty\) field
realizing the trace.
\end{proof}

The interpolation proofs are global and exact. A local expansion shows more
specifically why projective turning and the oracle coefficient contain
different differential information.

\begin{gsproposition}[Tangential-normal separation]
\label{prop:gs-tangent-normal}
Let \(g\in C^3\), fix a time \(t\) with \(v=g'(t)\ne0\), and put
\(a=g''(t)\). Use equally spaced anchor times \(t-2h,t-h,t\), and a target
time \(t+\lambda h\) with fixed \(\lambda>0\). Let \(c_h\) be the oracle
coefficient and \(\sin^2\theta_h\) the projective statistic. Then
\begin{align}
  c_h
  &=\lambda+
    \frac{\lambda(1+\lambda)}{2}h
    \frac{\gsip{v}{a}}{\gsnorm{v}^2}
    +O(h^2),
  \label{eq:gs-oracle-local-expansion}\\
  \sin^2\theta_h
  &=h^2\frac{\gsnorm{P_v^\perp a}^2}{\gsnorm{v}^2}
    +O(h^3).
  \label{eq:gs-chi-local-expansion}
\end{align}
Thus the first correction to \(c_h\) depends on tangential log-speed
\(\gsip{v}{a}/\gsnorm v^2\), whereas \(\sin^2\theta_h\) measures normal turning.
\end{gsproposition}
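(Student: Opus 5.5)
The plan is a direct second-order Taylor expansion of the trace $g$ about the latest anchor time $t$. The anchor outputs are $v^{(1)}=g(t)$, $v^{(2)}=g(t-h)$, $v^{(3)}=g(t-2h)$, and the target is $g(t+\lambda h)$. Since $g\in C^3$, Taylor's theorem with remainder on a compact neighbourhood of $t$ gives $g(t+sh)=g(t)+sh\,v+\tfrac{s^2h^2}{2}a+O(h^3)$, uniformly for $s\in\{-2,-1,\lambda\}$. Substituting this yields
\begin{align*}
\Delta v &= h v-\tfrac{h^2}{2}a+O(h^3),\\
\Delta v^{-} &= h v-\tfrac{3h^2}{2}a+O(h^3),\\
I &= \lambda h v+\tfrac{\lambda^2h^2}{2}a+O(h^3).
\end{align*}
Both claimed statistics are ratios that are homogeneous of degree zero in these vectors. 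I will therefore divide everything by $h$ and work with the rescaled vectors
\[
u=\Delta v/h=v-\tfrac h2 a+O(h^2),\qquad
w=\Delta v^-/h=v-\tfrac{3h}{2}a+O(h^2),\qquad
\iota=I/h .
\]

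For \eqref{eq:gs-oracle-local-expansion}, I expand the numerator and denominator of \eqref{eq:gs-oracle-coefficient} to first order in $h$. The numerator is
\[
\langle u,\iota\rangle=\lambda\|v\|_2^2+h\Bigl(\tfrac{\lambda^2}{2}-\tfrac{\lambda}{2}\Bigr)\langle v,a\rangle+O(h^2),
\]
and the denominator is
\[
\|u\|_2^2=\|v\|_2^2-h\langle v,a\rangle+O(h^2).
\]
Because $\|v\|_2>0$, the denominator is bounded away from zero for small $h$, so $1/(1-x)=1+x+O(x^2)$ applies. The first-order coefficient of $\langle v,a\rangle/\|v\|_2^2$ is then $\tfrac{\lambda^2-\lambda}{2}+\lambda=\tfrac{\lambda(1+\lambda)}{2}$, which is the claim.

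For \eqref{eq:gs-chi-local-expansion}, I use \eqref{eq:gs-chi} in the scale-invariant form $\sin^2\theta_h=\|P_w^\perp u\|_2^2/\|u\|_2^2$. The key observation is that $u=w+h a+O(h^2)$, and that $P_w^\perp w=0$. Hence
\[
P_w^\perp u=h\,P_w^\perp a+O(h^2).
\]
Since $w\to v\neq0$, the projector $P_w^\perp$ is a smooth function of $w$ near $v$, so $P_w^\perp a=P_v^\perp a+O(h)$. Squaring gives $\|P_w^\perp u\|_2^2=h^2\|P_v^\perp a\|_2^2+O(h^3)$. Dividing by $\|u\|_2^2=\|v\|_2^2+O(h)$ keeps the error at $O(h^3)$, because the leading term is already $O(h^2)$.

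The main obstacle is remainder bookkeeping in the turning statistic. The leading $hv$ terms cancel in the projection, so every $O(h^3)$ term in $\Delta v$ and $\Delta v^-$ must be shown to contribute only at order $h^3$ after squaring. I will handle this explicitly:
\begin{itemize}
\item Both the $O(h^2)$ remainder in $P_w^\perp u$ and the $O(h)$ error in the projector enter only through cross terms with the $h\,P_v^\perp a$ leading term, or through their own squares.
\item Each such term is therefore $O(h^3)$ or smaller.
\item This uses only $C^3$ regularity of $g$ and Lipschitz continuity of $w\mapsto P_w^\perp$ near $v\neq0$.
\end{itemize}
The concluding interpretation then follows by comparing the two expansions. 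The first correction to $c_h$ involves only the tangential component $\langle v,a\rangle$, while the leading term of $\sin^2\theta_h$ involves only the normal component $P_v^\perp a$.
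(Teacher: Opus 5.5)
Your proposal is correct and takes essentially the same route as the paper: the same Taylor expansions of $\Delta v$, $\Delta v^{-}$ and $I$, the same first-order quotient expansion for $c_h$, and for the turning statistic the same idea that the projector orthogonal to the previous innovation annihilates it, so only the $O(h)$ difference $h a$ survives. The one small deviation is that you apply this directly to the unnormalized rescaled innovations through the ratio $\|P_w^\perp u\|_2^2/\|u\|_2^2$. The paper instead first normalizes both secants, using a first-order expansion of $x\mapsto x/\|x\|_2$, and then projects the difference of the unit vectors; your version simply avoids that normalization step.
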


\begin{proof}
Taylor expansion about \(t\) gives
\begin{align*}
  g(t-h)
  &=g(t)-hv+\frac{h^2}{2}a+O(h^3),\\
  g(t-2h)
  &=g(t)-2hv+2h^2a+O(h^3),\\
  g(t+\lambda h)
  &=g(t)+\lambda hv+\frac{\lambda^2h^2}{2}a+O(h^3).
\end{align*}
Therefore the latest innovation, previous innovation, and target innovation
are
\begin{align}
  {\Delta v_h}
  &=g(t)-g(t-h)
   =hv-\frac{h^2}{2}a+O(h^3),
  \label{eq:gs-latest-taylor}\\
  {\Delta v_h^{-}}
  &=g(t-h)-g(t-2h)
   =hv-\frac{3h^2}{2}a+O(h^3),
  \label{eq:gs-previous-taylor}\\
  I_h
  &=g(t+\lambda h)-g(t)
   =\lambda hv+\frac{\lambda^2h^2}{2}a+O(h^3).
  \label{eq:gs-target-taylor}
\end{align}
Using bilinearity of the inner product in
\eqref{eq:gs-latest-taylor} and \eqref{eq:gs-target-taylor},
\begin{align*}
  \gsip{{\Delta v_h}}{I_h}
  &=\lambda h^2\gsnorm v^2
    +\frac{\lambda(\lambda-1)}{2}h^3\gsip va
    +O(h^4),\\
  \gsnorm{{\Delta v_h}}^2
  &=h^2\gsnorm v^2-h^3\gsip va+O(h^4).
\end{align*}
After cancelling \(h^2\), use
\((A+hB)^{-1}=A^{-1}-hB/A^2+O(h^2)\) with
\(A=\gsnorm v^2\) to obtain
\begin{align*}
  c_h
  &=\frac{\lambda\gsnorm v^2
      +\frac{\lambda(\lambda-1)}2h\gsip va+O(h^2)}
     {\gsnorm v^2-h\gsip va+O(h^2)}\\
  &=\lambda+
    \left[\frac{\lambda(\lambda-1)}2+\lambda\right]
    h\frac{\gsip va}{\gsnorm v^2}+O(h^2)\\
  &=\lambda+
    \frac{\lambda(1+\lambda)}2h
    \frac{\gsip va}{\gsnorm v^2}+O(h^2),
\end{align*}
which proves \eqref{eq:gs-oracle-local-expansion}.

It remains to expand the angle. Let \(u=v/\gsnorm v\). For any fixed
vector \(b\), direct expansion of the norm shows
\begin{equation}
  \frac{v+hb+O(h^2)}{\gsnorm{v+hb+O(h^2)}}
  =u+\frac{h}{\gsnorm v}P_v^\perp b+O(h^2).
  \label{eq:gs-normalization-expansion}
\end{equation}
Indeed,
\[
  \gsnorm{v+hb+O(h^2)}
  =\gsnorm v+h\gsip ub+O(h^2),
\]
and substituting the reciprocal of this scalar yields
\(b-u\gsip ub=P_v^\perp b\) in the first-order term.
Divide \eqref{eq:gs-latest-taylor} and
\eqref{eq:gs-previous-taylor} by \(h\) and apply
\eqref{eq:gs-normalization-expansion}. Their unit directions obey
\begin{align*}
  u_{\Delta v}
  &=u-\frac{h}{2\gsnorm v}P_v^\perp a+O(h^2),\\
  u_{\Delta v^{-}}
  &=u-\frac{3h}{2\gsnorm v}P_v^\perp a+O(h^2).
\end{align*}
Hence
\[
  u_{\Delta v}-u_{\Delta v^{-}}
  =\frac{h}{\gsnorm v}P_v^\perp a+O(h^2).
\]
Because \(P_{u_{\Delta v^{-}}}^\perp u_{\Delta v^{-}}=0\) and \(u_{\Delta v^{-}}=u+O(h)\),
\begin{align*}
  \sin\theta_h
  &=\gsnorm{P_{u_{\Delta v^{-}}}^\perp u_{\Delta v}}\\
  &=\gsnorm{P_{u_{\Delta v^{-}}}^\perp(u_{\Delta v}-u_{\Delta v^{-}})}\\
  &=\frac{h}{\gsnorm v}\gsnorm{P_v^\perp a}+O(h^2).
\end{align*}
Squaring this identity proves \eqref{eq:gs-chi-local-expansion}.
\end{proof}

Even when tangential speed is constant, the observable closure is not the
oracle coefficient of every smooth curved trace. The following exact example
keeps this distinction visible.

\begin{gsproposition}[Constant-speed circular sanity check]
\label{prop:gs-circle}
Let \(g(t)=(\cos(\Omega t),\sin(\Omega t))\), use anchors at
\(-2h,-h,0\), and predict the target at \(h\). Assume
\(\sin(\Omega h/2)\ne0\). Then
\begin{equation}
  \sin^2\theta_h=\sin^2(\Omega h),
  \qquad
  c_h=\cos(\Omega h).
  \label{eq:gs-circle-oracle}
\end{equation}
At unit normalized horizon, GeoShrink instead uses
\begin{equation}
  \omega_h=\frac{1}{1+\sin^2(\Omega h)}.
  \label{eq:gs-circle-geoshrink}
\end{equation}
Consequently,
\begin{equation}
\begin{aligned}
  c_h&=1-\frac{\Omega^2h^2}{2}+O(h^4),\\
  \omega_h&=1-\Omega^2h^2+O(h^4).
\end{aligned}
\label{eq:gs-circle-expansion}
\end{equation}
so the two coefficients already differ at order \(h^2\).
\end{gsproposition}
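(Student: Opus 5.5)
The plan is to compute everything exactly from chord geometry on the unit circle, then expand.

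\textbf{Innovations as chords.} With anchors at $-2h,-h,0$ and target $h$, every innovation is a chord between consecutive points of the circle separated by angular increment $\Omega h$. I will use the chord identity
\[
g(s+h)-g(s)=2\sin(\Omega h/2)\,\bigl(-\sin(\Omega(s+h/2)),\,\cos(\Omega(s+h/2))\bigr).
\]
It follows from the sum-to-product formulas. Hence
\begin{itemize}
\item $\Delta v_h=g(0)-g(-h)$ has direction angle $\pi/2-\Omega h/2$;
\item $\Delta v_h^{-}=g(-h)-g(-2h)$ has direction angle $\pi/2-3\Omega h/2$;
\item $I_h=g(h)-g(0)$ has direction angle $\pi/2+\Omega h/2$.
\end{itemize}
All three share the common length $2|\sin(\Omega h/2)|$, which is nonzero by the assumption $\sin(\Omega h/2)\neq0$. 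This nonvanishing is exactly what is needed for $\Delta v_h\neq0$, so $c_h$ is defined, and for $\Delta v_h^-\neq0$, so $\theta_h$ is defined.

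\textbf{Exact formulas.} Consecutive chord directions differ by exactly $\Omega h$.
\begin{itemize}
\item For the turning statistic: the angle between the lines of $\Delta v_h$ and $\Delta v_h^-$ has $\sin^2\theta_h=\sin^2(\Omega h)$. Squaring makes this independent of the acute-angle/sign convention.
\item For the oracle coefficient: equal lengths give
\[
c_h=\frac{\langle\Delta v_h,I_h\rangle}{\|\Delta v_h\|^2}=\cos(\Omega h),
\]
because the inner product equals (common length)$^2$ times the cosine of the direction difference $\Omega h$.
\end{itemize}
This establishes \eqref{eq:gs-circle-oracle}.

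\textbf{GeoShrink coefficient.} The statement stipulates unit normalized horizon, $\rho=1$, because the target lies one spacing $h$ past the latest anchor. Substituting $\rho=1$ and $\sin^2\theta_h=\sin^2(\Omega h)$ into Proposition~\ref{prop:gs-retention}, i.e.\ \eqref{eq:gs-retention}, gives \eqref{eq:gs-circle-geoshrink} directly.

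\textbf{Expansions.} These come from standard Taylor series:
\begin{itemize}
\item $\cos x=1-x^2/2+O(x^4)$ with $x=\Omega h$ gives the expansion of $c_h$.
\item $\sin^2 x=x^2+O(x^4)$ together with $(1+y)^{-1}=1-y+O(y^2)$ and $y=O(h^2)$ gives $\omega_h=1-\Omega^2h^2+O(h^4)$.
\end{itemize}
The $h^2$ coefficients, $-\Omega^2/2$ versus $-\Omega^2$, differ whenever $\Omega\neq0$, and $\Omega\neq0$ is forced by $\sin(\Omega h/2)\ne0$.

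\textbf{Main obstacle.} The only delicate step is getting the chord directions right, with the sign of $\sin(\Omega h/2)$ possibly negative. That sign is common to all three chords, so it cancels in both $c_h$ and $\sin^2\theta_h$. I will state this cancellation explicitly rather than assume $\Omega h$ is small.
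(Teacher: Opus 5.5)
Your proposal is correct and follows essentially the same route as the paper: the sum-to-product chord identity, reading each chord's direction off as the tangent at its midpoint (with the common sign of $\sin(\Omega h/2)$ cancelling), substituting $\rho=1$ into the retention formula, and Taylor-expanding. One small quibble: the paper's $\rho$ is defined via the midpoint of the solver update, not as the target-time ratio $\lambda$, so ``the target lies one spacing past the latest anchor'' is not quite why $\rho=1$; since the statement itself stipulates unit normalized horizon, this does not affect your argument.
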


\begin{proof}
For arbitrary \(u\), the angle-addition identities give
\begin{align*}
  g(u+h)-g(u)
  &=\bigl(\cos(\Omega(u+h))-\cos(\Omega u),
          \sin(\Omega(u+h))-\sin(\Omega u)\bigr)\\
  &=2\sin\left(\frac{\Omega h}{2}\right)
    \left(-\sin\left(\Omega u+\frac{\Omega h}{2}\right),
           \cos\left(\Omega u+\frac{\Omega h}{2}\right)\right).
\end{align*}
Thus every length-\(h\) chord has magnitude
\(2|\sin(\Omega h/2)|\), and its oriented unit direction is the unit
tangent at the chord midpoint, up to a common sign. The previous innovation,
latest innovation, and target innovation have midpoints
\(-3h/2,-h/2,h/2\), respectively. The angle between the previous and latest
chord lines is therefore \(\Omega h\) modulo sign, and the squared projective
sine is \(\sin^2(\Omega h)\). This proves the first identity in
\eqref{eq:gs-circle-oracle}.

The latest and target innovations have equal magnitude, and their oriented
directions differ by \(\Omega h\). Hence
\[
  c_h
  =\frac{\gsip{{\Delta v_h}}{I_h}}{\gsnorm{{\Delta v_h}}^2}
  =\cos(\Omega h),
\]
which proves the second identity. At unit normalized horizon,
\(R=\sin^2\theta_h\), so \eqref{eq:gs-retention} gives
\eqref{eq:gs-circle-geoshrink}. Finally, with \(x=\Omega h\),
\[
  \cos x=1-\frac{x^2}{2}+O(x^4),
  \qquad
  \sin^2x=x^2+O(x^4),
\]
and
\[
  \frac{1}{1+\sin^2x}
  =1-\sin^2x+O(\sin^4x)
  =1-x^2+O(x^4).
\]
Substituting \(x=\Omega h\) proves
\eqref{eq:gs-circle-expansion}.
\end{proof}

\subsection{Projective Residual and Projector Identities}
\label{app:gs-projectors}

For nonzero vectors \(a,b\),
\[
 P_b=\frac{bb^\top}{\gsnorm b^2},\qquad
 \gsnorm{P_ba}^2=\frac{(a^\top b)^2}{\gsnorm b^2}.
\]
The orthogonality of \(P_ba\) and \(P_b^\perp a\) implies
\begin{equation}
 \frac{\gsnorm{P_b^\perp a}^2}{\gsnorm a^2}
 =1-\frac{(a^\top b)^2}{\gsnorm a^2\gsnorm b^2}.
 \label{eq:gs-projector-identity}
\end{equation}
Let \(u=a/\gsnorm a\) and \(v=b/\gsnorm b\). Since \(u\) and \(v\)
are unit vectors,
\begin{align*}
 \|uu^\top-vv^\top\|_F^2
 &=\operatorname{tr}(uu^\top uu^\top)
  +\operatorname{tr}(vv^\top vv^\top)
  -2\operatorname{tr}(uu^\top vv^\top)\\
 &=2-2(u^\top v)^2.
\end{align*}
Taking \(a={\Delta v_q}\), \(b={\Delta v_q^{-}}\) proves
\eqref{eq:gs-projector-identity}. Nonzero scalar multiplication
does not change a line projector. Orthogonal coordinate transformations
conjugate the projectors and preserve their Frobenius distance.

\subsection{Why the Update Midpoint Appears}
\label{app:gs-midpoint}

The midpoint horizon used in \eqref{eq:gs-rho} satisfies
\begin{equation}
  {
  \int_0^1r(u)\,du=\rho_{q,n},
  \qquad
  \int_0^1r(u)^2\,du
  =\rho_{q,n}^2+\frac1{12}
    \left(\frac{\sigma_n-\sigma_{n-1}}
    {\sigma_n^{(1)}-\sigma_n^{(2)}}\right)^2.}
  \label{eq:gs-midpoint-moments}
\end{equation}

Let
\[
  H_q=\sigma_n^{(1)}-\sigma_n^{(2)}\ne0,
  \qquad
  r_0=\frac{\sigma_n-\sigma_n^{(1)}}{H_q},
  \qquad
  r_1=\frac{\sigma_{n-1}-\sigma_n^{(1)}}{H_q}.
\]
The normalized displacement at the affine interval coordinate
\(u\in[0,1]\) is
\begin{align*}
  r(u)
  &=\frac{(1-u)\sigma_n+u\sigma_{n-1}-\sigma_n^{(1)}}{H_q}\\
  &=(1-u)r_0+ur_1.
\end{align*}
Define its midpoint and endpoint difference by
\[
  \bar r=\frac{r_0+r_1}{2},
  \qquad
  \Delta r=r_1-r_0.
\]
Then
\begin{equation}
\begin{aligned}
  r(u)
  &=(1-u)\left(\bar r-\frac{\Delta r}{2}\right)
    +u\left(\bar r+\frac{\Delta r}{2}\right)\\
  &=\bar r+\left(u-\frac12\right)\Delta r.
\end{aligned}
\label{eq:gs-centered-horizon}
\end{equation}
By \eqref{eq:gs-rho}, \(\bar r=\rho_{q,n}\). Also,
\begin{align*}
  \int_0^1\left(u-\frac12\right)\,du
  &=\left[\frac{u^2}{2}-\frac{u}{2}\right]_{0}^{1}=0,\\
  \int_0^1\left(u-\frac12\right)^2\,du
  &=\int_0^1\left(u^2-u+\frac14\right)\,du\\
  &=\left[\frac{u^3}{3}-\frac{u^2}{2}+\frac{u}{4}\right]_{0}^{1}\\
  &=\frac13-\frac12+\frac14=\frac1{12}.
\end{align*}
Integrating \eqref{eq:gs-centered-horizon} therefore gives
\begin{align*}
  \int_0^1r(u)\,du
  &=\bar r+\Delta r
    \int_0^1\left(u-\frac12\right)\,du\\
  &=\bar r=\rho_{q,n}.
\end{align*}
Squaring the same identity first gives
\[
  r(u)^2
  =\bar r^2+2\bar r\Delta r\left(u-\frac12\right)
   +(\Delta r)^2\left(u-\frac12\right)^2.
\]
After integration, the cross term vanishes and the last integral equals
\(1/12\), so
\begin{align*}
  \int_0^1r(u)^2\,du
  &=\bar r^2+\frac{(\Delta r)^2}{12}\\
  &=\rho_{q,n}^2+\frac1{12}
    \left(\frac{\sigma_{n-1}-\sigma_n}{H_q}\right)^2\\
  &=\rho_{q,n}^2+\frac1{12}
    \left(\frac{\sigma_n-\sigma_{n-1}}
    {\sigma_n^{(1)}-\sigma_n^{(2)}}\right)^2.
\end{align*}
This proves both identities in \eqref{eq:gs-midpoint-moments}.
The implemented rule uses the square of the mean horizon, whereas the exact
mean squared horizon contains the nonnegative within-update variance. The
difference vanishes quadratically as the solver interval becomes small
relative to the latest anchor span.

\subsection{Continuum Consistency under \texorpdfstring{\(C^2\)}{C2} Regularity}
\label{app:gs-continuum}

\begin{gstheorem}[Continuum limit of the chordal tangent and exposure]
\label{thm:gs-continuum}
Let \(g\) be \(C^2\) near \(t_\star\), with \(g'(t_\star)\ne0\).
Let \(t_0<t_1<t_2\) converge to \(t_\star\), write
\(h_1=t_1-t_0\), \(h_2=t_2-t_1\), and suppose
\(h_1/h_2\to\eta\in(0,\infty)\). Let \(u_1,u_2\) be the two unit
secant directions, let \(m_j=(t_{j-1}+t_j)/2\), and orient each sign so that
\(\gsip{u_j}{U(m_j)}>0\), where
\(U(t)=g'(t)/\gsnorm{g'(t)}\). Let \(\theta_h\) be their projective angle.
Then
\begin{equation}
  {
  \begin{aligned}
  -\frac{P_{u_2}^\perp u_1}{h_2}
  &\longrightarrow
  \frac{1+\eta}{2}\,U'(t_\star),\\
  \frac{\sin\theta_h}{h_2}
  &\longrightarrow
  \frac{1+\eta}{2}\,\omega(t_\star).
  \end{aligned}}
  \label{eq:gs-continuum}
\end{equation}
where
\begin{equation}
  \omega(t)=\gsnorm{U'(t)}
  =\frac{\gsnorm{P_{g'(t)}^\perp g''(t)}}{\gsnorm{g'(t)}}.
  \label{eq:gs-angular-speed}
\end{equation}
If \(\delta_h\to\delta\) and \(\rho_h=\delta_h/h_2\), then
\begin{equation}
  {
  \begin{aligned}
  -\rho_hP_{u_2}^\perp u_1
  &\longrightarrow
  \delta\frac{1+\eta}{2}U'(t_\star),\\
  \rho_h^2\sin^2\theta_h
  &\longrightarrow
  \delta^2\left(\frac{1+\eta}{2}\right)^2
  \omega(t_\star)^2.
  \end{aligned}}
  \label{eq:gs-exposure-limit}
\end{equation}
\end{gstheorem}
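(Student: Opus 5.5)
The plan is to replace each unit secant $u_j$ by the unit tangent $U(m_j)$ at the chord midpoint, up to an error that is $o(h_2)$. The difference $u_1-u_2$ then becomes a difference quotient of $U$, and its normal component converges to $U'(t_\star)$. This is the same midpoint-chord mechanism that gives exact identities in Proposition~\ref{prop:gs-circle}, and it reproduces the $h^2$ law of Proposition~\ref{prop:gs-tangent-normal} in the equally spaced case. Throughout, work on a compact neighbourhood $K$ of $t_\star$ on which $g'$ does not vanish. On $K$, $g''$ is uniformly continuous, $U=g'/\gsnorm{g'}$ is $C^1$, and $U'$ is uniformly continuous. Because $h_1/h_2\to\eta\in(0,\infty)$, $h_1$ and $h_2$ are comparable, so $o(h_1)=o(h_2)$ and we may write $o(h)$ for both.

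\textbf{Step 1: midpoint chord expansion.} Write $t_{j-1}=m_j-h_j/2$ and $t_j=m_j+h_j/2$. Use the integral form of Taylor's theorem to second order about $m_j$. The even-order terms cancel in the difference, and uniform continuity of $g''$ makes the remainders uniform. This gives
\[
g(t_j)-g(t_{j-1})=h_j\,g'(m_j)+o(h_j^2).
\]
Normalising, with $\gsnorm{g'}$ bounded below on $K$, yields $u_j=\pm U(m_j)+o(h_j)$. Since $\gsip{u_j}{U(m_j)}$ is close to $\pm1$, the prescribed orientation selects the $+$ sign for all sufficiently small $h$. Hence $u_j=U(m_j)+o(h)$.

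\textbf{Step 2: turning vector.} Since $u_2$ is a unit vector, $P_{u_2}^\perp u_2=0$, so
\[
P_{u_2}^\perp u_1=P_{u_2}^\perp(u_1-u_2)=P_{u_2}^\perp\bigl(U(m_1)-U(m_2)\bigr)+o(h).
\]
The midpoints satisfy $m_2-m_1=(h_1+h_2)/2$. By the mean value theorem in integral form and uniform continuity of $U'$,
\[
U(m_2)-U(m_1)=\tfrac{h_1+h_2}{2}\bigl(U'(t_\star)+o(1)\bigr).
\]
Dividing by $h_2$ gives the factor $(1+\eta)/2$ in the limit. Moreover $P_{u_2}^\perp\to P_{U(t_\star)}^\perp$, and $U'\perp U$ because $\gsnorm{U}\equiv1$. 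Therefore $P_{U(t_\star)}^\perp U'(t_\star)=U'(t_\star)$, which gives the first limit in \eqref{eq:gs-continuum} with the stated sign.

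\textbf{Step 3: angle and exposure.} For unit vectors, $\gsnorm{P_{u_2}^\perp u_1}=\sin\theta_h$; this is the identity \eqref{eq:gs-projector-identity} with $\sin\theta_h\ge0$ for the acute projective angle. Taking norms in the first limit therefore gives the second. The expression \eqref{eq:gs-angular-speed} for $\omega(t)$ follows by differentiating $g'/\gsnorm{g'}$:
\[
U'=\frac{g''-\gsip{U}{g''}\,U}{\gsnorm{g'}}=\frac{P_{g'}^\perp g''}{\gsnorm{g'}}.
\]
Finally, \eqref{eq:gs-exposure-limit} follows by multiplying the limits in \eqref{eq:gs-continuum} by $\rho_h h_2=\delta_h\to\delta$, using continuity of products and squares.

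The main obstacle is Step 1. With only $C^2$ regularity we cannot expand $g$ to third order, so the midpoint cancellation must be carried out with an $o(h^2)$ remainder. That remainder has to be uniform over chords whose endpoints both move toward $t_\star$, and this is exactly what the uniform continuity of $g''$ on $K$ supplies. The sign bookkeeping is the other point needing care: the orientation condition in the statement fixes each $u_j$ near $U(m_j)$, which in turn makes the limit vector $+\tfrac{1+\eta}{2}U'(t_\star)$ for $-P_{u_2}^\perp u_1/h_2$, rather than its negative.
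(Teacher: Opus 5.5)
Your proposal is correct and follows essentially the same route as the paper. In both arguments, the symmetric midpoint cancellation under $C^2$ regularity gives $u_j=U(m_j)+o(h)$, and the limits then follow from $P_{u_2}^\perp u_1=P_{u_2}^\perp(u_1-u_2)$, the difference quotient of $U$ over $m_2-m_1=(h_1+h_2)/2$, the orthogonality $U'\perp U$, and $\|P_{u_2}^\perp u_1\|=\sin\theta_h$. The only difference is cosmetic: you get the chord expansion from a second-order Taylor expansion of $g$ about $m_j$, while the paper writes the secant as the average of $g'$ over the chord and expands $g'$ to first order, which is the same cancellation.
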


\begin{proof}[Proof of Theorem~\ref{thm:gs-continuum}]
Define the normalized tangent
\[
  U(t)=\frac{g'(t)}{\gsnorm{g'(t)}}.
\]
Because \(g'\) is continuous and \(g'(t_\star)\ne0\), there are a
neighborhood \(\mathcal N\) of \(t_\star\) and a number \(b>0\) such that
\(\gsnorm{g'(t)}\ge b\) for every \(t\in\mathcal N\). Hence normalization
is well defined on \(\mathcal N\). Put \(s(t)=\gsnorm{g'(t)}\). Direct
differentiation gives
\[
  s'(t)=\frac{\gsip{g'(t)}{g''(t)}}{\gsnorm{g'(t)}}.
\]
Applying the quotient rule to \(U(t)=g'(t)/s(t)\) yields
\begin{align*}
  U'(t)
  &=\frac{g''(t)}{s(t)}
    -\frac{g'(t)s'(t)}{s(t)^2}\\
  &=\frac{g''(t)}{\gsnorm{g'(t)}}
    -\frac{g'(t)\gsip{g'(t)}{g''(t)}}
          {\gsnorm{g'(t)}^3}\\
  &=\frac{1}{\gsnorm{g'(t)}}
    \left(I-\frac{g'(t)g'(t)^\top}{\gsnorm{g'(t)}^2}\right)g''(t)\\
  &=\frac{P_{g'(t)}^\perp g''(t)}{\gsnorm{g'(t)}}.
\end{align*}
Therefore \(U(t)^\top U'(t)=0\) and
\(\gsnorm{U'(t)}=\omega(t)\).

For \(j\in\{1,2\}\), define the interval midpoint and secant velocity
\[
  m_j=\frac{t_{j-1}+t_j}{2},\qquad
  s_j=\frac{g(t_j)-g(t_{j-1})}{h_j}.
\]
The fundamental theorem of calculus gives
\begin{align*}
  s_j
  &=\frac1{h_j}\int_{t_{j-1}}^{t_j}g'(t)\,dt\\
  &=\frac1{h_j}\int_{-h_j/2}^{h_j/2}g'(m_j+u)\,du.
\end{align*}
Since \(g''\) is continuous, uniformly for \(|u|\le h_j/2\),
\[
  g'(m_j+u)
  =g'(m_j)+u g''(m_j)+u r_j(u),
  \qquad
  \sup_{|u|\le h_j/2}\gsnorm{r_j(u)}=o(1).
\]
Substitution into the integral gives
\begin{align*}
  s_j-g'(m_j)
  &=\frac{g''(m_j)}{h_j}
      \int_{-h_j/2}^{h_j/2}u\,du
    +\frac1{h_j}\int_{-h_j/2}^{h_j/2}u r_j(u)\,du\\
  &=\frac1{h_j}\int_{-h_j/2}^{h_j/2}u r_j(u)\,du,
\end{align*}
because the integral of the odd function \(u\) over the symmetric interval
is zero. Consequently,
\begin{equation}
\begin{aligned}
  \gsnorm{s_j-g'(m_j)}
  &\le\frac1{h_j}
      \sup_{|u|\le h_j/2}\gsnorm{r_j(u)}
      \int_{-h_j/2}^{h_j/2}|u|\,du\\
  &=\frac{h_j}{4}
      \sup_{|u|\le h_j/2}\gsnorm{r_j(u)}
   =o(h_j).
\end{aligned}
  \label{eq:gs-c2-secant}
\end{equation}

The map \(N(x)=x/\gsnorm{x}\) is continuously differentiable wherever
\(x\ne0\), with derivative
\[
  DN(x)=\frac1{\gsnorm x}
  \left(I-\frac{xx^\top}{\gsnorm x^2}\right).
\]
\Eqref{eq:gs-c2-secant} and continuity give
\(s_j\to g'(t_\star)\). Hence, for all sufficiently small spans,
\(\gsnorm{s_j-g'(m_j)}\le b/2\). Every point on the segment joining them has
the form \(x_\lambda=g'(m_j)+\lambda[s_j-g'(m_j)]\), with
\(0\le\lambda\le1\), and satisfies
\[
  \gsnorm{x_\lambda}
  \ge\gsnorm{g'(m_j)}-\lambda\gsnorm{s_j-g'(m_j)}
  \ge b/2.
\]
Thus the displayed derivative satisfies
\(\|DN(x_\lambda)\|_{\rm op}\le2/b\) along the entire segment. The
mean-value inequality therefore yields
\begin{align*}
  \left\|
  \frac{s_j}{\gsnorm{s_j}}-
  \frac{g'(m_j)}{\gsnorm{g'(m_j)}}
  \right\|
  &\le\frac2b\gsnorm{s_j-g'(m_j)}\\
  &=o(h_j).
\end{align*}
Therefore
\[
  u_j:=\frac{s_j}{\gsnorm{s_j}}
  =U(m_j)+o(h_j).
\]
In particular, both \(u_1\) and \(u_2\) converge to \(U(t_\star)\), so
their inner product is positive for all sufficiently small intervals. The
ordinary angle and the projective angle then have the same sine.

The midpoint separation is
\[
  d_h=m_2-m_1=\frac{h_1+h_2}{2}.
\]
The assumption \(h_1/h_2\to\eta\in(0,\infty)\) implies
\(d_h\asymp h_1\asymp h_2\). Hence
\(o(h_1)+o(h_2)=o(d_h)\). By the fundamental theorem of calculus applied
to \(U\),
\begin{align*}
  u_2-u_1
  &=U(m_2)-U(m_1)+o(d_h)\\
  &=\int_{m_1}^{m_2}U'(t)\,dt+o(d_h)\\
  &=d_h U'(t_\star)+o(d_h),
\end{align*}
where the last equality follows from continuity of \(U'\) and
\(m_1,m_2\to t_\star\).

The vector used by the chordal retraction is based at the latest
secant \(u_2\). Since \(P_{u_2}^\perp u_2=0\),
\begin{align*}
  P_{u_2}^\perp u_1
  &=P_{u_2}^\perp(u_1-u_2)\\
  &=-d_hP_{u_2}^\perp U'(t_\star)+o(d_h).
\end{align*}
Because \(u_2\to U(t_\star)\) and
\(U'(t_\star)\perp U(t_\star)\), division by \(d_h\) gives the vector
limit
\[
  -\frac{P_{u_2}^\perp u_1}{d_h}
  \longrightarrow U'(t_\star).
\]
The two projected residuals have the same norm:
\begin{align*}
  \gsnorm{P_{u_2}^\perp u_1}^2
  &=1-\gsip{u_1}{u_2}^2\\
  &=\gsnorm{P_{u_1}^\perp u_2}^2
  =\sin^2\theta_h.
\end{align*}
Taking norms in the vector limit therefore gives
\[
  \frac{\sin\theta_h}{d_h}
  \longrightarrow\gsnorm{U'(t_\star)}=\omega(t_\star).
\]
Finally,
\[
  \frac{d_h}{h_2}
  =\frac{h_1+h_2}{2h_2}
  =\frac{1+h_1/h_2}{2}
  \longrightarrow\frac{1+\eta}{2}.
\]
Multiplying both the vector limit and its norm limit by
\(d_h/h_2\) proves the two limits in
\eqref{eq:gs-continuum}.

For the transported tangent, use \(\rho_h=\delta_h/h_2\) to write
\begin{align*}
  -\rho_hP_{u_2}^\perp u_1
  &=\delta_h\left(-\frac{P_{u_2}^\perp u_1}{h_2}\right)\\
  &\longrightarrow
    \delta\frac{1+\eta}{2}U'(t_\star).
\end{align*}
For its squared magnitude,
\begin{align*}
  \rho_h^2\sin^2\theta_h
  &=\delta_h^2
    \left(\frac{\sin\theta_h}{h_2}\right)^2\\
  &\longrightarrow
    \delta^2\left(\frac{1+\eta}{2}\right)^2
    \omega(t_\star)^2.
\end{align*}
These are the two limits in
\eqref{eq:gs-exposure-limit}. The square in the GeoShrink exposure
is used because the transported directional deviation enters a squared-error
objective; the theorem does not claim that a remote-horizon asymptotic
uniquely determines this exponent.
\end{proof}

\begin{gscorollary}[Shape-regular bounds without a limiting gap ratio]
\label{cor:gs-shape-regular}
Keep the definitions, \(C^2\) regularity, nonvanishing derivative, and
convergence \(t_0,t_1,t_2\to t_\star\) from
Theorem~\ref{thm:gs-continuum}, but do not assume that \(h_1/h_2\)
converges. Instead, suppose that for some \(\Gamma\ge1\),
\[
  \Gamma^{-1}\le\frac{h_1}{h_2}\le\Gamma
\]
for all sufficiently small intervals. Then
\begin{equation}
  \frac{\sin\theta_h}{h_2}
  =\frac{1+h_1/h_2}{2}\,\omega(t_\star)+o(1).
  \label{eq:gs-shape-regular-expansion}
\end{equation}
If additionally \(\delta_h\to\delta\), then
\begin{align}
  \delta^2\left(\frac{1+\Gamma^{-1}}{2}\right)^2
  \omega(t_\star)^2
  &\le\liminf_h\rho_h^2\sin^2\theta_h,
  \label{eq:gs-shape-liminf}\\
  \limsup_h\rho_h^2\sin^2\theta_h
  &\le
  \delta^2\left(\frac{1+\Gamma}{2}\right)^2
  \omega(t_\star)^2.
  \label{eq:gs-shape-limsup}
\end{align}
\end{gscorollary}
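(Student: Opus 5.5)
The plan is to reuse the proof of Theorem~\ref{thm:gs-continuum} essentially verbatim up to the point where the ratio \(h_1/h_2\) was sent to its limit. That proof never used convergence of \(h_1/h_2\) until the final step. It only needed \(d_h\asymp h_1\asymp h_2\), so that the error terms \(o(h_1)+o(h_2)\) could be absorbed into \(o(d_h)\). Under the shape-regularity hypothesis \(\Gamma^{-1}\le h_1/h_2\le\Gamma\) this comparability still holds, with constants depending only on \(\Gamma\). Hence the intermediate conclusions remain valid:
\[
u_j=U(m_j)+o(h_j),\qquad u_2-u_1=d_hU'(t_\star)+o(d_h),
\]
and, using \(P_{u_2}^\perp u_2=0\) and \(u_2\to U(t_\star)\perp U'(t_\star)\),
\[
\frac{\sin\theta_h}{d_h}\to\omega(t_\star).
\]
I would restate these steps briefly, citing the corresponding lines of that proof, and note that nothing in them depends on a limiting ratio.

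Next I convert from \(d_h\) to \(h_2\). Write
\[
\frac{\sin\theta_h}{h_2}=\frac{d_h}{h_2}\cdot\frac{\sin\theta_h}{d_h},\qquad \frac{d_h}{h_2}=\frac{1+h_1/h_2}{2}.
\]
The factor \(d_h/h_2\) no longer converges, but it is bounded by \((1+\Gamma)/2\). Therefore
\[
\frac{d_h}{h_2}\Bigl(\frac{\sin\theta_h}{d_h}-\omega(t_\star)\Bigr)=O(1)\cdot o(1)=o(1),
\]
which is exactly \eqref{eq:gs-shape-regular-expansion}. The only point requiring care is this bounded-times-vanishing step, which is where the lower and upper ratio bounds are used.

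For the exposure bounds I write
\[
\rho_h^2\sin^2\theta_h=\delta_h^2\Bigl(\frac{\sin\theta_h}{h_2}\Bigr)^2=\delta_h^2\Bigl(\frac{1+h_1/h_2}{2}\,\omega(t_\star)+o(1)\Bigr)^2.
\]
Since \(\delta_h\to\delta\) and the bracket stays bounded, this equals
\[
\delta^2\Bigl(\frac{1+h_1/h_2}{2}\Bigr)^2\omega(t_\star)^2+o(1).
\]
The ratio \(h_1/h_2\) lies in \([\Gamma^{-1},\Gamma]\) for all small spans, and the map \(x\mapsto((1+x)/2)^2\) is increasing on \(x>0\). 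Taking \(\liminf\) and \(\limsup\) then gives \eqref{eq:gs-shape-liminf} and \eqref{eq:gs-shape-limsup}.

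I expect no real obstacle. The one step to check is that every \(o(\cdot)\) term in the proof of Theorem~\ref{thm:gs-continuum} is uniform, in the sense that it does not hide a dependence on the ratio converging. I would check this by noting that \(\sup_{|u|\le h_j/2}\|r_j(u)\|\to0\) depends only on the spans shrinking, and that \(o(h_1)\le o(\Gamma h_2)=o(h_2)\).
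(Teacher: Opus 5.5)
Your proposal is correct and matches the paper's proof step for step: you reuse \(\sin\theta_h/d_h\to\omega(t_\star)\) from Theorem~\ref{thm:gs-continuum}, multiply by the bounded factor \(d_h/h_2=(1+h_1/h_2)/2\), then square, multiply by \(\delta_h^2\), and sandwich that factor between \((1+\Gamma^{-1})/2\) and \((1+\Gamma)/2\). One small precision: the expansion step needs only the upper bound \(h_1/h_2\le\Gamma\), and \(o(h_1)+o(h_2)=o(d_h)\) in fact holds unconditionally because \(h_j\le 2d_h\); the lower bound \(\Gamma^{-1}\) enters only the \(\liminf\) estimate.
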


\begin{proof}
The proof above established
\(\sin\theta_h/d_h\to\omega(t_\star)\) before using convergence of
\(h_1/h_2\). The shape-regular bounds imply \(d_h\asymp h_2\), so the same
remainder remains \(o(1)\) after multiplication by
\(d_h/h_2=(1+h_1/h_2)/2\). This proves
\eqref{eq:gs-shape-regular-expansion}. Squaring, multiplying by
\(\delta_h^2\), and using
\[
  \frac{1+\Gamma^{-1}}2
  \le\frac{1+h_1/h_2}{2}
  \le\frac{1+\Gamma}2
\]
gives \eqref{eq:gs-shape-liminf} and
\eqref{eq:gs-shape-limsup}.
\end{proof}

\begin{gscorollary}[A quantitative local refinement]
\label{cor:gs-quantitative}
Under the conditions of Theorem~\ref{thm:gs-continuum}, write
\(h=\max\{h_1,h_2\}\). Suppose additionally that \(g\in C^3\) near
\(t_\star\) and
\(\max_{0\le j\le2}|t_j-t_\star|=O(h)\). Then
\begin{equation}
 \sin\theta_h=d_h\omega(t_\star)+O(h^2).
 \label{eq:gs-sine-rate}
\end{equation}
For bounded \(\delta_h\) and \(\rho_h=\delta_h/h_2\),
\begin{equation}
 \rho_h^2\sin^2\theta_h
 =\delta_h^2\left(\frac{1+h_1/h_2}{2}\right)^2
   \omega(t_\star)^2+O(\delta_h^2 h).
 \label{eq:gs-exposure-rate}
\end{equation}
\end{gscorollary}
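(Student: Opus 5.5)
I would rerun the proof of Theorem~\ref{thm:gs-continuum}, replacing each $o(\cdot)$ remainder by an explicit $O(\cdot)$ bound that uses the extra derivative.

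\textbf{Step 1: secant error.} In the midpoint expansion I would use $g\in C^3$ to write
\[
g'(m_j+u)=g'(m_j)+u\,g''(m_j)+O(u^2)
\]
uniformly on the neighbourhood $\mathcal N$. The odd term still integrates to zero, so \eqref{eq:gs-c2-secant} sharpens to $\gsnorm{s_j-g'(m_j)}=O(h_j^2)$. The mean-value bound for $N(x)=x/\gsnorm{x}$, with the same constant $2/b$, then gives $u_j=U(m_j)+O(h^2)$.

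\textbf{Step 2: difference of unit secants.} Since $g\in C^3$, the tangent $U$ is $C^2$ near $t_\star$. Taylor expansion yields
\[
U(m_2)-U(m_1)=\int_{m_1}^{m_2}U'(t)\,dt=d_hU'(t_\star)+O\bigl(d_h\max_j|m_j-t_\star|\bigr).
\]
The hypothesis $\max_j|t_j-t_\star|=O(h)$ gives $|m_j-t_\star|=O(h)$, and $d_h\le h$. Hence $u_2-u_1=d_hU'(t_\star)+O(h^2)$.

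\textbf{Step 3: projection.} Since $P_{u_2}^\perp u_2=0$, we have
\[
P_{u_2}^\perp u_1=-d_hP_{u_2}^\perp U'(t_\star)+O(h^2).
\]
Next I would compare $P_{u_2}^\perp U'(t_\star)$ with $U'(t_\star)$. We have $u_2=U(t_\star)+O(h)$ and $\gsip{U(t_\star)}{U'(t_\star)}=0$, so $\gsip{u_2}{U'(t_\star)}=O(h)$. It follows that $P_{u_2}^\perp U'(t_\star)=U'(t_\star)+O(h)$, and after multiplying by $d_h=O(h)$ the error is $O(h^2)$. Taking norms, together with the identity $\sin\theta_h=\gsnorm{P_{u_2}^\perp u_1}$ (valid once $\gsip{u_1}{u_2}>0$ for small spans), gives
\[
\sin\theta_h=\gsnorm{d_hU'(t_\star)+O(h^2)}=d_h\,\omega(t_\star)+O(h^2)
\]
by the triangle inequality for norms. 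This is \eqref{eq:gs-sine-rate}.

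\textbf{Step 4: exposure.} The main obstacle is that the theorem fixes only the limiting ratio, not $h_1/h_2$ itself. The exposure rate needs $h/h_2$ bounded, so I would use that $h_1/h_2\to\eta\in(0,\infty)$ forces $h\asymp h_2$ for small spans, which gives $h/h_2=O(1)$.

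Dividing \eqref{eq:gs-sine-rate} by $h_2$ then gives $\sin\theta_h/h_2=a_h\,\omega(t_\star)+O(h)$, where $a_h=(1+h_1/h_2)/2$ is bounded. Squaring gives $a_h^2\omega^2+O(h)$, because the cross term is $O(h)$ and the squared remainder is $O(h^2)$. Multiplying by the bounded factor $\delta_h^2$ produces \eqref{eq:gs-exposure-rate}, with the constant uniform since $\delta_h$ is bounded.
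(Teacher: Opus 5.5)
Your proposal is correct and follows essentially the same route as the paper: a $C^3$ midpoint Taylor expansion giving $u_j=U(m_j)+O(h^2)$, then the $C^2$ regularity of $U$ together with the localized midpoints to obtain $u_2-u_1=d_hU'(t_\star)+O(h^2)$, then $U'(t_\star)\perp U(t_\star)$ to replace $P_{u_2}^\perp U'(t_\star)$ by $U'(t_\star)$ up to $O(h)$, and finally $h_2\asymp h$ for the exposure bound. The only cosmetic difference is that you divide by $h_2$ before squaring, whereas the paper first squares to get $\sin^2\theta_h=d_h^2\omega(t_\star)^2+O(h^3)$ and then divides by $h_2^2$.
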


\begin{proof}
A midpoint Taylor expansion now gives
\(s_j=g'(m_j)+O(h_j^2)\). The normalized secants therefore obey
\(u_j=U(m_j)+O(h_j^2)\). Since \(U\in C^2\) and the observation
midpoints are \(O(h)\) from \(t_\star\),
\[
 u_2-u_1=d_h U'(t_\star)+O(h^2),\qquad
 u_1=U(t_\star)+O(h).
\]
Because \(P_{u_2}^\perp u_2=0\),
\begin{align*}
 P_{u_2}^\perp u_1
 &=P_{u_2}^\perp(u_1-u_2)\\
 &=-d_hP_{u_2}^\perp U'(t_\star)+O(h^2).
\end{align*}
Moreover, \(u_2=U(t_\star)+O(h)\) and
\(U'(t_\star)\perp U(t_\star)\). Since
\(P_{u_2}^\perp=I-u_2u_2^\top\),
\begin{align*}
 P_{u_2}^\perp U'(t_\star)-U'(t_\star)
 &=-u_2u_2^\top U'(t_\star)\\
 &=-u_2\gsip{u_2-U(t_\star)}{U'(t_\star)}\\
 &=O(h).
\end{align*}
Consequently,
\[
 P_{u_2}^\perp u_1
 =-d_hU'(t_\star)+O(h^2),
\]
where \(d_h=O(h)\). Taking norms and using \(d_h>0\) gives
\begin{align*}
 \sin\theta_h
 &=\gsnorm{P_{u_2}^\perp u_1}\\
 &=d_h\gsnorm{U'(t_\star)}+O(h^2)\\
 &=d_h\omega(t_\star)+O(h^2),
\end{align*}
which proves \eqref{eq:gs-sine-rate}. Squaring gives
\begin{align*}
 \sin^2\theta_h
 &=\left[d_h\omega(t_\star)+O(h^2)\right]^2\\
 &=d_h^2\omega(t_\star)^2+O(d_hh^2)+O(h^4)\\
 &=d_h^2\omega(t_\star)^2+O(h^3).
\end{align*}
Because \(h_1/h_2\to\eta\in(0,\infty)\), we have \(h_2\asymp h\).
Using \(\rho_h=\delta_h/h_2\),
\begin{align*}
 \rho_h^2\sin^2\theta_h
 &=\frac{\delta_h^2}{h_2^2}
   \left[d_h^2\omega(t_\star)^2+O(h^3)\right]\\
 &=\delta_h^2
   \left(\frac{d_h}{h_2}\right)^2
   \omega(t_\star)^2+O(\delta_h^2h)\\
 &=\delta_h^2
   \left(\frac{1+h_1/h_2}{2}\right)^2
   \omega(t_\star)^2+O(\delta_h^2h),
\end{align*}
which proves \eqref{eq:gs-exposure-rate}.
\end{proof}

The localization condition in this corollary is needed to state the
\(O(h^2)\) rate at a fixed \(t_\star\). Convergence of all observation
times to \(t_\star\) alone suffices for the main theorem's limit, but does
not specify their distance from \(t_\star\) relative to their spacing.
Neither result predicts a finite remote tangent from local information.
Applying the interpretation to a sampler additionally requires a smooth
model-output trace; the lemmas alone do not establish smoothness of an
accelerated trajectory.

\subsection{Complete Derivation of Chordal Projective Retention}
\label{app:gs-variational}

\begin{gstheorem}[Chordal transport and cache projection]
\label{thm:gs-projective-retention}
Under the chordal transport construction of Proposition~\ref{prop:gs-retention},
let \({\Delta v}={\Delta v_q}\), \({\Delta v^{-}}={\Delta v_q^{-}}\),
\(L_q=\gsspan({\Delta v})\), and
\(\widetilde L_{q,n}=\gsspan(\widetilde u_{q,n})\). The two projection
problems
\begin{align}
  y_{q,n}
  &=\arg\min_{y\in\widetilde L_{q,n}}\gsnorm{y-{\Delta v}}^2,
  \label{eq:gs-first-line-projection}\\
  \widehat {\Delta v}_{q,n}
  &=\arg\min_{x\in L_q}\gsnorm{x-y_{q,n}}^2
  \label{eq:gs-second-line-projection}
\end{align}
have unique solutions, and
\begin{equation}
  {
  \widehat {\Delta v}_{q,n}
  =P_u\widetilde P_{q,n}{\Delta v}
  =\operatorname{tr}(P_u\widetilde P_{q,n}){\Delta v}
  =\frac{1}{1+\rho_{q,n}^2\sin^2\theta_q}\,{\Delta v}.}
  \label{eq:gs-transport-projection}
\end{equation}
The same scalar obeys
\begin{equation}
  \begin{aligned}
  \omega_{q,n}
  &=\frac{\gsnorm{\widetilde P_{q,n}{\Delta v}}^2}{\gsnorm {\Delta v}^2}
   =\operatorname{tr}(P_u\widetilde P_{q,n})\\
  &=1-\frac12\left\|P_u-\widetilde P_{q,n}\right\|_F^2.
  \end{aligned}
  \label{eq:gs-line-fidelity}
\end{equation}
It is also the Hilbert-Schmidt compression coefficient,
\begin{equation}
  {
  \begin{aligned}
  \omega_{q,n}
  &=\arg\min_{\gamma\in\gsR}
    \left\|\widetilde P_{q,n}-\gamma P_u\right\|_F^2,\\
  P_u\widetilde P_{q,n}P_u&=\omega_{q,n}P_u.
  \end{aligned}}
  \label{eq:gs-operator-compression}
\end{equation}
Finally, for every \(\omega\in\gsR\),
\begin{equation}
  {
  (1+R_{q,n})
  \gsnorm{\omega {\Delta v}-\widetilde P_{q,n}{\Delta v}}^2
  =\gsnorm{{\Delta v}-\omega {\Delta v}}^2
   +\rho_{q,n}^2\gsnorm{P_{\Delta v^{-}}^\perp(\omega {\Delta v})}^2.}
  \label{eq:gs-projective-variational-identity}
\end{equation}
Therefore the unique cache-compatible coefficient is exactly
\eqref{eq:gs-retention}.
\end{gstheorem}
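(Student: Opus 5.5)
Throughout, I write \(u=u_{q,n}=\Delta v/\gsnorm{\Delta v}\), \(\widetilde u=\widetilde u_{q,n}\), \(P_u=uu^\top\), \(\widetilde P_{q,n}=\widetilde u\widetilde u^\top\), \(R=R_{q,n}=\rho_{q,n}^2\sin^2\theta_q\), and \(\xi=P_u^\perp u^-\). All claims reduce to the rank-one structure of the two line projectors, together with the identities already established in the proof of Proposition~\ref{prop:gs-retention}: \(\xi\perp u\), \(\gsnorm{\xi}^2=\sin^2\theta_q\), \(\gsnorm{u+\eta}^2=1+R\), and \(\gsip{u}{\widetilde u}^2=1/(1+R)\).

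\textbf{Step 1: the two projections.} Projection onto a line spanned by a unit vector is unique and equals the rank-one projector applied to the point. Hence \(y_{q,n}=\widetilde P_{q,n}\Delta v\) and \(\widehat{\Delta v}_{q,n}=P_u\widetilde P_{q,n}\Delta v\). Since \(\Delta v=\gsnorm{\Delta v}u\), I compute
\[
P_u\widetilde P_{q,n}\Delta v=\gsnorm{\Delta v}\,\gsip{u}{\widetilde u}^2\,u=\gsip{u}{\widetilde u}^2\,\Delta v .
\]
Since \(\operatorname{tr}(P_u\widetilde P_{q,n})=\gsip{u}{\widetilde u}^2=1/(1+R)\), this gives \eqref{eq:gs-transport-projection}.

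\textbf{Step 2: \eqref{eq:gs-line-fidelity} and \eqref{eq:gs-operator-compression}.} For the line-fidelity identity, \(\gsnorm{\widetilde P_{q,n}\Delta v}^2/\gsnorm{\Delta v}^2=\gsip{u}{\widetilde u}^2\). The Frobenius form follows from the expansion \(\|P_u-\widetilde P\|_F^2=2-2\gsip{u}{\widetilde u}^2\) in Appendix~\ref{app:gs-projectors}. For the compression identity, \(\|\widetilde P-\gamma P_u\|_F^2\) is a quadratic in \(\gamma\) with leading coefficient \(\|P_u\|_F^2=1\) and linear coefficient \(-2\operatorname{tr}(P_u\widetilde P)\), so its unique minimizer is \(\omega_{q,n}\). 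The sandwich identity follows from \(P_u\widetilde P P_u=u\,\gsip{u}{\widetilde u}^2u^\top\).

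\textbf{Step 3: the variational identity \eqref{eq:gs-projective-variational-identity}.} This is the main obstacle: it requires the explicit form of \(\widetilde P\Delta v\), and the \(\rho^2\) term must match exactly. I proceed in three parts.
\begin{itemize}
\item Write \(\widetilde u=(u+\eta)/\sqrt{1+R}\) with \(\eta=-\rho\xi\). Since \(\gsip{u+\eta}{u}=1\), this yields
\[
\widetilde P\Delta v=\frac{\gsnorm{\Delta v}}{1+R}(u+\eta).
\]
\item Set \(\omega\Delta v-\widetilde P\Delta v=\gsnorm{\Delta v}\left[\left(\omega-\tfrac{1}{1+R}\right)u-\tfrac{1}{1+R}\eta\right]\). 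Using \(\eta\perp u\), the left side of \eqref{eq:gs-projective-variational-identity} becomes
\[
\gsnorm{\Delta v}^2\left[(1+R)\left(\omega-\tfrac{1}{1+R}\right)^2+\tfrac{R}{1+R}\right].
\]
\item For the right side, \(\gsnorm{P_{\Delta v^-}^\perp u}^2=\sin^2\theta_q\) by \eqref{eq:gs-chi}. The right side is therefore \(\gsnorm{\Delta v}^2\left[(1-\omega)^2+R\omega^2\right]\).
\end{itemize}
Both sides are quadratics in \(\omega\) with leading coefficient \(1+R\). I will check that the linear and constant coefficients agree by direct expansion: on the left, the linear coefficient is \(-2\) and the constant is \(\tfrac{1}{1+R}+\tfrac{R}{1+R}=1\), which matches the right side.

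\textbf{Step 4: uniqueness of the cache-compatible coefficient.} The right side of \eqref{eq:gs-projective-variational-identity} is a strictly convex quadratic in \(\omega\). The left side vanishes exactly when \(\omega\Delta v=\widetilde P\Delta v\). By Step 1, this is equivalent to \(\omega=1/(1+R)\), which is also the unique minimizer of \((1-\omega)^2+R\omega^2\). This recovers \eqref{eq:gs-retention}.
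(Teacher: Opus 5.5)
Your route is essentially the paper's: rank-one projectors, \(\langle u,\widetilde u\rangle^2=1/(1+R)\), the trace, Frobenius and Hilbert--Schmidt identities, and a quadratic expansion for the variational identity. The only variation is in your Step 3. There you compute \(\widetilde P_{q,n}\Delta v=\frac{\|\Delta v\|_2}{1+R}(u+\eta)\) explicitly and split along \(u\perp\eta\). The paper never writes \(Q\Delta v\) out (with \(Q=\widetilde P_{q,n}\)); it uses \(\langle\Delta v,Q\Delta v\rangle=\|Q\Delta v\|_2^2=\|\Delta v\|_2^2/(1+R)\), which follows from symmetry and idempotence of \(Q\). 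Both reach the same completed square.

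There is, however, a false claim in Step 4. You say the left side of the variational identity vanishes exactly when \(\omega\Delta v=\widetilde P_{q,n}\Delta v\), and that ``by Step 1'' this happens at \(\omega=1/(1+R)\). Step 1 gave the round-trip projection \(P_u\widetilde P_{q,n}\Delta v=\Delta v/(1+R)\), not \(\widetilde P_{q,n}\Delta v=\Delta v/(1+R)\).

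Your own Step 3 shows that \(\widetilde P_{q,n}\Delta v\) carries the component \(-\|\Delta v\|_2\,\eta/(1+R)\), which is orthogonal to \(L_q\) and nonzero whenever \(R>0\). So \(\omega\Delta v=\widetilde P_{q,n}\Delta v\) has no solution. The left side is bounded below by \(\|\Delta v\|_2^2R/(1+R)\), which is exactly the constant term you computed. Conflating the first-leg projection \(\widetilde P_{q,n}\Delta v\) with the round trip \(P_u\widetilde P_{q,n}\Delta v\) misses the very distinction the theorem is built on.

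The repair is simply to delete that sentence. The identity, together with strict convexity of \((1-\omega)^2+R\omega^2\), already gives the unique minimizer \(\omega=1/(1+R)\). Equivalently, use your completed square \(\|\Delta v\|_2^2[(1+R)(\omega-\tfrac{1}{1+R})^2+\tfrac{R}{1+R}]\), which is the paper's certificate. Alternatively, note as the paper does that \(P_u\widetilde P_{q,n}\Delta v\) is the unique closest point of \(L_q\) to \(\widetilde P_{q,n}\Delta v\). You may also remark that \(1/(1+R)\in(0,1]\), so the constraint \(0\le\omega\le1\) is inactive.
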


This section proves Theorem~\ref{thm:gs-projective-retention} for the
deterministic closure stated in the main text. The closure has four explicit
parts: the scalar correction family in \eqref{eq:gs-family}, the
observable chordal tangent between the last two innovation lines, the declared
constant-tangent ambient continuation followed by normalization, and two
Euclidean orthogonal projections. Equivalently, the last operation is the
Hilbert-Schmidt compression of the transported line projector onto the
one-dimensional cache-operator family. Once these parts are fixed, the
coefficient and the quadratic identity follow without a stochastic model, a
fitted constant, or an assumed future output. Smoothness supplies only the
local scaling result in Theorem~\ref{thm:gs-continuum}; it does not uniquely
select this closure.

\paragraph{Step 1: orient the two line representatives.}
Abbreviate
\[
  {\Delta v}={\Delta v_q},\qquad {\Delta v^{-}}={\Delta v_q^{-}},\qquad
  A=\gsnorm {\Delta v}^2>0,
\]
and define the unit vectors
\[
  u=\frac{{\Delta v}}{\sqrt A},
  \qquad
  v=\frac{{\Delta v^{-}}}{\gsnorm {\Delta v^{-}}}.
\]
A projective line is unchanged when its unit representative changes sign.
Choose \(\bar v\in\{v,-v\}\) so that
\[
  \alpha=\gsip u{\bar v}=|\gsip uv|\in[0,1].
\]
When \(\gsip uv=0\), either sign may be used. Define the component of the
previous direction tangent to the unit sphere at \(u\) by
\begin{equation}
  \xi=P_u^\perp\bar v
  =(I-uu^\top)\bar v
  =\bar v-\alpha u.
  \label{eq:gs-app-tangent}
\end{equation}
Its orthogonality to \(u\) follows term by term:
\begin{align*}
  \gsip u\xi
  &=\gsip u{\bar v}-\alpha\gsip uu\\
  &=\alpha-\alpha=0.
\end{align*}
Its squared norm is
\begin{equation}
\begin{aligned}
  \gsnorm\xi^2
  &=\gsip{\bar v-\alpha u}{\bar v-\alpha u}\\
  &=\gsnorm{\bar v}^2
    -2\alpha\gsip u{\bar v}
    +\alpha^2\gsnorm u^2\\
  &=1-2\alpha^2+\alpha^2\\
  &=1-\alpha^2\\
  &=1-\gsip uv^2\\
  &=1-\frac{\gsip {\Delta v}{\Delta v^{-}}^2}{\gsnorm {\Delta v}^2\gsnorm {\Delta v^{-}}^2}\\
  &=\sin^2\theta_q.
\end{aligned}
\label{eq:gs-app-tangent-norm}
\end{equation}
This establishes the chordal-tangent identity used in
Proposition~\ref{prop:gs-retention}. In particular, \(\sqrt{\sin^2\theta_q}\) is
exactly the chordal tangent magnitude, not a postulated curvature variable.

\paragraph{Step 2: verify the normalization retraction.}
For a tangent vector \(\eta\in u^\perp\), the normalization map
\[
  \operatorname{Retr}_u(\eta)
  =\frac{u+\eta}{\gsnorm{u+\eta}}
\]
is a retraction on the unit sphere. Indeed,
\[
  \operatorname{Retr}_u(0)=u.
\]
For a scalar \(s\), orthogonality gives
\[
  \gsnorm{u+s\eta}^2
  =\gsnorm u^2+2s\gsip u\eta+s^2\gsnorm\eta^2
  =1+s^2\gsnorm\eta^2,
\]
so
\[
  \operatorname{Retr}_u(s\eta)
  =\frac{u+s\eta}{\sqrt{1+s^2\gsnorm\eta^2}}.
\]
Differentiating at \(s=0\) gives
\begin{align*}
  \left.\frac{d}{ds}\operatorname{Retr}_u(s\eta)\right|_{s=0}
  &=\left.
    \frac{\eta}{\sqrt{1+s^2\gsnorm\eta^2}}
    -\frac{s\gsnorm\eta^2(u+s\eta)}
           {(1+s^2\gsnorm\eta^2)^{3/2}}
    \right|_{s=0}\\
  &=\eta.
\end{align*}
Thus the map has the required first-order velocity. Taking
\(\eta=-\rho_{q,n}\xi\), and using
\eqref{eq:gs-app-tangent-norm}, gives
\begin{align*}
  \gsnorm{u-\rho_{q,n}\xi}^2
  &=\gsnorm u^2
    -2\rho_{q,n}\gsip u\xi
    +\rho_{q,n}^2\gsnorm\xi^2\\
  &=1+\rho_{q,n}^2\sin^2\theta_q.
\end{align*}
Set \(a_\rho=u-\rho_{q,n}\xi\). The preceding norm identity shows that
\(a_\rho\ne0\). For every unit vector \(y\), expansion followed by the
Cauchy-Schwarz inequality gives
\begin{align*}
  \gsnorm{y-a_\rho}^2
  &=\gsnorm y^2+\gsnorm{a_\rho}^2-2\gsip y{a_\rho}\\
  &=1+\gsnorm{a_\rho}^2-2\gsip y{a_\rho}\\
  &\ge1+\gsnorm{a_\rho}^2-2\gsnorm{a_\rho}.
\end{align*}
Equality in Cauchy-Schwarz holds with the required positive inner product
if and only if \(y=a_\rho/\gsnorm{a_\rho}\). Therefore the unique unit vector
closest to the affine first-order continuation is
\[
  \widetilde u
  =\operatorname{Retr}_u(-\rho_{q,n}\xi)
  =\arg\min_{\gsnorm y=1}\gsnorm{y-a_\rho}^2
  =\frac{u-\rho_{q,n}\xi}
         {\sqrt{1+\rho_{q,n}^2\sin^2\theta_q}}.
\]
It defines the orthogonal line projector
\[
  Q=\widetilde u\widetilde u^\top,
  \qquad Q^\top=Q,
  \qquad Q^2=Q.
\]
The last identity follows explicitly from
\[
  Q^2
  =\widetilde u\widetilde u^\top
   \widetilde u\widetilde u^\top
  =\widetilde u(\widetilde u^\top\widetilde u)
   \widetilde u^\top
  =\widetilde u\widetilde u^\top=Q.
\]
This establishes the normalization retraction used in
Proposition~\ref{prop:gs-retention}.

\paragraph{Step 3: solve the projection onto the extrapolated line.}
Let \(\widetilde L=\gsspan(\widetilde u)\). Every vector
\(y\in\widetilde L\) satisfies \(Qy=y\). Moreover,
\[
  \widetilde u^\top({\Delta v}-Q{\Delta v})
  =\widetilde u^\top {\Delta v}
   -\widetilde u^\top\widetilde u\widetilde u^\top {\Delta v}
  =0,
\]
so \({\Delta v}-Q{\Delta v}\) is orthogonal to \(\widetilde L\). For arbitrary
\(y\in\widetilde L\), write
\[
  {\Delta v}-y=({\Delta v}-Q{\Delta v})+(Q{\Delta v}-y).
\]
The first summand is orthogonal to \(\widetilde L\), while the second belongs
to \(\widetilde L\). The Pythagorean theorem therefore gives
\begin{equation}
  \gsnorm{{\Delta v}-y}^2
  =\gsnorm{{\Delta v}-Q{\Delta v}}^2+\gsnorm{Q{\Delta v}-y}^2
  \ge\gsnorm{{\Delta v}-Q{\Delta v}}^2.
  \label{eq:gs-first-projection-pythagoras}
\end{equation}
Equality holds exactly when \(y=Q{\Delta v}\). Hence
\[
  y_{q,n}=Q{\Delta v}
\]
is the unique solution of \eqref{eq:gs-first-line-projection}.

\paragraph{Step 4: project back to the cache-compatible line.}
Let \(L=\gsspan(u)=\gsspan({\Delta v})\) and \(P=uu^\top\). The same orthogonal
projection argument, now applied to \(Q{\Delta v}\) and the closed subspace \(L\),
gives the unique solution
\[
  \widehat {\Delta v}_{q,n}=PQ{\Delta v}
\]
of \eqref{eq:gs-second-line-projection}. It remains to evaluate this
vector. Orthogonality of \(u\) and \(\xi\) gives
\begin{align}
  \gsip u{\widetilde u}
  &=\frac{\gsip u{u-\rho_{q,n}\xi}}
          {\sqrt{1+\rho_{q,n}^2\sin^2\theta_q}}\nonumber\\
  &=\frac{\gsip uu-\rho_{q,n}\gsip u\xi}
          {\sqrt{1+\rho_{q,n}^2\sin^2\theta_q}}\nonumber\\
  &=\frac{1}{\sqrt{1+\rho_{q,n}^2\sin^2\theta_q}}.
  \label{eq:gs-line-overlap}
\end{align}
Because \({\Delta v}=\sqrt A\,u\), first applying \(Q\) gives
\begin{align*}
  Q{\Delta v}
  &=\widetilde u\widetilde u^\top(\sqrt A\,u)\\
  &=\sqrt A\,\gsip{\widetilde u}{u}\widetilde u.
\end{align*}
Applying \(P\) gives
\begin{align*}
  PQ{\Delta v}
  &=uu^\top
    \left(\sqrt A\,\gsip{\widetilde u}{u}\widetilde u\right)\\
  &=\sqrt A\,\gsip{\widetilde u}{u}
    \gsip u{\widetilde u}u\\
  &=\sqrt A\,\gsip u{\widetilde u}^{2}u\\
  &=\frac{\sqrt A}{1+\rho_{q,n}^2\sin^2\theta_q}u\\
  &=\frac{1}{1+\rho_{q,n}^2\sin^2\theta_q}{\Delta v}.
\end{align*}
The transported component has squared norm
\begin{align*}
  \gsnorm{Q{\Delta v}}^2
  &=(Q{\Delta v})^\top(Q{\Delta v})\\
  &={\Delta v}^\top Q^\top Q{\Delta v}\\
  &={\Delta v}^\top Q{\Delta v}\\
  &=A\,u^\top\widetilde u\widetilde u^\top u\\
  &=A\,\gsip u{\widetilde u}^{2}.
\end{align*}
Dividing by \(\gsnorm {\Delta v}^2=A\) shows that the retained energy fraction is
\(\gsip u{\widetilde u}^{2}\). The same scalar is the overlap of the two
rank-one projectors. Cyclicity of the trace gives
\begin{align*}
  \operatorname{tr}(PQ)
  &=\operatorname{tr}
    (uu^\top\widetilde u\widetilde u^\top)\\
  &=\operatorname{tr}
    (u^\top\widetilde u\widetilde u^\top u)\\
  &=\gsip u{\widetilde u}^{2}\\
  &=\frac{1}{1+\rho_{q,n}^2\sin^2\theta_q}.
\end{align*}
Moreover, since \(P^2=P\), \(Q^2=Q\), and both projectors have trace one,
\begin{align*}
  \left\|P-Q\right\|_F^2
  &=\operatorname{tr}[(P-Q)^\top(P-Q)]\\
  &=\operatorname{tr}(P^2)+\operatorname{tr}(Q^2)
    -2\operatorname{tr}(PQ)\\
  &=2-2\operatorname{tr}(PQ).
\end{align*}
Rearrangement proves \eqref{eq:gs-line-fidelity}. This also proves
\eqref{eq:gs-transport-projection}.

The same scalar follows from an operator-space least-squares problem. Equip
matrices with the Hilbert-Schmidt inner product
\[
  \langle A,B\rangle_F=\operatorname{tr}(A^\top B).
\]
Because \(P\) and \(Q\) are symmetric rank-one projectors,
\[
  \|P\|_F^2=\operatorname{tr}(P^2)=\operatorname{tr}(P)=1,
  \qquad
  \|Q\|_F^2=1.
\]
For any \(\gamma\in\gsR\), bilinearity of this inner product gives
\begin{align*}
  \|Q-\gamma P\|_F^2
  &=\|Q\|_F^2
    -2\gamma\langle Q,P\rangle_F
    +\gamma^2\|P\|_F^2\\
  &=1-2\gamma\operatorname{tr}(PQ)+\gamma^2\\
  &=\left[\gamma-\operatorname{tr}(PQ)\right]^2
    +1-\operatorname{tr}(PQ)^2.
\end{align*}
The final term does not depend on \(\gamma\), so the unique minimizer is
\[
  \gamma^\star=\operatorname{tr}(PQ)=\omega_{q,n}.
\]
Moreover,
\begin{align*}
  PQP
  &=uu^\top Q uu^\top\\
  &=\left(u^\top Q u\right)uu^\top\\
  &=\operatorname{tr}(PQ)P\\
  &=\omega_{q,n}P.
\end{align*}
Since \(P{\Delta v}={\Delta v}\), applying this compressed operator to the latest innovation
gives
\[
  PQP{\Delta v}=\omega_{q,n}P{\Delta v}=\omega_{q,n}{\Delta v},
\]
while \(PQP{\Delta v}=PQ{\Delta v}\). This proves every statement in
\eqref{eq:gs-operator-compression} and shows that the sequential
vector projections are the action of the Hilbert-Schmidt compression on the
cache line.

Therefore the scalar coefficient \(\omega\), the retained-energy fraction, and the
scalar projection loss are unchanged by the sign of \(\bar v\), by the sign
of \(\rho_{q,n}\), or by reversing either observed line. When
\(\gsip uv=0\), the two admissible choices of \(\bar v\) may produce
different oriented transported lines, but they have the same squared line
overlap with \(L\) and hence the same coefficient.

\paragraph{Step 5: derive the scalar least-squares problem.}
Every cache-compatible correction has the form \(x=\omega {\Delta v}\). Since
\(P Q{\Delta v}\) is the orthogonal projection of \(Q{\Delta v}\) onto \(L\), it is also the
unique minimizer of
\[
  \min_{\omega\in\gsR}\gsnorm{\omega {\Delta v}-Q{\Delta v}}^2.
\]
For completeness, this follows by direct expansion. Put
\[
  R=\rho_{q,n}^2\sin^2\theta_q,
  \qquad
  a^2=\gsip u{\widetilde u}^{2}=\frac{1}{1+R}.
\]
Because \(Q\) is symmetric and idempotent,
\begin{align*}
  \gsip {\Delta v}{Q{\Delta v}}
  &={\Delta v}^\top Q{\Delta v},\\
  \gsnorm{Q{\Delta v}}^2
  &=(Q{\Delta v})^\top(Q{\Delta v})
    ={\Delta v}^\top Q^\top Q{\Delta v}
    ={\Delta v}^\top Q^2{\Delta v}
    ={\Delta v}^\top Q{\Delta v}.
\end{align*}
\Eqref{eq:gs-line-overlap} further gives
\[
  {\Delta v}^\top Q{\Delta v}
  =A\,u^\top\widetilde u\widetilde u^\top u
  =Aa^2
  =\frac{A}{1+R}.
\]
Consequently,
\begin{align*}
  \gsnorm{\omega {\Delta v}-Q{\Delta v}}^2
  &=\gsnorm{\omega {\Delta v}}^2-2\gsip{\omega {\Delta v}}{Q{\Delta v}}+\gsnorm{Q{\Delta v}}^2\\
  &=A\omega^2-2A\omega a^2+Aa^2\\
  &=A\left[(\omega-a^2)^2+a^2-a^4\right].
\end{align*}
The last term is independent of \(\omega\), and \(A>0\), so the unique minimizer
is
\[
  \omega=a^2=\frac{1}{1+R}.
\]
Since \(R\ge0\), this value lies in \((0,1]\), so imposing the bounded family
\(0\le \omega\le1\) does not change the solution.

\paragraph{Step 6: prove the exact variational identity.}
Multiply the preceding expansion before completing the square by \(1+R\):
\begin{align}
  &(1+R)\gsnorm{\omega {\Delta v}-Q{\Delta v}}^2\nonumber\\
  &\quad=(1+R)A
    \left(\omega^2-\frac{2\omega}{1+R}+\frac{1}{1+R}\right)\nonumber\\
  &\quad=A\left[(1+R)\omega^2-2\omega+1\right].
  \label{eq:gs-projection-loss-expanded}
\end{align}
The first term on the right side of
\eqref{eq:gs-projective-variational-identity} is
\[
  \gsnorm{{\Delta v}-\omega {\Delta v}}^2=(1-\omega)^2A=A(1-2\omega+\omega^2).
\]
Linearity of \(P_{\Delta v^{-}}^\perp\) and
\eqref{eq:gs-projector-identity} give
\begin{align*}
  \rho_{q,n}^2\gsnorm{P_{\Delta v^{-}}^\perp(\omega {\Delta v})}^2
  &=\rho_{q,n}^2\omega^2\gsnorm{P_{\Delta v^{-}}^\perp {\Delta v}}^2\\
  &=\rho_{q,n}^2\omega^2A\sin^2\theta_q\\
  &=AR\omega^2.
\end{align*}
Adding these two expressions yields
\begin{align*}
  &\gsnorm{{\Delta v}-\omega {\Delta v}}^2
   +\rho_{q,n}^2\gsnorm{P_{\Delta v^{-}}^\perp(\omega {\Delta v})}^2\\
  &\quad=A(1-2\omega+\omega^2)+AR\omega^2\\
  &\quad=A\left[(1+R)\omega^2-2\omega+1\right],
\end{align*}
which equals \eqref{eq:gs-projection-loss-expanded}. This proves
\eqref{eq:gs-projective-variational-identity} exactly. In particular,
the objective on the right-hand side of
\eqref{eq:gs-projective-variational-identity}
is a positive rescaling of one 
euclidean projection loss, and its two displayed terms are its exact scalar
expansion. They need not be interpreted as orthogonal components of the
original output-space residual.
Completing the square gives the equivalent certificate
\begin{equation}
  \mathcal J_{q,n}(\omega)
  =A(1+R)\left(\omega-\frac1{1+R}\right)^2
   +\frac{AR}{1+R}.
  \label{eq:gs-objective-square}
\end{equation}
The nonnegative square vanishes only at the GeoShrink coefficient, proving
optimality and uniqueness a second way.

\paragraph{Sensitivity to reference amplitude and tangent scale.}
The same projection geometry identifies precisely how two deliberate method
changes alter the coefficient. Let \(\beta\in\gsR\) multiply the reference
innovation, and rescale the chordal tangent by \(\sqrt\kappa\), where
\(\kappa\ge0\). Define
\[
  \widetilde u_{\kappa}
  =\frac{u-\sqrt\kappa\rho_{q,n}\xi}
         {\sqrt{1+\kappa R}},
  \qquad
  Q_{\kappa}=\widetilde u_{\kappa}\widetilde u_{\kappa}^\top.
\]
Repeating \eqref{eq:gs-line-overlap} gives
\[
  \gsip u{\widetilde u_{\kappa}}^2=\frac{1}{1+\kappa R}.
\]
Therefore
\begin{equation}
  \arg\min_{\omega\in\gsR}\gsnorm{\omega {\Delta v}-\beta Q_{\kappa}{\Delta v}}^2
  =\frac{\beta}{1+\kappa R}.
  \label{eq:gs-general-reference-solution}
\end{equation}
If the candidate family is constrained to \(0\le \omega\le1\), the solution is
its Euclidean projection onto that interval:
\begin{equation}
  \omega^\star_{[0,1]}
  =\gsclip\!\left(\frac{\beta}{1+\kappa R}\right).
  \label{eq:gs-general-reference-clipped}
\end{equation}
The corresponding exact identity is
\begin{align}
  &(1+\kappa R)\gsnorm{\omega {\Delta v}-\beta Q_{\kappa}{\Delta v}}^2\nonumber\\
  &\quad=A(\beta-\omega)^2
    +\kappa\rho_{q,n}^2\gsnorm{P_{\Delta v^{-}}^\perp(\omega {\Delta v})}^2.
  \label{eq:gs-general-reference-objective}
\end{align}
This is a counterfactual sensitivity calculation, rather than an additional
assumption used by GeoShrink. GeoShrink uses the latest observed innovation
without amplitude rescaling and the unscaled ambient chordal tangent, hence
\((\beta,\kappa)=(1,1)\). A
time-scaled reference sets \(\beta=\lambda_{q,n}\). A value
\(\kappa\ne1\) changes the tangent transport scale and is a different method;
it is not required to derive the implemented rule.

\paragraph{Relation to intrinsic projective geometry.}
Let
\[
  \theta_q=\arccos|\gsip uv|\in[0,\pi/2].
\]
Then
\[
  \gsnorm\xi=\sin\theta_q
  =\theta_q+O(\theta_q^3)
  \qquad\text{as }\theta_q\to0.
\]
Thus the chordal tangent and the intrinsic projective logarithm agree to first
order in the fine-anchor regime covered by Theorem~\ref{thm:gs-continuum}.
They differ at finite angles. If the intrinsic logarithm is followed by the
same normalization retraction and round-trip projection, define, for
\(\theta_q>0\), the tangent pointing away from the aligned previous line by
\[
  \zeta=-\frac{\theta_q}{\sin\theta_q}\xi.
\]
Because \(\xi\perp u\) and \(\gsnorm\xi=\sin\theta_q\),
\[
  \zeta\perp u,
  \qquad
  \gsnorm\zeta=\theta_q.
\]
The normalization retraction therefore gives
\[
  u_{\rm log}
  =\frac{u+\rho_{q,n}\zeta}
         {\sqrt{1+\rho_{q,n}^2\theta_q^2}},
  \qquad
  \gsip u{u_{\rm log}}^2
  =\frac{1}{1+\rho_{q,n}^2\theta_q^2}.
\]
Step 4 showed that the round-trip projection coefficient equals this squared
line overlap. Hence
\[
  \omega_{\rm log}=\frac{1}{1+\rho_{q,n}^2\theta_q^2}.
\]
At \(\theta_q=0\), the same formula follows by continuity and equals one.

If instead one transports by the exact projective exponential along the
same local geodesic branch, its unit-sphere representative is
\[
  u_{\rm exp}
  =\cos(\rho_{q,n}\theta_q)u
   +\sin(\rho_{q,n}\theta_q)\frac{\zeta}{\theta_q}.
\]
Orthogonality of \(u\) and \(\zeta\) gives
\[
  \gsip u{u_{\rm exp}}
  =\cos(\rho_{q,n}\theta_q).
\]
Applying the same round-trip projection identity then yields
\[
  \omega_{\rm exp}=\cos^2(\rho_{q,n}\theta_q),
\]
for horizons that remain on that branch. These are different finite rules.
We call the present construction chordal or extrinsic throughout because its
normalization retraction and round-trip projection give exactly
\((1+\rho_{q,n}^2\sin^2\theta_q)^{-1}\) in the zero-regularization model.

\paragraph{Degenerate cases and scope.}
If \({\Delta v}=0\), every vector \(\omega {\Delta v}\) is zero, so the correction is zero and the
prediction is \(v_n^{(1)}\); no current line or scalar coefficient needs to be
defined. If \({\Delta v^{-}}=0\), the preceding line and its chordal tangent are
unidentified, so the ideal theorem does not assign a geometric coefficient;
a numerical fallback in this event is an operational convention. If
\(\sin^2\theta_q=0\), then \(\xi=0\), \(\widetilde L=L\), and
\(\omega=1\). If \(\rho_{q,n}=0\), the retracted line also equals \(L\) and
\(\omega=1\).

The theorem establishes exact optimality for the observable chordal
transport-and-projection closure. It does not identify the true future line.
Proposition~\ref{prop:gs-nonidentifiability} shows why no such identification
follows from finite anchor data and smoothness alone; the causal bound in
\eqref{eq:gs-causal-bound} states separately what standard
\(C^{1,1}\) regularity does guarantee about the actual trace error.

\subsection{Blockwise Direct-Sum Extension}
\label{app:gs-blockwise}

Let the output space be an orthogonal direct sum
\(\mathcal H=\bigoplus_{b=1}^{G}\mathcal H_b\), and write
\({\Delta v}=({\Delta v}^{(1)},\ldots,{\Delta v}^{(G)})\). A modality, a batch element,
or any orthogonal coordinate/token group may define one block. No statistical
independence assumption is required. For block-specific
horizons \(\rho_b\), previous innovations \({\Delta v^{(b),-}}\), and coefficients \(\omega_b\),
the ideal objective below assumes \({\Delta v^{(b),-}}\ne0\) for every block. If
\({\Delta v^{(b),-}}=0\), the ideal projector term is undefined and the residual convention
of Appendix~\ref{app:gs-numerics} should be used. If
\(A_b=\gsnorm{{\Delta v}^{(b)}}^2=0\), the corresponding block correction is
zero for every coefficient, so that coefficient is immaterial. Under the
nondegenerate ideal convention, consider
\begin{equation}
  \mathcal J_{\rm block}(\omega_1,\ldots,\omega_G)
  =\sum_{b=1}^{G}\left[
    \gsnorm{{\Delta v}^{(b)}-\omega_b{\Delta v}^{(b)}}^2
    +\rho_b^2\gsnorm{P_{{\Delta v^{(b),-}}}^\perp(\omega_b{\Delta v}^{(b)})}^2
  \right].
  \label{eq:gs-block-objective}
\end{equation}

\begin{gsproposition}[Blockwise GeoShrink]
\label{prop:gs-blockwise}
For every block with
\(A_b=\gsnorm{{\Delta v}^{(b)}}^2>0\) and \({\Delta v^{(b),-}}\ne0\), the unique minimizer of
\eqref{eq:gs-block-objective} is
\begin{equation}
  \omega_b^\star
  =\frac{A_b}{A_b+\rho_b^2N_b}
  =\frac{1}{1+\rho_b^2\sin^2\theta_b},
  \quad
  N_b=\gsnorm{P_{{\Delta v^{(b),-}}}^\perp{\Delta v}^{(b)}}^2,
  \quad
  \sin^2\theta_b=N_b/A_b.
  \label{eq:gs-block-solution}
\end{equation}
If a single shared coefficient \(\omega\) is imposed across all blocks, its
unique minimizer, provided \(\sum_b A_b>0\), is
\begin{equation}
  \omega_{\rm shared}^\star
  =\frac{\sum_b A_b}
         {\sum_b A_b+\sum_b\rho_b^2N_b}.
  \label{eq:gs-shared-solution}
\end{equation}
\end{gsproposition}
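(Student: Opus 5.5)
The plan is to exploit the additive separability of \eqref{eq:gs-block-objective}. Each bracketed summand depends only on its own coefficient \(\omega_b\), so the joint minimization over \((\omega_1,\ldots,\omega_G)\) splits into \(G\) independent scalar problems. The minimizer is unique exactly when each nondegenerate scalar problem has a unique minimizer. Blocks with \(A_b=0\) contribute a constant and are immaterial, as the surrounding text already notes.

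For a single block with \(A_b>0\) and \({\Delta v^{(b),-}}\ne0\), I will expand the summand using linearity of \(P^\perp_{{\Delta v^{(b),-}}}\):
\[
\|{\Delta v}^{(b)}-\omega_b{\Delta v}^{(b)}\|_2^2+\rho_b^2\|P^\perp_{{\Delta v^{(b),-}}}(\omega_b{\Delta v}^{(b)})\|_2^2
=A_b(1-\omega_b)^2+\rho_b^2N_b\,\omega_b^2 .
\]
This is exactly the scalar expansion from Step 6 of the proof of Theorem~\ref{thm:gs-projective-retention}, applied blockwise. The quadratic has leading coefficient \(A_b+\rho_b^2N_b\ge A_b>0\), so it is strictly convex. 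Setting the derivative to zero gives \(\omega_b^\star=A_b/(A_b+\rho_b^2N_b)\) as the unique minimizer. Dividing numerator and denominator by \(A_b\), and using \eqref{eq:gs-chi} (equivalently \eqref{eq:gs-projector-identity}) to identify \(N_b/A_b=\sin^2\theta_b\), gives the second form in \eqref{eq:gs-block-solution}. Alternatively, I can complete the square as in \eqref{eq:gs-objective-square} to read off uniqueness directly.

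For the shared coefficient, I substitute \(\omega_b=\omega\) for all \(b\), which gives
\[
\Bigl(\sum_b A_b\Bigr)(1-\omega)^2+\Bigl(\sum_b\rho_b^2N_b\Bigr)\omega^2 .
\]
Its leading coefficient is at least \(\sum_b A_b>0\), so it is again strictly convex. The stationarity condition yields \eqref{eq:gs-shared-solution}. Here blocks with \(A_b=0\) also have \(N_b=0\), so including them in the sums is harmless.

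The only delicate point is bookkeeping rather than analysis. The shared formula needs \(N_b\) to be defined for every block with \(A_b>0\), which means invoking the standing assumption \({\Delta v^{(b),-}}\ne0\) or the residual convention referenced in the text. I will state explicitly that degenerate blocks contribute zero to both sums.
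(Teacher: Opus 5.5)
Your proposal is correct and follows essentially the same route as the paper: you split the objective into independent per-block quadratics $A_b(1-\omega_b)^2+\rho_b^2N_b\omega_b^2$ using the Step~6 expansion, note that the positive leading coefficient makes each strictly convex, solve the first-order condition, and treat the shared coefficient by summing the quadratics. Your explicit remark that blocks with $A_b=0$ also have $N_b=0$, since $N_b\le A_b$, so they do not affect either sum, is a small piece of bookkeeping the paper leaves implicit.
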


\begin{proof}
Orthogonality of the direct sum makes the squared norm of a concatenated
vector equal to the sum of its block squared norms. Therefore
\eqref{eq:gs-block-objective} contains no cross-block terms. For
each block, the expansion in \eqref{eq:gs-projective-variational-identity}
gives
\[
  \mathcal J_b(\omega_b)
  =A_b(1-\omega_b)^2+\rho_b^2N_b\omega_b^2.
\]
Differentiation gives
\[
  \mathcal J_b'(\omega_b)
  =2(A_b+\rho_b^2N_b)\omega_b-2A_b,
\]
and the second derivative
\(2(A_b+\rho_b^2N_b)\) is positive. Solving the first-order equation proves
\eqref{eq:gs-block-solution}. If all blocks share \(\omega\), summing
the displayed quadratic over \(b\) gives
\[
  \mathcal J_{\rm shared}(\omega)
  =\left(\sum_bA_b\right)(1-\omega)^2
   +\omega^2\sum_b\rho_b^2N_b.
\]
Its derivative is
\[
  2\left(\sum_bA_b+\sum_b\rho_b^2N_b\right)\omega
  -2\sum_bA_b.
\]
Setting it to zero proves \eqref{eq:gs-shared-solution}; the
positive second derivative proves uniqueness.
\end{proof}

\subsection{Innovation Retention and Time-Scaled Extrapolation}
\label{app:gs-time-scaling}

The same dimensionless time displacement can play two different roles.
GeoShrink uses the update-midpoint displacement \(\rho_{q,n}\) only as the
strength of the transported-turning penalty. An affine extrapolator for the
model output at the stage time \(\sigma_n\) would instead multiply the latest
innovation by
\begin{equation}
 \lambda_{q,n}
 =\frac{\sigma_n-\sigma_n^{(1)}}
        {\sigma_n^{(1)}-\sigma_n^{(2)}}
 \label{eq:gs-time-ratio}
\end{equation}
to account for the target displacement. The two quantities share the same
anchor-span denominator, but \(\rho_{q,n}\) uses the solver-interval midpoint
whereas \(\lambda_{q,n}\) uses the field-evaluation time. More fundamentally,
GeoShrink uses the displacement to control confidence and deliberately does
not use it as an extrapolation amplitude.

For example, let \(g(t)=b+t v\), \(v\ne0\), with ordered anchor times
\(t_{q-2}<t_{q-1}<t_q\). Both innovations are parallel to \(v\), so the
ideal \(\sin^2\theta_q=0\). GeoShrink then predicts
\[
 \widehat g(t)=g(t_q)+g(t_q)-g(t_{q-1})
             =g(t_q)+(t_q-t_{q-1})v.
\]
This equals \(g(t)\) only when
\(t-t_q=t_q-t_{q-1}\). The ideal predictor therefore does not reproduce
an arbitrary affine trace at every target time. This is a statement about
affine exactness, not a proof that the method cannot converge when all
anchor gaps and forecast distances tend to zero.

A time-scaled comparison rule would be
\begin{equation}
 \widehat v_n^{\rm time}
 =v_n^{(1)}+\frac{\lambda_{q,n}{\Delta v_q}}{1+R_{q,n}},
 \label{eq:gs-time-comparison}
\end{equation}
which is affine-exact in the ideal zero-turning case. It belongs to a
different correction family, based on
\(\lambda_{q,n}{\Delta v_q}\). For \(\lambda_{q,n}>1\), it can exceed one
observed innovation even though its retention coefficient is at most one.
It is not the algorithm specified in this manuscript.

Affine exactness by itself does not establish a smaller error on nonlinear
generative trajectories. A comparison between the two rules should keep
the exact-query schedule fixed. For either family, the loss decomposition
in Proposition~\ref{prop:gs-loss} applies after choosing the corresponding
nominal correction vector. Purely tangential acceleration remains
invisible to the projective statistic: any scalar trace \(g(t)=b(t)v\)
has zero ideal projective angle whenever its two innovations are nonzero.

\subsection{Causal Secant Remainder and Retention Bias}
\label{app:gs-secant}

Under the \(C^{1,1}\) assumptions stated below, let
\({\Delta v}=g(a)-g(a-h)\), \(\lambda=\delta/h\), and let \(\omega\in\gsR\). Then
\begin{equation}
  {
  \gsnorm{g(a+\delta)-g(a)-\omega{\Delta v}}
  \le \frac{M}{2}\delta(\delta+h)
       +|\lambda-\omega|\gsnorm{{\Delta v}}.}
  \label{eq:gs-causal-bound}
\end{equation}

We now prove \eqref{eq:gs-causal-bound} directly from the standard
\(C^{1,1}\) regularity used in ODE error analysis. Let \(g\) take values in
a finite-dimensional Euclidean space, or more generally in a Hilbert space,
and suppose
\begin{equation}
  \gsnorm{g'(t)-g'(s)}\le M|t-s|
  \qquad
  \text{for all }s,t\in[a-h,a+\delta].
  \label{eq:gs-lipschitz-derivative}
\end{equation}
Assume \(h>0\) and \(\delta\ge0\), and define
\[
  {\Delta v}=g(a)-g(a-h),
  \qquad
  \lambda=\frac{\delta}{h}.
\]
The fundamental theorem of calculus gives
\begin{align}
  {\Delta v}
  &=\int_{a-h}^{a}g'(u)\,du
    =\int_0^h g'(a-u)\,du,
  \label{eq:gs-backward-secant-integral}\\
  g(a+\delta)-g(a)
  &=\int_a^{a+\delta}g'(u)\,du
    =\int_0^\delta g'(a+s)\,ds.
  \label{eq:gs-forward-integral}
\end{align}
Multiply \eqref{eq:gs-backward-secant-integral} by
\(\lambda=\delta/h\), subtract it from
\eqref{eq:gs-forward-integral}, and add and subtract
\(\delta g'(a)\). This yields the exact identity
\begin{align}
  &g(a+\delta)-g(a)-\lambda{\Delta v}\nonumber\\
  &\quad=
  \int_0^\delta\bigl[g'(a+s)-g'(a)\bigr]\,ds
  +\frac{\delta}{h}\int_0^h
       \bigl[g'(a)-g'(a-u)\bigr]\,du.
  \label{eq:gs-secant-remainder-identity}
\end{align}
Apply the triangle inequality for vector-valued integrals and then
\eqref{eq:gs-lipschitz-derivative} to each term:
\begin{align*}
  &\gsnorm{g(a+\delta)-g(a)-\lambda{\Delta v}}\\
  &\quad\le
  \int_0^\delta\gsnorm{g'(a+s)-g'(a)}\,ds
  +\frac{\delta}{h}\int_0^h
       \gsnorm{g'(a)-g'(a-u)}\,du\\
  &\quad\le
  \int_0^\delta Ms\,ds
  +\frac{\delta}{h}\int_0^h Mu\,du\\
  &\quad=
  \frac{M\delta^2}{2}+\frac{M\delta h}{2}\\
  &\quad=
  \frac{M}{2}\delta(\delta+h).
\end{align*}
This is the causal secant remainder: it uses only past observations and is
exact for every affine trace.

For an arbitrary correction coefficient \(\omega\), add and subtract
\(\lambda{\Delta v}\):
\begin{align*}
  g(a+\delta)-g(a)-\omega{\Delta v}
  &=\bigl[g(a+\delta)-g(a)-\lambda{\Delta v}\bigr]
    +(\lambda-\omega){\Delta v}.
\end{align*}
Taking norms, using the triangle inequality, and substituting the bound just
proved gives
\[
  \gsnorm{g(a+\delta)-g(a)-\omega{\Delta v}}
  \le\frac{M}{2}\delta(\delta+h)
     +|\lambda-\omega|\gsnorm{\Delta v},
\]
which is \eqref{eq:gs-causal-bound}. No probabilistic independence,
Gaussian noise, monotone curvature, or assumed relation between tangential
and normal acceleration is used.

For the ideal GeoShrink coefficient \(\omega=1/(1+R)\), another triangle
inequality on the real line gives
\begin{align*}
  |\lambda-\omega|
  &\le|\lambda-1|+|1-\omega|\\
  &=|\lambda-1|+\frac{R}{1+R}.
\end{align*}
Therefore
\begin{equation}
  {
  \gsnorm{g(a+\delta)-g(a)-\omega{\Delta v}}
  \le\frac{M}{2}h^2\lambda(1+\lambda)
  +\left(|\lambda-1|+\frac{R}{1+R}\right)\gsnorm{\Delta v}.}
  \label{eq:gs-geoshrink-smoothness-bound}
\end{equation}
The three displayed contributions have separate meanings: local secant
truncation, the one-span reference mismatch, and shrinkage bias. When
\(g\) is affine, \(M=0\); when it is also collinear, \(R=0\). The remaining
error is then exactly \(|\lambda-1|\gsnorm{\Delta v}\), recovering the affine
counterexample in Appendix~\ref{app:gs-time-scaling}.

\subsection{Relative-Horizon Certificates}
\label{app:gs-horizon}

Let \(s_{j-1}<s_j<s_{j+1}\) be consecutive anchors in an increasing affine
progress coordinate and let
\[
  h_j=s_j-s_{j-1}>0,
  \qquad
  h_{j+1}=s_{j+1}-s_j>0.
\]
For any update interval wholly contained in the next anchor cell, its
midpoint \(\mu\) satisfies
\[
  s_j\le\mu\le s_{j+1}.
\]
Subtracting \(s_j\) from all three terms gives
\[
  0\le\mu-s_j\le s_{j+1}-s_j=h_{j+1}.
\]
Division by the positive previous span \(h_j\) yields
\[
  0\le\frac{\mu-s_j}{h_j}\le\frac{h_{j+1}}{h_j},
\]
which is \eqref{eq:gs-horizon-bound}.

Likewise, for any field-evaluation coordinate \(\xi\) in the same cell,
define
\[
  \lambda_j^{(s)}=\frac{\xi-s_j}{h_j}.
\]
The inequalities \(s_j\le\xi\le s_{j+1}\) give
\[
  0\le\lambda_j^{(s)}\le\frac{h_{j+1}}{h_j}.
\]
If the adjacent ratio is at most \(\Gamma\), then
\(\lambda_j^{(s)}\le\Gamma\). Since the derivative of
\(x(1+x)\) is \(1+2x>0\) for \(x\ge0\), it also follows that
\[
  \lambda_j^{(s)}(1+\lambda_j^{(s)})
  \le\Gamma(1+\Gamma).
\]

On an integer progress grid, if the cell length is the integer
\(h_{j+1}\) and both endpoints are exact anchors, the skipped stage
midpoints are
\[
  s_j+\frac32,
  s_j+\frac52,
  \ldots,
  s_{j+1}-\frac12
\]
whenever \(h_{j+1}\ge2\). The exact largest skipped midpoint horizon is
therefore
\begin{equation}
  \max_{\text{skips in cell}}|\rho_j^{(s)}|
  =\frac{h_{j+1}-1/2}{h_j}
  <\frac{h_{j+1}}{h_j}.
  \label{eq:gs-discrete-horizon}
\end{equation}
If there is no skipped stage in the cell, the maximum over skips is zero.

Finally, \(0\le\sin^2\theta_q\le1\) implies in this progress coordinate
\[
  R_{q,n}^{(s)}=(\rho_{q,n}^{(s)})^2\sin^2\theta_q
  \le\left(\frac{h_{j+1}}{h_j}\right)^2.
\]
Since \(\omega^{(s)}=1/(1+R^{(s)})\) is decreasing in \(R^{(s)}\), if
\(h_{j+1}/h_j\le\Gamma\), then
\[
  \omega_{q,n}^{(s)}\ge\frac{1}{1+\Gamma^2}.
\]
This proves \eqref{eq:gs-retention-certificate} and establishes
the direct connection between adjacent gap expansion and the worst
model-free retention certificate.

\subsection{The Geometric Mesh at Fixed First Span and Coverage}
\label{app:gs-geometric}

\begin{gstheorem}[Minimax normalized horizon at fixed coverage]
\label{thm:gs-geometric}
Fix \(N\ge2\), \(d>0\), and \(H\ge Nd\). Among positive spans
\(\ell_0,\ldots,\ell_{N-1}\) satisfying
\(\ell_0=d\) and \(\sum_{j=0}^{N-1}\ell_j=H\), the unique minimizer of
\begin{equation}
  \Gamma(\ell)=
  \max_{0\le j<N-1}\frac{\ell_{j+1}}{\ell_j}
  \label{eq:gs-ratio-objective}
\end{equation}
is
\begin{equation}
  {
  \ell_j=dr^j,\qquad
  d\sum_{j=0}^{N-1}r^j=H,\qquad r\ge1.}
  \label{eq:gs-geometric-closure}
\end{equation}
The closure equation has exactly one root in \([1,\infty)\).
\end{gstheorem}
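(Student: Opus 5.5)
The plan is to run the same comparison argument as in Theorem~\ref{thm:gs-geometric-main}, now in the cleaner indexing $\ell_0=d$, $\ell_{j+1}\le \Gamma\ell_j$. Three things need proving: the closure equation has a unique root, the geometric mesh attains the minimal ratio, and it is the only minimizer.

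\textbf{Step 1 (closure equation).} Put $S(r)=d\sum_{j=0}^{N-1}r^j$ on $[1,\infty)$. It is continuous and strictly increasing, since $N\ge2$ makes the $j=1$ term strictly increasing. Its value at $r=1$ is $S(1)=Nd\le H$, and $S(r)\to\infty$ as $r\to\infty$. By the intermediate value theorem there is exactly one $r_\star\ge1$ with $S(r_\star)=H$. For this $r_\star$, the mesh $\ell_j=dr_\star^j$ is feasible and every adjacent ratio equals $r_\star$, so $\Gamma=r_\star$.

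\textbf{Step 2 (lower bound).} Take any feasible $\ell$ and write $\Gamma=\Gamma(\ell)$. Induction from $\ell_0=d$ and $\ell_{j+1}\le\Gamma\ell_j$ (all spans positive) gives $\ell_j\le d\Gamma^j$. Summing,
\[
H=\sum_j\ell_j\le d\sum_j\Gamma^j .
\]
If $\Gamma<1$, the right side is $<Nd\le H$, a contradiction, so $\Gamma\ge1$. Then $S(\Gamma)\ge H=S(r_\star)$, and monotonicity of $S$ gives $\Gamma\ge r_\star$. Hence the geometric mesh is optimal. This avoids the separate case analysis $\Gamma<1$ that the main-text proof glossed over.

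\textbf{Step 3 (uniqueness).} Let $\ell$ be any minimizer, so $\Gamma(\ell)=r_\star$. Step 2 gives $\ell_j\le dr_\star^j$ for every $j$. Summing, $H\le S(r_\star)=H$, so equality holds in the sum. Since every termwise inequality goes the same way, each must be an equality: $\ell_j=dr_\star^j$ for all $j$.

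The main obstacle is making this last step rigorous rather than the loose ``coverage would be strictly less'' remark used for Theorem~\ref{thm:gs-geometric-main}. The point is that the inequalities $\ell_j\le dr_\star^j$ come from the ratio bound alone, and equality of the nonnegative sum $\sum_j(dr_\star^j-\ell_j)=0$ forces each term to vanish. I also need to check the edge case $H=Nd$: there $r_\star=1$ and the unique minimizer is the constant mesh, which the argument covers without modification.
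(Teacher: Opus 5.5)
Your proposal is correct and matches the paper's appendix proof step for step: the closure root comes from strict monotonicity of $S_N$ plus the intermediate value theorem, optimality from the inductive bound $\ell_j\le d\gamma^j$ summed against $H$ with $\gamma<1$ excluded via $H\ge Nd$, and uniqueness from termwise inequalities $\ell_j\le dr^j$ whose sums are equal, which forces equality in every term. One phrasing slip: you say your argument ``avoids'' the $\Gamma<1$ case, but you, like the appendix proof, handle that case explicitly, and doing so is exactly what fills the gap in the main-text argument for Theorem~\ref{thm:gs-geometric-main}.
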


\begin{gscorollary}[Canonical terminal-closed mesh after minimal bootstrap]
\label{cor:gs-whole-mesh}
Fix \(B\ge4\), a total progress length \(S\), and a bootstrap span \(d>0\),
with \(S\ge(B-1)d\). Among terminal-closed meshes with positive gaps
\(h_1,\ldots,h_{B-1}\),
\begin{equation}
  h_1=h_2=d,\qquad
  \sum_{j=1}^{B-1}h_j=S,
  \label{eq:gs-whole-mesh-constraints}
\end{equation}
the unique minimizer of
\(\max_{2\le j\le B-2}h_{j+1}/h_j\) is
\begin{equation}
  {
  h_1=d,\qquad h_j=dr^{j-2}\ (2\le j\le B-1),\qquad
  d\left(1+\sum_{k=0}^{B-3}r^k\right)=S,\quad r\ge1.}
  \label{eq:gs-whole-mesh}
\end{equation}
The corresponding continuous anchors are
\begin{equation}
  \widetilde s_0=0,\qquad \widetilde s_1=d,\qquad
  \widetilde s_q=d\left(1+\sum_{k=0}^{q-2}r^k\right),
  \quad 2\le q\le B-1.
  \label{eq:gs-whole-anchors}
\end{equation}
\end{gscorollary}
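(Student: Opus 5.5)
The plan is to reduce Corollary~\ref{cor:gs-whole-mesh} to Theorem~\ref{thm:gs-geometric} by discarding the frozen first bootstrap gap. Set \(N=B-2\ge2\) (using \(B\ge4\)) and \(\ell_k=h_{k+2}\) for \(k=0,\ldots,N-1\). Then the constraints in \eqref{eq:gs-whole-mesh-constraints} become \(\ell_0=h_2=d\) and \(\sum_{k=0}^{N-1}\ell_k=S-h_1=S-d=:H\). The hypothesis \(S\ge(B-1)d\) gives \(H\ge(B-2)d=Nd\), which is exactly the admissibility condition of Theorem~\ref{thm:gs-geometric}. The objective also transfers: the ratios \(h_{j+1}/h_j\) for \(2\le j\le B-2\) are precisely \(\ell_{k+1}/\ell_k\) for \(0\le k<N-1\). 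So \(\max_{2\le j\le B-2}h_{j+1}/h_j=\Gamma(\ell)\) in \eqref{eq:gs-ratio-objective}. The gap \(h_1\) never enters a ratio and is pinned to \(d\).

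Next I check that the map \((h_1,\ldots,h_{B-1})\mapsto(\ell_0,\ldots,\ell_{N-1})\) is a bijection between the feasible set of the corollary and that of the theorem. It is, because \(h_1=d\) is forced, the positivity constraints match, and the sum constraints are equivalent. Minimizers therefore correspond one-to-one and uniqueness carries over. Theorem~\ref{thm:gs-geometric} then gives \(\ell_k=dr^k\), so \(h_j=dr^{j-2}\) for \(2\le j\le B-1\), with \(r\ge1\) the unique root of \(d\sum_{k=0}^{B-3}r^k=S-d\). This rearranges to the closure equation in \eqref{eq:gs-whole-mesh}.

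The anchors in \eqref{eq:gs-whole-anchors} come from cumulative sums: \(\widetilde s_0=0\) and \(\widetilde s_q=\sum_{j=1}^{q}h_j=d+d\sum_{k=0}^{q-2}r^k\) for \(q\ge2\). The endpoint \(\widetilde s_{B-1}=S\) holds by the closure equation, which is what makes the mesh terminal-closed.

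The only real content lies in Theorem~\ref{thm:gs-geometric}, which I take as given, so the main care point is bookkeeping. I need the index shift to make the objective's ratio range match exactly, with no ratio involving \(h_1\), and the inequality \(H\ge Nd\) to hold so that \(r\ge1\) exists. At \(B=4\) we get \(N=2\) and a single ratio, and the statement still applies.
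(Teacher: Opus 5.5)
Your proposal is correct and follows the paper's proof essentially step for step. The paper likewise removes the fixed gap \(h_1\), reindexes \(\ell_k=h_{k+2}\) with \(N=B-2\) and \(H=S-d\), derives \(H\ge Nd\) from \(S\ge(B-1)d\), matches the ratio objectives, invokes Theorem~\ref{thm:gs-geometric}, and recovers the anchors and the terminal closure \(\widetilde s_{B-1}=S\) by cumulative summation; your explicit bijection between the two feasible sets just states more carefully the transfer of uniqueness that the paper asserts in its final sentence.
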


\begin{proof}[Proof of Theorem~\ref{thm:gs-geometric}]
For \(N\ge2\), the polynomial
\[
 S_N(r)=\sum_{j=0}^{N-1}r^j
\]
is continuous on \([1,\infty)\). Its derivative is
\[
 S_N'(r)=\sum_{j=1}^{N-1}jr^{j-1}.
\]
Every summand is nonnegative on \([1,\infty)\), and the \(j=1\) summand
equals one. Hence \(S_N'(r)>0\), so \(S_N\) is strictly increasing. At the
left endpoint,
\[
 S_N(1)=\sum_{j=0}^{N-1}1=N.
\]
The last term \(r^{N-1}\) diverges as \(r\to\infty\), so
\(S_N(r)\to\infty\). Because \(H/d\ge N=S_N(1)\), the intermediate value
theorem gives at least one \(r\in[1,\infty)\) satisfying
\(S_N(r)=H/d\). Strict monotonicity gives at most one such \(r\). Thus the
closure root exists and is unique.

For any feasible spans, let
\(\gamma=\max_{0\le j<N-1}\ell_{j+1}/\ell_j\).
By the definition of a maximum,
\[
 \frac{\ell_{j+1}}{\ell_j}\le\gamma
 \quad\Longrightarrow\quad
 \ell_{j+1}\le\gamma\ell_j
 \qquad (0\le j<N-1).
\]
We now prove \(\ell_j\le d\gamma^j\) by induction. For \(j=0\), feasibility
gives \(\ell_0=d=d\gamma^0\). If the bound holds at index \(j\), then
\[
 \ell_{j+1}\le\gamma\ell_j
 \le\gamma(d\gamma^j)=d\gamma^{j+1}.
\]
Thus it holds for every \(j=0,\ldots,N-1\). Summing the componentwise
bounds yields
\begin{equation}
 H=\sum_{j=0}^{N-1}\ell_j
 \le\sum_{j=0}^{N-1}d\gamma^j
 =dS_N(\gamma).
 \label{eq:gs-geometric-majorization}
\end{equation}
If \(\gamma<1\), the sum is strictly smaller than \(Nd\), contradicting
\(H\ge Nd\). Explicitly, \(\gamma^0=1\) and
\(\gamma^j<1\) for every \(j\ge1\), so
\[
 dS_N(\gamma)
 =d\left(1+\sum_{j=1}^{N-1}\gamma^j\right)
 <d[1+(N-1)]=Nd.
\]
\Eqref{eq:gs-geometric-majorization} would then imply
\(H<Nd\), a contradiction. Hence \(\gamma\ge1\). On this interval,
strict monotonicity of \(S_N\), together with
\[
 dS_N(\gamma)\ge H=dS_N(r),
\]
implies \(S_N(\gamma)\ge S_N(r)\) and therefore \(\gamma\ge r\).

Now set \(\ell_j^\star=dr^j\). Its first span is
\(\ell_0^\star=d\), and the closure equation gives
\[
 \sum_{j=0}^{N-1}\ell_j^\star
 =dS_N(r)=H,
\]
so it is feasible. Every adjacent ratio is
\[
 \frac{\ell_{j+1}^\star}{\ell_j^\star}
 =\frac{dr^{j+1}}{dr^j}=r,
\]
so its objective value is \(r\). The lower bound \(\gamma\ge r\) proves
global optimality.

For uniqueness, suppose another feasible sequence has objective value
\(r\). Repeating the induction above with \(\gamma=r\) gives
\[
 \ell_j\le dr^j=\ell_j^\star
 \qquad\text{for every }j.
\]
If at least one of these inequalities were strict, summing all coordinates
would give
\[
 \sum_{j=0}^{N-1}\ell_j
 <\sum_{j=0}^{N-1}dr^j
 =H,
\]
contradicting feasibility. Therefore equality holds at every coordinate,
and the geometric sequence is the unique minimizer.
\end{proof}

\begin{proof}[Proof of Corollary~\ref{cor:gs-whole-mesh}]
The first gap is fixed at \(h_1=d\), so the remaining gaps
\[
  \ell_0=h_2,\quad \ell_1=h_3,\quad\ldots,\quad
  \ell_{B-3}=h_{B-1}
\]
contain \(N=B-2\) positive spans. Their first span is
\(\ell_0=h_2=d\), and their total coverage is
\begin{align*}
  \sum_{k=0}^{B-3}\ell_k
  &=\sum_{j=2}^{B-1}h_j\\
  &=S-h_1\\
  &=S-d.
\end{align*}
The assumption \(S\ge(B-1)d\) implies
\[
  S-d\ge(B-2)d=Nd,
\]
so Theorem~\ref{thm:gs-geometric} applies with \(H=S-d\). It gives the
unique minimizing branch
\[
  \ell_k=dr^k,
  \qquad 0\le k\le B-3,
\]
where \(r\ge1\) is the unique solution of
\[
  d\sum_{k=0}^{B-3}r^k=S-d.
\]
Restoring \(h_1=d\) and using \(h_j=\ell_{j-2}\) for \(j\ge2\) yields
\[
  h_1=d,
  \qquad
  h_j=dr^{j-2}\quad(2\le j\le B-1).
\]
Moving the fixed first gap to the left side of the closure equation gives
\[
  d\left(1+\sum_{k=0}^{B-3}r^k\right)=S,
\]
which is \eqref{eq:gs-whole-mesh}. Starting from
\(\widetilde s_0=0\), the
cumulative locations are
\begin{align*}
  \widetilde s_1&=h_1=d,\\
  \widetilde s_q
  &=h_1+\sum_{j=2}^{q}h_j\\
  &=d+\sum_{j=2}^{q}dr^{j-2}\\
  &=d\left(1+\sum_{k=0}^{q-2}r^k\right),
  \qquad 2\le q\le B-1,
\end{align*}
which proves \eqref{eq:gs-whole-anchors}; at \(q=B-1\), the
closure equation gives \(\widetilde s_{B-1}=S\). The objective in the corollary
is exactly the adjacent-ratio objective of the \(\ell\)-branch, because
\[
  \max_{2\le j\le B-2}\frac{h_{j+1}}{h_j}
  =\max_{0\le k<B-3}\frac{\ell_{k+1}}{\ell_k}.
\]
Existence, attainment, and uniqueness therefore follow directly from the
corresponding parts of Theorem~\ref{thm:gs-geometric}.
\end{proof}

\subsection{Integer Projection of the Canonical Mesh}
\label{app:gs-finite}

For the canonical integer mesh, compute the continuous anchors from
\eqref{eq:gs-whole-anchors} with \(S=T-1\) and \(d=1\), fix
\begin{equation}
  s_0=0,\qquad s_1=1,\qquad s_2=2,\qquad s_{B-1}=T-1,
  \label{eq:gs-canonical-boundaries}
\end{equation}
and sequentially for \(3\le q\le B-2\) set
\begin{equation}
  {
  s_q=\min\!\left\{T-B+q,
       \max\!\left(s_{q-1}+1,Q_{\rm even}(\widetilde s_q)\right)
       \right\}.}
  \label{eq:gs-canonical-rounding}
\end{equation}
For any resulting integer mesh, define the realized whole-mesh certificate
\begin{equation}
  \widehat\Gamma(\mathcal A)
  =\max_{2\le j<B-1}\frac{s_{j+1}-s_j}{s_j-s_{j-1}}.
  \label{eq:gs-realized-certificate}
\end{equation}

We prove that \eqref{eq:gs-canonical-boundaries}--
\eqref{eq:gs-canonical-rounding} always produce a feasible integer mesh.
For every index \(q\), define the largest value that leaves enough terminal
slots by
\[
  U_q=T-1-(B-1-q)=T-B+q.
\]
Because \(T\ge B\), the last bootstrap offset satisfies
\[
  s_2=2\le T-B+2=U_2.
\]
Now fix \(q\in\{3,\ldots,B-2\}\) and suppose inductively that
\(s_{q-1}\le U_{q-1}\). Since \(U_q=U_{q-1}+1\),
\begin{align*}
  s_{q-1}+1
  &\le U_{q-1}+1\\
  &=U_q\\
  &=T-B+q.
\end{align*}
Thus the interval between the lower and upper guards in
\eqref{eq:gs-canonical-rounding} is nonempty. The rounded value
\(Q_{\rm even}(\widetilde s_q)\) is an integer, and both guards preserve
integrality, so \(s_q\) is an integer satisfying
\begin{equation}
  s_{q-1}+1\le s_q\le U_q.
  \label{eq:gs-canonical-rounding-induction}
\end{equation}
This proves by induction that every interior offset is strictly larger than
its predecessor and leaves the required number of unused integer stages.
At the final interior index, if it exists,
\[
  s_{B-2}\le U_{B-2}=T-2<T-1=s_{B-1}.
\]
When the interior range is empty, \(B=4\), and the same terminal inequality
follows directly from \(s_2=2\le T-2\). Consequently,
\[
  0=s_0<s_1=1<s_2=2<\cdots<s_{B-1}=T-1
\]
contains exactly \(B\) distinct integer offsets. The mapping
\(a_{q+1}=T-s_q\) reverses their order and therefore gives exactly \(B\)
distinct solver-stage anchors containing \(T,T-1,T-2\), and \(1\).

The full-grid boundary case is also contained in the same construction. If
\(B=T\), then \(U_q=q\). Starting from \(s_2=2\),
\eqref{eq:gs-canonical-rounding-induction} forces
\(s_q=q\) at every interior index, and the terminal condition gives
\(s_{T-1}=T-1\). Hence every solver stage is an anchor.

For completeness, the realized certificate applies to any strictly
increasing integer mesh satisfying
\(s_0=0,s_1=1,s_2=2,s_{B-1}=T-1\), independently of how it was obtained.
Define
\(\widehat h_j=s_j-s_{j-1}>0\). By
\eqref{eq:gs-realized-certificate},
\[
  \frac{\widehat h_{j+1}}{\widehat h_j}
  \le\widehat\Gamma(\mathcal A)
  \qquad(2\le j<B-1).
\]
Applying the cell calculation in Appendix~\ref{app:gs-horizon} at every
eligible anchor gives
\[
  \lambda_j^{(s)}\le\widehat\Gamma(\mathcal A),
  \qquad
  \lambda_j^{(s)}(1+\lambda_j^{(s)})
  \le\widehat\Gamma(\mathcal A)
       [1+\widehat\Gamma(\mathcal A)],
\]
and, because \(0\le\sin^2\theta_j\le1\),
\[
  R_j^{(s)}\le\widehat\Gamma(\mathcal A)^2,
  \qquad
  \omega_j^{(s)}\ge
  \frac{1}{1+\widehat\Gamma(\mathcal A)^2}.
\]
This proof uses the realized gaps themselves and therefore remains valid
when rounding changes the continuous geometric ratios.

\subsection{Comparison under a Bounded Time Warp}
\label{app:gs-time-warp}

Suppose a monotone time coordinate can be written as \(t=\psi(s)\),
with \(0<m\le\psi'(s)\le M_\psi<\infty\) over the relevant intervals.
For a decreasing solver time, use \(t=-\sigma\); this leaves the ratio
in \eqref{eq:gs-rho} unchanged.
Let \(r_0,r_1\) be the progress endpoints of a skipped raw interval
after \(s_j\), and let \(s_{j-1},s_j\) be the latest anchors.
The two midpoint-based ratios are
\begin{align*}
 \rho^{(s)}
 &=\frac{r_0+r_1-2s_j}{2(s_j-s_{j-1})},\\
 \rho^{(t)}
 &=\frac{\psi(r_0)+\psi(r_1)-2\psi(s_j)}
         {2[\psi(s_j)-\psi(s_{j-1})]}.
\end{align*}
All increments in the decomposition of the numerator as
\[
 [\psi(r_0)-\psi(s_j)]
 +[\psi(r_1)-\psi(s_j)]
\]
are nonnegative. For any \(a\le b\), the fundamental theorem of calculus
and the derivative bounds give
\[
 m(b-a)
 \le\psi(b)-\psi(a)
 =\int_a^b\psi'(u)\,du
 \le M_\psi(b-a).
\]
Apply this inequality separately to the two numerator increments. With
\[
 N_s=(r_0-s_j)+(r_1-s_j),
 \qquad
 N_t=[\psi(r_0)-\psi(s_j)]+[\psi(r_1)-\psi(s_j)],
\]
we obtain
\[
 mN_s\le N_t\le M_\psi N_s.
\]
For the denominator increments
\[
 D_s=s_j-s_{j-1},
 \qquad
 D_t=\psi(s_j)-\psi(s_{j-1}),
\]
the same argument gives
\[
 mD_s\le D_t\le M_\psi D_s.
\]
Since all denominators are positive,
\begin{align*}
 \rho^{(t)}
 &=\frac{N_t}{2D_t}\\
 &\ge\frac{mN_s}{2M_\psi D_s}
  =\frac{m}{M_\psi}\rho^{(s)},\\
 \rho^{(t)}
 &\le\frac{M_\psi N_s}{2mD_s}
  =\frac{M_\psi}{m}\rho^{(s)}.
\end{align*}
Therefore
\begin{equation}
 {
 \frac{m}{M_\psi}\rho^{(s)}
 \le\rho^{(t)}
 \le\frac{M_\psi}{m}\rho^{(s)}.}
 \label{eq:gs-time-warp}
\end{equation}
In particular, a logical-progress gap-ratio certificate \(\Gamma\)
transfers to at most \((M_\psi/m)\Gamma\) in physical solver time under
this additional bounded-distortion assumption. Combining this with
\eqref{eq:gs-retention-certificate} gives the physical-time floor
\[
  \omega\ge\frac{1}{1+(M_\psi\Gamma/m)^2}.
\]
Without a bound on the time warp, no such uniform transfer follows
from logical geometric spacing alone.

\subsection{Regularization and Degenerate Innovations}
\label{app:gs-numerics}

Let
\[
 {\Delta v}={\Delta v_q},\qquad {\Delta v^{-}}={\Delta v_q^{-}},\qquad
 S=\gsnorm {\Delta v}^2,\qquad P=\gsnorm {\Delta v^{-}}^2,\qquad D=\gsip {\Delta v}{\Delta v^{-}}.
\]
When \(P>0\), the ideal least-squares residual is
\[
 r_0={\Delta v}-\frac{D}{P}{\Delta v^{-}},
 \qquad
 N_0=\gsnorm{r_0}^2=S-\frac{D^2}{P}.
\]
The regularized residual uses
\begin{equation}
 r_{\epsilon_p}={\Delta v}-\frac{D}{P+\epsilon_p}{\Delta v^{-}}.
 \label{eq:gs-regularized-angle}
\end{equation}
Expanding its squared norm one term at a time gives
\begin{align*}
 N_{\epsilon_p}
 &=\gsnorm{r_{\epsilon_p}}^2\\
 &=\gsnorm {\Delta v}^2
   -\frac{2D}{P+\epsilon_p}\gsip {\Delta v}{\Delta v^{-}}
   +\frac{D^2}{(P+\epsilon_p)^2}\gsnorm {\Delta v^{-}}^2\\
 &=S-\frac{2D^2}{P+\epsilon_p}
   +\frac{D^2P}{(P+\epsilon_p)^2}\\
 &=S-\frac{D^2(P+2\epsilon_p)}{(P+\epsilon_p)^2}.
\end{align*}
Subtracting the ideal residual energy yields the exact identity
\begin{equation}
 {
 N_{\epsilon_p}-N_0
 =\frac{D^2\epsilon_p^2}
        {P(P+\epsilon_p)^2}\ge0.}
 \label{eq:gs-epsilon-bias}
\end{equation}
Thus regularizing the projection denominator can only increase the residual
energy relative to exact projection.

The finite weight is itself an exact minimizer. Define
\[
 \mathcal J_\epsilon(\omega)
 =S(1-\omega)^2+(\rho^2N_{\epsilon_p}+\epsilon_s)\omega^2.
\]
Expanding and differentiating gives
\[
 \mathcal J_\epsilon'(\omega)
 =2[S+\rho^2N_{\epsilon_p}+\epsilon_s]\omega-2S,
\]
and
\[
 \mathcal J_\epsilon''(\omega)
 =2[S+\rho^2N_{\epsilon_p}+\epsilon_s].
\]
This is strictly positive unless all three nonnegative terms vanish. Hence
the unique minimizer whenever the denominator is positive is
\begin{equation}
 \omega_\epsilon
 =\frac{S}{S+\rho^2N_{\epsilon_p}+\epsilon_s}.
 \label{eq:gs-regularized-weight}
\end{equation}
This shows precisely that \(\epsilon_s\) adds a small numerical ridge to the
transported-turning penalty.

The degenerate cases require an explicit convention only when the displayed
ratio becomes \(0/0\). If \({\Delta v}=0\), then \(S=N_{\epsilon_p}=0\). When
\(\epsilon_s>0\), the formula gives \(\omega_\epsilon=0\); when
\(\epsilon_s=0\), every \(\omega\) has the same objective and the implementation
sets \(\omega_\epsilon=0\). In both cases the correction \(\omega_\epsilon {\Delta v}\) is
zero. If
\({\Delta v^{-}}=0\), then \(P=D=0\), so \(r_{\epsilon_p}={\Delta v}\): without a previous line,
the full latest innovation is treated as unsupported. If \({\Delta v^{-}}\ne0\) and
\(\epsilon_p,\epsilon_s\downarrow0\), then
\(N_{\epsilon_p}\to N_0=S\sin^2\theta\) and
\(\omega_\epsilon\to1/(1+\rho^2\sin^2\theta)\).

A backend that regularizes the direct angle instead may use
\[
 \sin^2\widetilde\theta_{\epsilon_a}
 =1-\frac{D^2}{SP+\epsilon_a}.
\]
Cauchy-Schwarz gives \(0\le D^2\le SP\), so this quantity lies in
\([0,1]\). For \(S,P>0\), its exact relation to the ideal statistic is
\begin{equation}
 \sin^2\widetilde\theta_{\epsilon_a}
 =\sin^2\theta+(1-\sin^2\theta)\frac{\epsilon_a}{SP+\epsilon_a}.
 \label{eq:gs-direct-angle-bias}
\end{equation}
Both finite-precision forms converge to the same ideal rule as their
regularizers vanish, but they are not algebraically identical at fixed
positive regularization.

Fixed absolute regularizers break the exact scale invariance of
\eqref{eq:gs-chi} and are outside the continuum theorem. Dot
products and norm accumulations should use sufficient precision, and a
direct-angle implementation should clamp its final statistic to \([0,1]\)
to suppress roundoff outside the mathematical range. All invariance and
continuum claims in the paper concern the zero-regularization equations;
reported inference uses the finite rule of its stated backend.

\subsection{Full-Grid Error Propagation}
\label{app:gs-propagation}

For an exact anchor the supplied field is exact at the current accelerated
state. Define the additional field defect by
\[
  \varepsilon_n
  =\widetilde v_n-V_\theta(\widetilde x_n,\sigma_n,c),
\]
so that \(\varepsilon_n=0\) at every anchor. For a solver with history, let
\(X_n\) denote the augmented numerical state and let \(\pi X_n=x_n\) extract
the latent.

\begin{gsproposition}[Full-grid perturbation bound]
\label{prop:gs-propagation}
Let \(\mathcal U_n(X,v)\) be the solver update and
\[
  F_n(X)=\mathcal U_n(X,V_\theta(\pi X,\sigma_n,c)).
\]
Assume
\[
  \|F_n(X)-F_n(Y)\|\le\alpha_n\|X-Y\|,
  \qquad
  \|\mathcal U_n(X,v)-\mathcal U_n(X,w)\|
  \le\beta_n\gsnorm{v-w},
\]
with \(\alpha_n,\beta_n\ge0\). If the exact-field and GeoShrink runs have
identical initial solver states, then
\begin{equation}
  {
  \|\widetilde X_0-X_0^\star\|
  \le\sum_{n=1}^{T}\beta_n\gsnorm{\varepsilon_n}
       \prod_{k=1}^{n-1}\alpha_k.}
  \label{eq:gs-general-propagation}
\end{equation}
If \(h_n=|\sigma_n-\sigma_{n-1}|\),
\(\alpha_n\le1+Lh_n\), and
\(\beta_n\le C_{\mathcal D}h_n\), then
\begin{equation}
  \|\widetilde X_0-X_0^\star\|
  \le C_{\mathcal D}e^{L\sum_n h_n}
       \sum_{n=1}^{T}h_n\gsnorm{\varepsilon_n}.
  \label{eq:gs-special-propagation}
\end{equation}
If latent extraction is nonexpansive and the exact-field full-grid method has
latent discretization error \(E_{\rm grid}\), then
\begin{equation}
  {
  E_{\rm GeoShrink}
  \le E_{\rm grid}
     +C_{\mathcal D}e^{L\sum_n h_n}
       \sum_n h_n\gsnorm{\varepsilon_n}.}
  \label{eq:gs-error-decomposition}
\end{equation}
\end{gsproposition}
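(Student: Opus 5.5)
The argument is a standard discrete Gronwall/telescoping bound. We compare the two runs step by step. The solver indexing runs from $n=T$ down to $n=1$. The exact-field run satisfies $X^\star_{n-1}=F_n(X^\star_n)$. The GeoShrink run satisfies $\widetilde X_{n-1}=\mathcal U_n(\widetilde X_n,\widetilde v_n)$ with $\widetilde v_n=V_\theta(\pi\widetilde X_n,\sigma_n,c)+\varepsilon_n$. Write $e_n=\|\widetilde X_n-X^\star_n\|$, so $e_T=0$ by the identical-initial-state assumption.

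\textbf{Step 1: one-step recursion.} Add and subtract $F_n(\widetilde X_n)=\mathcal U_n(\widetilde X_n,V_\theta(\pi\widetilde X_n,\sigma_n,c))$:
\[
\widetilde X_{n-1}-X^\star_{n-1}
=\bigl[\mathcal U_n(\widetilde X_n,\widetilde v_n)-\mathcal U_n(\widetilde X_n,V_\theta(\pi\widetilde X_n,\sigma_n,c))\bigr]
+\bigl[F_n(\widetilde X_n)-F_n(X^\star_n)\bigr].
\]
The first bracket is at most $\beta_n\gsnorm{\varepsilon_n}$ by the field-Lipschitz hypothesis. The second is at most $\alpha_n e_n$ by the state-Lipschitz hypothesis. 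Hence
\[
e_{n-1}\le\alpha_n e_n+\beta_n\gsnorm{\varepsilon_n}.
\]
At anchors $\varepsilon_n=0$, so the recursion reduces to $e_{n-1}\le\alpha_n e_n$.

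\textbf{Step 2: unroll.} Induct from $e_T=0$ down to $e_0$. Each defect injected at stage $n$ is then amplified only by the factors $\alpha$ of the later updates, which carry indices $k<n$. This gives
\[
e_0\le\sum_{n=1}^{T}\beta_n\gsnorm{\varepsilon_n}\prod_{k=1}^{n-1}\alpha_k,
\]
which is \eqref{eq:gs-general-propagation}. The only care needed is the index bookkeeping: because stages are processed in decreasing $n$, the product over $k<n$ is exactly the product of the updates applied after stage $n$.

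\textbf{Step 3: specialization and decomposition.} Substitute $\alpha_k\le1+Lh_k\le e^{Lh_k}$. Then $\prod_{k<n}\alpha_k\le e^{L\sum_k h_k}$, and with $\beta_n\le C_{\mathcal D}h_n$ this yields \eqref{eq:gs-special-propagation}. For \eqref{eq:gs-error-decomposition}, let $x_{\rm true}$ be the target latent. The triangle inequality gives
\[
\|\pi\widetilde X_0-x_{\rm true}\|\le\|\pi X^\star_0-x_{\rm true}\|+\|\pi\widetilde X_0-\pi X^\star_0\|.
\]
The first term is $E_{\rm grid}$. Nonexpansiveness of $\pi$ bounds the second term by $e_0$, which was bounded in Step 3's first part.

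The main obstacle is only the careful indexing in Step 2. Everything else is direct Lipschitz bookkeeping, and no result earlier in the paper is needed beyond the definition of $\varepsilon_n$.
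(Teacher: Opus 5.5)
Your proposal is correct and follows the paper's proof essentially step for step. You insert the exact-field update $F_n(\widetilde X_n)$ to get $e_{n-1}\le\alpha_n e_n+\beta_n\gsnorm{\varepsilon_n}$, unroll it from $e_T=0$, specialize with $1+x\le e^x$ (which, as in the paper, tacitly uses $L\ge0$), and finish with the triangle inequality and the nonexpansiveness of $\pi$.
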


\begin{proof}[Proof of Proposition~\ref{prop:gs-propagation}]
Let \(e_n=\|\widetilde X_n-X_n^\star\|\).
At stage \(n\), insert the exact-field update evaluated at the
GeoShrink solver state:
\begin{align*}
 e_{n-1}
 &\le
 \|\mathcal U_n(\widetilde X_n,\widetilde v_n)
     -\mathcal U_n(\widetilde X_n,
            V_\theta(\pi\widetilde X_n,\sigma_n,c))\|\\
 &\quad+
 \|F_n(\widetilde X_n)-F_n(X_n^\star)\|\\
 &\le\beta_n\gsnorm{\varepsilon_n}+\alpha_n e_n.
\end{align*}
The initial states coincide, so \(e_T=0\). Repeated substitution in
descending stage order can be written explicitly. Put
\(b_n=\beta_n\gsnorm{\varepsilon_n}\). The first two substitutions are
\begin{align*}
 e_0
 &\le b_1+\alpha_1e_1\\
 &\le b_1+\alpha_1b_2+\alpha_1\alpha_2e_2.
\end{align*}
We claim that after substituting through stage \(m\),
\begin{equation}
 e_0
 \le\sum_{n=1}^{m}b_n\prod_{k=1}^{n-1}\alpha_k
   +e_m\prod_{k=1}^{m}\alpha_k.
 \label{eq:gs-propagation-induction}
\end{equation}
For \(m=1\), this is the one-step recurrence, with the empty product in the
first term equal to one. Suppose it holds for \(m<T\). The next recurrence
is \(e_m\le b_{m+1}+\alpha_{m+1}e_{m+1}\). Multiplying it by the
nonnegative product \(\prod_{k=1}^{m}\alpha_k\) and substituting into
\eqref{eq:gs-propagation-induction} gives
\begin{align*}
 e_0
 &\le\sum_{n=1}^{m}b_n\prod_{k=1}^{n-1}\alpha_k
   +b_{m+1}\prod_{k=1}^{m}\alpha_k
   +e_{m+1}\prod_{k=1}^{m+1}\alpha_k\\
 &=\sum_{n=1}^{m+1}b_n\prod_{k=1}^{n-1}\alpha_k
   +e_{m+1}\prod_{k=1}^{m+1}\alpha_k.
\end{align*}
This proves the claim by induction. Taking \(m=T\) and using \(e_T=0\)
gives
\[
 e_0\le
 \sum_{n=1}^{T}\beta_n\gsnorm{\varepsilon_n}
       \prod_{k=1}^{n-1}\alpha_k,
\]
which is \eqref{eq:gs-general-propagation}.
If \(\alpha_n\le1+Lh_n\) and \(\beta_n\le C_{\mathcal D}h_n\),
then
\begin{align*}
 \prod_{k=1}^{n-1}\alpha_k
 &\le\prod_{k=1}^{n-1}(1+Lh_k)\\
 &\le\prod_{k=1}^{n-1}e^{Lh_k}\\
 &=\exp\left(L\sum_{k=1}^{n-1}h_k\right)\\
 &\le e^{L\sum_{k=1}^{T}h_k},
\end{align*}
where \(1+x\le e^x\) was applied to each nonnegative \(x=Lh_k\).
Substitution into \eqref{eq:gs-general-propagation}, together with
\(\beta_n\le C_{\mathcal D}h_n\), gives
\begin{align*}
 e_0
 &\le\sum_{n=1}^{T}C_{\mathcal D}h_n
       \gsnorm{\varepsilon_n}e^{L\sum_kh_k}\\
 &=C_{\mathcal D}e^{L\sum_kh_k}
   \sum_{n=1}^{T}h_n\gsnorm{\varepsilon_n},
\end{align*}
which proves \eqref{eq:gs-special-propagation}.

For the final error decomposition, let \(x_0\) denote the endpoint of the
continuous reference dynamics and let
\(E_{\rm grid}=\gsnorm{\pi X_0^\star-x_0}\). The triangle inequality and
nonexpansiveness of \(\pi\) give
\begin{align*}
 \gsnorm{\pi\widetilde X_0-x_0}
 &\le\gsnorm{\pi\widetilde X_0-\pi X_0^\star}
      +\gsnorm{\pi X_0^\star-x_0}\\
 &\le\gsnorm{\widetilde X_0-X_0^\star}+E_{\rm grid}.
\end{align*}
Applying \eqref{eq:gs-special-propagation} to the first term proves
\eqref{eq:gs-error-decomposition}.
\end{proof}

The augmented-state formulation is required when the numerical update
depends on stored fields or past latent states. An update written only
as a function of the current latent can otherwise omit a source of
perturbation. The special constants \(1+Lh_n\) and
\(C_{\mathcal D}h_n\) are assumptions to verify for the chosen method
and state norm; they do not follow automatically from introducing an
augmented state.

When the special constants are deterministic and the defects have finite
second moments, weighted Cauchy-Schwarz also gives, with
\(S=\sum_n h_n\),
\begin{equation}
 {
 \gsE\|\widetilde X_0-X_0^\star\|^2
 \le C_{\mathcal D}^2 e^{2LS}\,
       S\sum_n h_n\,\gsE\gsnorm{\varepsilon_n}^2.}
 \label{eq:gs-mean-square-propagation}
\end{equation}
Indeed, weighted Cauchy-Schwarz gives, pathwise,
\begin{align*}
 \left(\sum_n h_n\gsnorm{\varepsilon_n}\right)^2
 &=\left(\sum_n \sqrt{h_n}\,
          \sqrt{h_n}\gsnorm{\varepsilon_n}\right)^2\\
 &\le\left(\sum_n h_n\right)
       \left(\sum_n h_n\gsnorm{\varepsilon_n}^2\right)\\
 &=S\sum_n h_n\gsnorm{\varepsilon_n}^2.
\end{align*}
Squaring \eqref{eq:gs-special-propagation}, applying this bound,
and then taking expectations gives
\begin{align*}
 \gsE\gsnorm{\widetilde X_0-X_0^\star}^2
 &\le C_{\mathcal D}^2e^{2LS}
   \gsE\left(\sum_n h_n\gsnorm{\varepsilon_n}\right)^2\\
 &\le C_{\mathcal D}^2e^{2LS}
   S\sum_n h_n\gsE\gsnorm{\varepsilon_n}^2,
\end{align*}
where the finite sum is exchanged with expectation by linearity. This is
\eqref{eq:gs-mean-square-propagation}.
This connects local field MSE to an upper bound on deviation from the
full-grid trajectory under the stated stability conditions.
It is not a global optimality result for the retention policy:
different policies change later states, histories, and target
innovations.

\end{document}